\renewcommand{\paragraph}[1]{\textbf{#1}\,\,\,}
\definecolor{darkgreen}{rgb}{0,0.5,0}
\definecolor{crimson}{RGB}{220,20,60}
\newcommand{\kibitz}[2]{\ifnum\Comments=1{\color{#1}{#2}}\fi}
\newcommand{\todo}[1]{\kibitz{red}{[TODO: #1]}}
\newcommand{\extended}[1]{} 
\newcommand{\squeeze}{\looseness=-1}
\newenvironment{enumeratetight}
  {\begin{enumerate}[leftmargin=1.6em, topsep=0em, parsep=0em]}
  {\end{enumerate}}
\newcommand\expect[2]{\mathbbm{E}_{#1}{[ {#2} ]}}
\DeclareMathOperator*{\argmax}{argmax}
\newcommand{\one}[1]{\mathds{1}{\{{#1}\}}}
\newcommand{\X}{{\cal{X}}}
\newcommand{\Y}{{\cal{Y}}}
\newcommand{\R}{\mathbb{R}}
\newcommand{\N}{{\cal{N}}}
\newcommand{\yhat}{{\hat{y}}}
\newcommand{\xhat}{{\hat{x}}}
\newcommand{\xh}{{x^h}}
\newcommand{\xt}{{x'}}
\newcommand{\xmap}{{z}}
\newcommand{\xorig}{{x}}
\newcommand{\xalt}{{q}}
\newcommand{\VC}{\mathrm{VC}}
\newcommand{\SVC}{\mathrm{SVC}}
\newcommand{\dist}{D}
\newcommand{\smplst}{T}
\newcommand{\xproj}{x_{\mathrm{proj}}}
\newcommand{\ball}{B}
\newcommand{\sphere}{S}
\newcommand{\curv}{\kappa}
\newcommand{\curveff}{\eff{\kappa}}
\newcommand{\curvmax}{\curv^*} 
\newcommand{\curveffmax}{\eff{\curvmax}} 
\newcommand{\Curv}{K}
\newcommand{\Curveff}{K_\Delta}
\newcommand{\normal}{\hat{n}}
\newcommand{\Htype}[1]{H^{\mathrm{#1}}}
\newcommand{\eff}[1]{#1_\Delta}
\newcommand{\heff}{\eff{h}} 
\newcommand{\Heff}{\eff{H}} 
\newcommand{\Qeff}{Q_{s,\Delta}}
\newcommand{\Hlin}{\Htype{lin}}
\newcommand{\Hlineff}{\eff{\Hlin}}
\newcommand{\Hunr}{H^*} 
\newcommand{\Hunreff}{\eff{\Hunr}}
\newtheorem{lemma}{Lemma}
\newtheorem{theorem}{Theorem}
\newtheorem{definition}{Definition}
\newtheorem{corollary}{Corollary}
\newtheorem{proposition}{Proposition}
\newtheorem{observation}{Observation}
\title{Strategic Classification with Non-Linear Classifiers}
\author{%
  Benyamin Trachtenberg, \,\, Nir Rosenfeld \\
  Technion – Israel Institute of Technology\\
  Haifa, Israel \\
  \texttt{\{benyamint, nirr\} @cs.technion.ac.il} 
}
\begin{document}

\maketitle

\begin{abstract}

In strategic classification, the standard supervised learning setting is extended to support the notion of strategic user behavior in the form of costly feature manipulations made in response to a classifier. While standard learning supports a broad range of model classes, the study of strategic classification has, so far, been dedicated mostly to linear classifiers. This work aims to expand the horizon by exploring how strategic behavior manifests under non-linear classifiers and what this implies for learning. We take a bottom-up approach showing how non-linearity affects decision boundary points, classifier expressivity, and model class complexity.
Our results show how, unlike the linear case, strategic behavior may either increase or decrease effective class complexity, and that the complexity decrease may be arbitrarily large.
Another key finding is that universal approximators (e.g., neural nets)
are no longer universal once the environment is strategic. 
We demonstrate empirically how this can create performance gaps even on an unrestricted model class.
\squeeze


\squeeze

\end{abstract}

\section{Introduction}
Strategic classification considers learning in a setting where users 
modify their features in response to a deployed classifier
to obtain positive predictions \citep{bruckner2009nash,bruckner2012static,hardt2016strategic}.
This setting aims to capture a natural tension that can arise between a system aiming to maximize accuracy and its users who seek favorable outcomes.
Such a tension is well-motivated across diverse applications, including
loan approval, university admission, job hiring, welfare and education programs, and medical services. 
The field has drawn much attention since its introduction,
leading to advances along multiple learning fronts such as 
generalization \citep{sundaram2021pac,zhang2021incentive,cohen2024learnability},
optimization \citep{levanon2021strategic},
loss functions \citep{levanon2022generalized,lechner2021learning},
and extensions \citep{ghalme2021strategic,chen2024strategic,estornell2023group,shao2023strategic}.
\squeeze

\extended{\todo{add more relevant cites - ours, others'}}



However, despite over a decade of effort,
research in strategic classification remains predominantly focused
on linear classification.
Due to the inherent challenges of strategic learning,
this focus has been reasonable, given that
linearity has the benefit of inducing a simple form of strategic behavior
that permits tractable analysis.
Yet, since many realistic applications---including those mentioned above---make regular use of non-linear models,
we believe that it is important to
establish a better understanding of how
non-linearity shapes strategic behavior, and how behavior in turn affects learning.
Our paper therefore aims to extend the study of strategic classification to
the non-linear regime.
\squeeze

Our first observation is that non-linear classifiers induce qualitatively different behavior than linear ones in the strategic setting.
Consider first the structure induced by standard linear classifiers.
Generally, users in strategic classification are assumed to best-respond to the classifier by modifying their features in a way that maximizes utility (from prediction) minus costs (from modification).
For a linear classifier $h$ 
coupled with the common choice of an $\ell_2$ cost,
modifications $x \mapsto x^h$ manifest as a projection of $x$
onto the linear decision boundary of $h$
as long as it incurs less than a unit cost.
This simple form has several implications.
First, note that all points `move' in the same direction,
making strategic behavior, in effect, one-dimensional.
Second, the region of points that move forms a band of unit distance on the negative side of $h$.
This means that the set of points classified as positive forms a halfspace,
which implies that the `effective classifier', given by $\heff(x) \coloneqq h(x^h)$, is also linear.
Third, and as a result, the induced effective model class remains linear.
\squeeze

Together, the above suggest that for linear classifiers coupled with $\ell_2$ costs,
the transition from the standard, non-strategic setting to a strategic one
requires anticipating which points move, but otherwise remains similar.
One known implication is that the strategic and non-strategic VC dimension of
linear classifiers are the same \citep{sundaram2021pac}.
Furthermore, the Bayes-optimal strategic classifier
can be derived by taking the optimal non-strategic classifier
and simply increasing its bias term by one \citep{rosenfeld2024oneshot}.
Note that this means that any linearly separable problem remains separable,
and that strategic classifiers can, in principle, attain the same level of accuracy as their non-strategic counterparts.
\extended{A third implication is that the hinge loss can be easily corrected by
including a simple offset term \citep{levanon2022generalized}.}
Thus, while perhaps challenging to solve in practice,
the intrinsic 
difficulty of strategic linear classification
remains unchanged.
\squeeze

The above, however,
is unlikely to
hold for non-linear classifiers:
once the decision boundary is non-linear, different points will move in different directions. 
This lends
to the possible distortion of the shape of a given classifier, 
which in turn can change the effective (i.e., strategic) model class.
Such changes can have concrete implications on expressivity, generalization, optimization,
and attainable accuracy.
%
This motivates the need to explore how strategic responses shape outcomes
in non-linear strategic classification,
which we believe is crucial as a first step towards
developing future methods. Our general approach is to examine---through analysis, examples, and experiments---%
how objects in the original non-strategic task map to corresponding objects in the induced strategic problem instance.
Initially, our conjecture was that strategic behavior acts as a `smoothing' operator on the original decision boundary,
but as we show, 
the actual effect of strategic behavior turns out to be more involved.
\squeeze

To build an intuition of the mechanisms at play, we take a bottom-up approach and study three mappings between
(i)~individual points, 
(ii)~classifiers, 
and (iii)~model classes. 
For points, we begin by showing that the mapping is not necessarily bijective.
As a result, some points on the original decision boundary become `wiped out',
while others are `expanded' to become intervals (arcs for $\ell_2$ costs)
on the induced boundary.
We demonstrate several mechanisms that can give rise to both phenomena.
For classifiers, we show how the effects on individual points aggregate
to modify the entire decision boundary.
A key finding is that certain decision boundaries are
\emph{impossible to express} under strategic behavior.
We give several examples of reasonable classifier types that cannot exist
under strategic behavior, and show that some effective classifiers $\heff$
can be traced back to (infinitely) many original classifiers $h$.
The main takeaway is that universal approximators, such as neural networks or RBF kernels, are
no longer universal in a strategic environment.
\squeeze

Our main contribution lies in our analysis of
model classes, where our results build on the observation that the mapping
from original to effective classifiers \emph{within} a class is not necessarily bijective. In particular, because many $h$ can be mapped to the same $\heff$, the VC dimension of the induced model class can shrink even from $\infty$ down to 0, causing model classes that were not learnable in the standard setting to become learnable in the strategic one. That said, the VC dimension of the induced class can generally either grow or shrink.
We highlight some conditions that guarantee either of these outcomes, and, in particular, provide
an upper bound on the strategic complexity increase for the expressive family of piecewise-linear model classes (e.g., ReLU networks).
Our analysis yields several practical takeaways for learning, model selection,
and future method development (see Appendix \ref{apdx:practical_takeaways}).
We conclude with experiments that shed light on how strategic behavior may alter the effective complexity and maximum accuracy of the learned model class in practice.



\section{Related work}
Our work focuses on the common setting of supervised binary classification, where non-linear models are highly prevalent in general.
This follows the original formulation of strategic classification
\citep{bruckner2012static,hardt2016strategic},
which has since been extended also to online settings \cite{dong2018strategic,chen2020learning,ahmadi2021strategic}
and regression \citep{shavit2020causal,bechavod2022information}. 
As noted, linear models have thus far been at the forefront of the field,
with many works relying (explicitly or implicitly) on their properties or induced structure \citep{chen2020learning,levanon2021strategic,eilat2023performative,rosenfeld2024oneshot,chen2024strategic}.
Even for works with results that are agnostic to model class selection,
linear classifiers remain the focus of special-case analyses,
examples, and empirical evaluation \citep{jagadeesan2021alternative,zrnic2021leads,levanon2022generalized},
and their generality does not shed light on the impact of non-linearity. Other works offer only tangential results, such as welfare analysis in the non-linear setting \citep{Xie2024non-linear_welfare}, but do not paint a picture of the general learning problem.
Our low-level analysis makes use of ideas from
the field of offset curves from computational geometry
\citep{Kim20182offset_approx,Farouki1990offsetcurve}.
Whereas this field is mostly
concerned with the approximation and analytical properties of individual offset curves,
we are interested in the effects on class complexity and what this implies for learning.
In terms of theory,
both \citep{zhang2021incentive,sundaram2021pac} apply PAC analysis
to strategic classification,
the latter providing VC bounds for the linear class.
Most relevant to us is \citep{cohen2024learnability},
who study the general learnability of strategic learning under different feedback models.
While not restricted to particular classes,
their analysis considers graph costs, where
users can move only to a finite set of possible points,
and their results depend crucially on the cardinality of this set.
Our work tackles the more pervasive setting of 
continuous cost functions, which imply continuous modifications.
The above works focus primarily on statistical hardness;
while this is also one of our aims,
we are interested in establishing a broader understanding
of the non-linear regime.
\squeeze


\section{Preliminaries}
\label{sec:prelims}
We work in the original supervised setting of strategic classification \citep{bruckner2012static,hardt2016strategic}.
Let $x \in \X = \R^d$ denote user features, $y \in \Y = \{0,1\}$ their corresponding label,
and assume pairs $(x,y)$ are drawn iid from some unknown joint distribution $\dist$. 
Given a training set $\smplst=\{(x_i,y_i)\}_{i=1}^m \sim D^m$,
learning aims for the common goal of finding a classifier $h \in H$
that maximizes the expected predictive accuracy,
where $H$ is some chosen model class.
The unique aspect of strategic classification is that at test time,
users modify their features to maximize gains
in response to the learned $h$ via
the \emph{best-response mapping}
\squeeze
\begin{equation}
\label{eq:best_response}
\xh \coloneqq \Delta_h(x) = \argmax_{x'} \alpha h(x') - c(x,x')
\end{equation}
where $\alpha$ determines their utility from obtaining a positive prediction
$\yhat=h(x)$,
and $c$ is a function governing modification costs. We assume w.l.o.g. that the user gains no utility from a negative prediction.
Like most works, we focus mainly on the $\ell_2$ cost, i.e., $c(x,x')=\|x-x'\|_2$,
but discuss other costs in a subset of our results (see Appendix \ref{apdx:other_costs} for discussion on how our results generalize to other cost functions).
Because no rational user will pay more than $\alpha$ to obtain a positive prediction,
$x^h$ will always be contained in the ball of radius $\alpha$ around $x$,
denoted $\ball_\alpha(x)=\{x' \,:\, c(x,x') \le \alpha\}$.
It will also be useful to define the corresponding sphere,
$\sphere_\alpha(x)=\{x' \,:\, c(x,x') = \alpha\}$.

\begin{figure}[t!]
\centering
\includegraphics[width=0.9\linewidth]{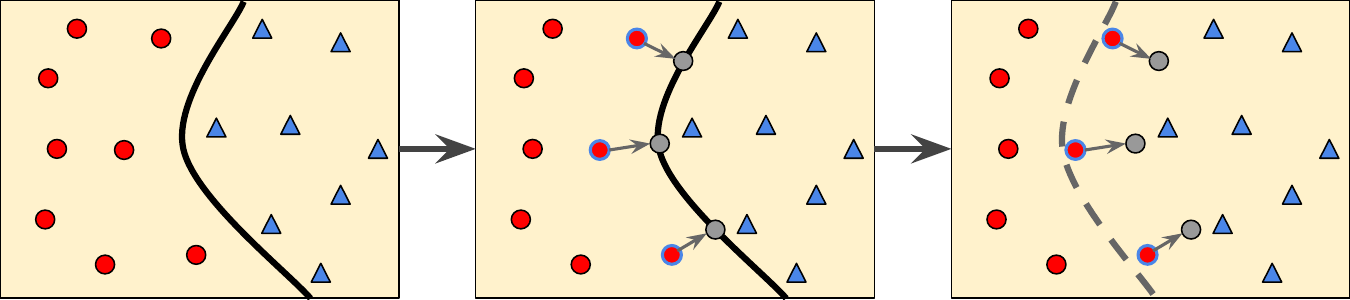}
\caption{%
\textbf{(Left)} Original non-linear classifier $h$.
\textbf{(Middle)} Strategic response to original classifier. 
\textbf{(Right)} Resulting \emph{effective classifier} $\heff$ (different from $h$) after accounting for strategic responses. \squeeze}


\label{fig:non-lin_movement}
\end{figure}

Given the above, the learning objective
is to maximize the expected \emph{strategic accuracy}, defined as:
\begin{equation}
\label{eq:learning_objective}
\argmax_{h \in H} \expect{\dist}{\one{y = h(\Delta_h(x))}}
\end{equation}


\paragraph{Levels of analysis.}
We assume w.l.o.g. that classifiers are of the form $h(x)=\one{f(x) \ge 0}$
for some scalar score function $f$, and that the decision boundary is then the surface of points for which $f(x)=0$. Our analysis takes a `bottom-up' approach
in which we first consider changes to individual points on the boundary,
then ask how such local changes aggregate to affect a given classifier $h$,
and finally, how these changes impact the entire model class $H$.
It will therefore be useful to consider three mappings
between original and corresponding strategic objects.
For points, the mapping is
$\xmap \mapsto \nabla_h(\xmap) = \{ \xorig \,:\, \Delta_h(\xorig)= \xmap \,\wedge\, c(\xorig,\xmap)=\alpha\}$,
i.e., the set of points $\xorig$ that, in response to $h$, move to $\xmap$ at maximal cost.
For classifiers, we define the \emph{effective classifier} as
$\heff(x) \coloneqq h(\Delta_h(x))$ (see Fig.~\ref{fig:non-lin_movement}), and consider the mapping $h \mapsto \heff$. 
Since both $h$ and $\heff$ are defined w.r.t. raw (unmodified) inputs $x$, this allows us to compare the original decision boundary of $h$
to the `strategic decision boundary' of its induced $\heff$.
For model classes, we consider the mapping $H \mapsto \Heff \coloneqq \{\heff \,:\, h \in H \}$,
and refer to $\Heff$ as the \emph{effective model class}.
\squeeze

\paragraph{The linear case.}
As noted, most works in strategic classification consider $H$
to be the class of linear classifiers, $h(x)=\one{w^\top x \ge b}$.
A known result is that $\Delta_h(x)$ can be expressed in closed form
as a conditional projection operator: $x$ moves to
$\xproj = x- \frac{w^\top x - b}{\|w\|_2^2}w$
if $h(x)=0$ and $c(x,\xproj) \le \alpha$,
and otherwise remains at $x$ \citep{eilat2023strategic}.
Note that all points that move do so in the same direction,
i.e., orthogonally to $w$. Furthermore, each $\xorig$ on the decision boundary of $h$
is mapped to a single, unique point $\xmap$ on the decision boundary of $\heff$ at an orthogonal distance (or cost) of exactly $\alpha$. Considering this for all boundary points,
we get that $\heff$ simply shifts the decision boundary by  $\alpha$, which gives
$\heff(x) = \one{w^\top x \ge b -\alpha}$.
Thus, $\Heff$ remains the set of all linear classifiers, and so $\Heff=H$.

\squeeze

\paragraph{The non-linear case: first steps.}
Informally stated, the main difference in the non-linear case is that points no longer move in the same direction (see Fig.~\ref{fig:non-lin_movement}).
Note that while points still move to the `closest' point on the decision boundary,
the projection operator is no longer nicely defined.
As a result, the shape of the decision boundary of $h$ may not be preserved when 
transitioning to $\heff$.
Moreover, as we will show, some points on the decision boundary of $h$ will `disappear'
and have no effect on the induced $\heff$,
while other points may be expanded to affect a continuum of points on $\heff$.
A final observation is that directionality is important in the strategic setting: because points always move towards the positive region,
the shape of $\heff$ for some $h$ may be very different than that of 
the same classifier but with "inverted" predictions, $h'(x)=1-h(x)$.
This lack of symmetry means that we cannot take for granted that
the induced $\Heff$ will include both $\heff$ and $1-\heff$, even if $H$ does.
\squeeze





\section{Underlying Mechanisms}
\label{sec:mechanisms}
Due to the added complexity of the non-linear setting, we first present some basic results and observations related to the underlying mechanisms of the non-linear setting in order to build the necessary intuition for our higher-level results. We begin by examining the point mapping $\xmap \mapsto \nabla_h(\xmap)$ and then turn to investigate how strategic behavior affects the classifier mapping $h \mapsto \heff$.  


\subsection{Point mapping} \label{sec:points}


To gain an understanding of what determines the shape of $\heff$ of a general $h$,
we begin by considering a single point $\xmap$ on the decision boundary of $h$
and examining how (and if) it maps to corresponding point(s) on the decision boundary of $\heff$ through the point mapping $\xmap \mapsto \nabla_h(\xmap)$.
We identify and discuss four general cases,
illustrated in Fig. \ref{fig:point_mappings}:
\squeeze

\begin{figure}[t!]
\centering
\includegraphics[width=0.19\linewidth, height=65pt]
{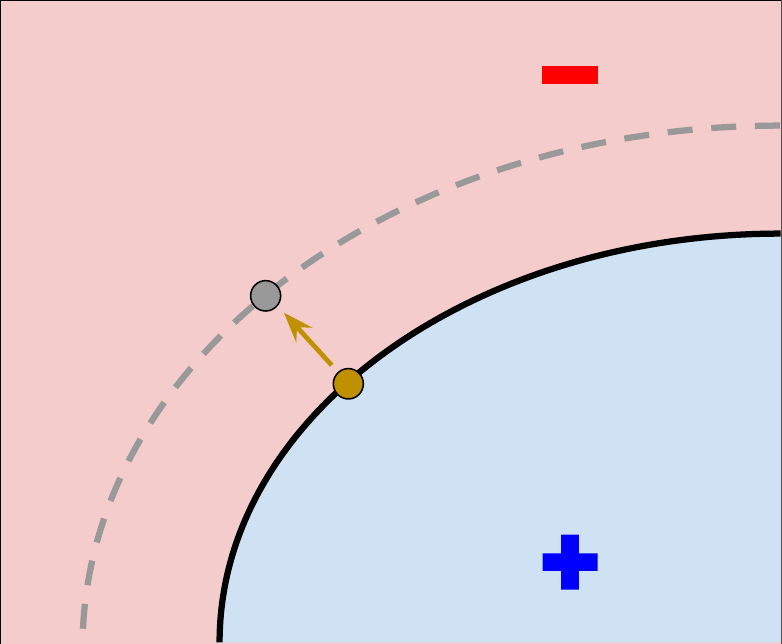}
\includegraphics[width=0.19\linewidth, height=65pt]
{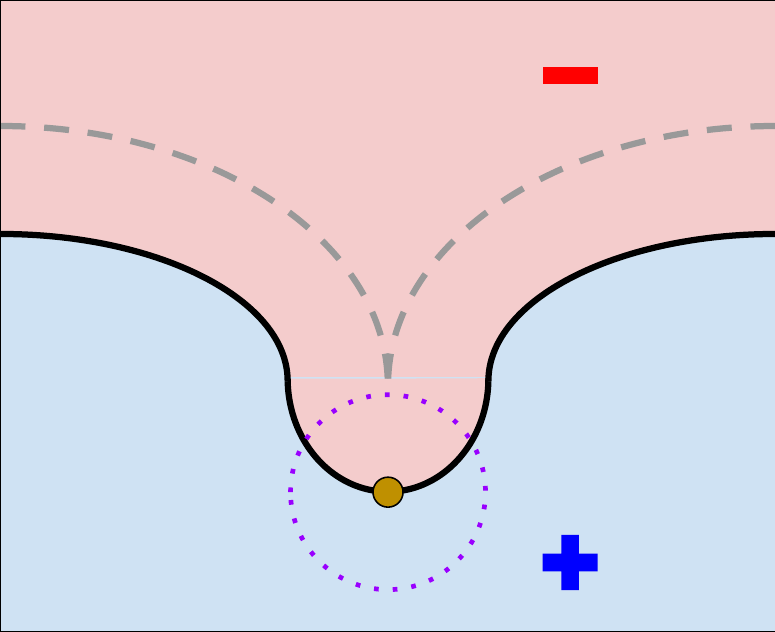}
\includegraphics[width=0.19\linewidth, height=65pt]
{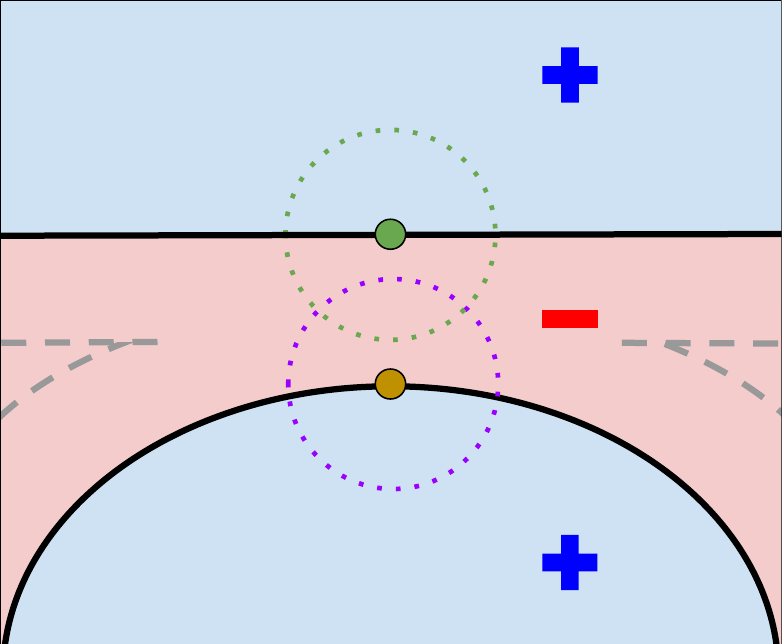}
\includegraphics[width=0.19\linewidth, height=65pt]
{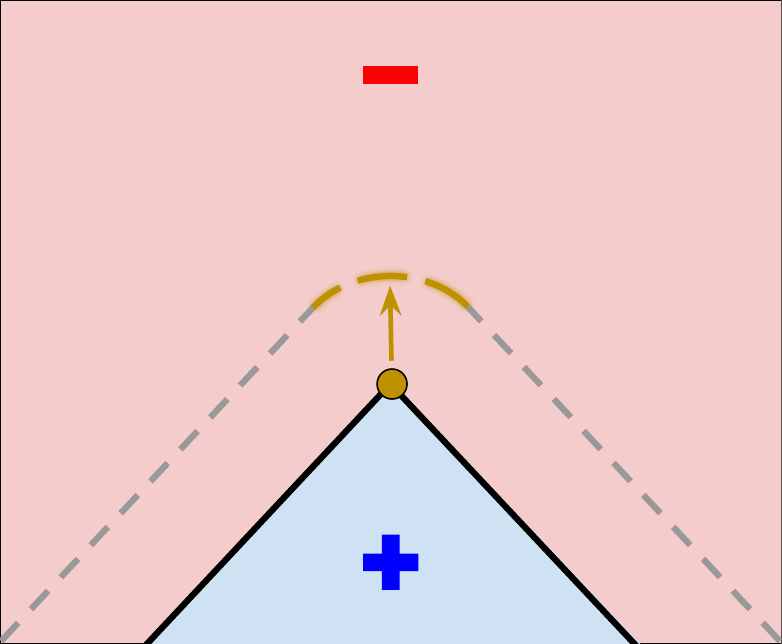}
\includegraphics[width=0.19\linewidth, height=65pt]
{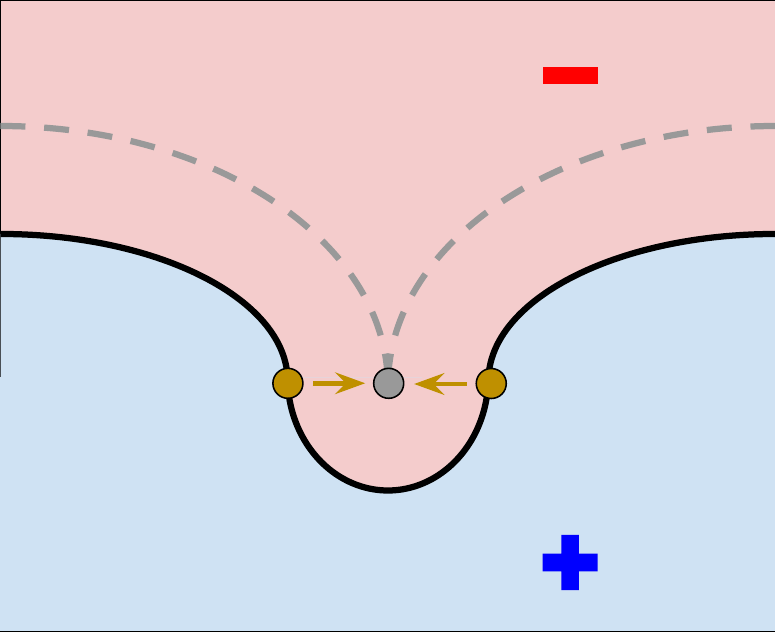}

\caption{\textbf{Types of point mappings.}
From left to right: 
\emph{one-to-one} (case \#1), \emph{direct wipeout} (case \#2), \emph{indirect wipeout} (case \#2), \emph{expansion} (case \#3), and \emph{collision} (case \#4). Black lines show original $h$, dashed gray lines indicate $\heff$, gold arrows show $\xmap \mapsto \nabla_h(\xmap)$, and dotted circles show $\sphere_\alpha(x)$} \squeeze
\label{fig:point_mappings}
\end{figure}

\paragraph{Case \#1: One-to-one.}
For linear $h$,
the point mapping is always one-to-one. 
For non-linear $h$,
the mapping remains one-to-one as long as $h$ is `well-behaved' around $\xmap$.
We consider this the `typical' case and note that each $\xmap$ is mapped to its $\alpha$-offset point $\xorig = \xmap-\alpha \normal_{\xmap}$,
where $\normal_{\xmap}$ is the unit normal at $\xmap$ w.r.t. $h$ (see Appendix \ref{apdx:one-to-one}).
A necessary condition for \emph{one-to-one} mapping is that $h$ is smooth at $\xmap$,
but smoothness is not sufficient, as the following cases depict.


\paragraph{Case \#2: Wipeout.}
Even if $h$ is smooth at $\xmap$, in some cases the point mapping will be null:
$\xmap \mapsto \varnothing$.
We refer to this phenomenon as \emph{wipeout},
and identify two types.
\emph{Direct wipeout} occurs when the signed curvature $\curv$ at $\xmap$
is negative and of sufficiently large magnitude;
this is defined formally in Sec.~\ref{sec:curvature}.
Intuitively, this results from strategic behavior obfuscating any rapid changes in the decision boundary of $h$.
\emph{Indirect wipeout} 
occurs when the set of points $\nabla_h(\xmap)$
is contained in $\ball_\alpha(x')$
for some other $x'$ on the decision boundary.
For both types, the result is that $\ball_\alpha(\xmap)$
is entirely contained in the positive region of $\heff$ and such $\xmap$ have no effect on $\heff$ (see Appendix \ref{apdx:no_effect_on_heff}).
\squeeze

\paragraph{Case \#3: Expansion.}
When the decision boundary of $h$ at $\xmap$ is not smooth
(i.e., a kink or corner),
$\xmap$ can be mapped to a continuum of points on the effective boundary of $\heff$.
This, however, occurs only when the decision boundary is 
locally convex towards the positive region.
The set of mapped points $\nabla_h(\xmap)$ are those on  $\sphere_\alpha(\xmap)$ which do not intersect with any other $\ball_\alpha(x')$
for any other $x'$ on the boundary of $h$.
Thus, the shape of the induced `artifact' is partial to a ball.
Partial expansion may occur if some of the expansion artifact is discarded through indirect wipeout (Appendix \ref{apdx:partial wipeout}). 
\squeeze


\paragraph{Case \#4: Collision.}
A final case is when a set of points $\{\xmap_1...\xmap_n\}$ on the boundary of $h$
are all mapped to the same single point $\xorig$ on $\heff$,
i.e., $\nabla_h(\xmap) = \nabla_h(\xmap') = \xorig$ for all $\xmap,\xmap'$ in the set.
Typically, this happens 
at the transitions between well-behaved decision boundary segments
and wiped-out regions.
Collision often results in non-smooth kinks on $\heff$,
even if all source points were smooth on $h$.
\squeeze


\subsection{Curvature} \label{sec:curvature}



Building up to the level of classifiers, it is first useful to examine 
what happens to boundary points $\xmap$ in terms of how $h$ behaves around them.
We make use of the notion of \emph{signed curvature},
denoted $\curv$,
which quantifies how much the decision boundary deviates from being linear at a certain point $x$.
For $d=2$, curvature is formally defined as $\curv=1/r$, where $r$ is the radius of the circle that best approximates the curve at $x$.
For $d>2$, we use $\curvmax = \sup(\Curv)$,
where $\Curv$ includes all directional curvatures at $x$
(see Appendix~\ref{apdx:high-dim_curv}). 
The sign of $\curv$ measures whether the decision boundary bends (i.e., is convex)
towards the positive region ($\curv > 0$), the negative region
($\curv < 0$), or is locally linear ($\curv = 0$).
\squeeze




Let $\xmap$ be a typical point on the decision boundary of $h$ which satisfies the \emph{one-to-one} mapping detailed in Sec. \ref{sec:points}.
We borrow a classic result from the field of offset curves \citep{Farouki1990offsetcurve} to calculate the \emph{effective signed curvature} of $\heff$ at $\xorig$,
denoted $\curveff$, as a function of the curvature $\curv$ of $h$ at $\xmap$.
\squeeze
\begin{proposition} \label{prop:curvature_mapping}
Let $\xmap$ be a point on the decision boundary of $h$ with signed curvature $\curv \ge -1/\alpha$.
The effective curvature of the corresponding $\xorig$ on the boundary of $\heff$ is
given by:
\begin{equation}
\label{eq:curvature_mapping}
\curveff = \curv / (1 + \alpha \curv)
\end{equation}
\end{proposition}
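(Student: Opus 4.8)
The plan is to identify the boundary of $\heff$ near $\xhat$ with the \emph{offset curve} (parallel curve) of the boundary of $h$ at normal distance $\alpha$, and then use the Frenet apparatus to track how curvature transforms under this offset. Since $x$ is a typical (one-to-one) point, Sec.~\ref{sec:points} gives that its image is $\xhat = x - \alpha\normal_x$, where $\normal_x$ is the unit normal oriented toward the positive region. Collecting these images as $x$ ranges over a neighborhood on the boundary of $h$ traces out exactly the offset curve $\tilde\gamma(s) = \gamma(s) - \alpha\, N(s)$, where $\gamma$ is an arc-length parametrization of the boundary of $h$ near $x$ and $N$ its unit normal. It therefore suffices to compute the signed curvature of $\tilde\gamma$ at the parameter value corresponding to $\xhat$.

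First I would reduce to the planar case $d=2$. For $d>2$, I would fix the principal direction achieving $\curvmax = \sup(\Curv)$ and work in the plane it spans together with $\normal_x$; because the scalar map $t \mapsto t/(1+\alpha t)$ has derivative $1/(1+\alpha t)^2 > 0$ on $t > -1/\alpha$, it is strictly increasing, so the supremum of the offset principal curvatures is the image of $\curvmax$ under this map, and the planar computation transfers verbatim. In the plane I set up the Frenet frame $T = \gamma'$, $N$, with the standard equations $T' = \curv N$ and $N' = -\curv T$ (derivatives in arc length $s$), taking $N = \normal_x$ so that $\curv>0$ corresponds to bending toward the positive region, matching the sign convention of Sec.~\ref{sec:curvature}.

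Differentiating the offset and substituting the Frenet relation for $N'$ gives
\[
\tilde\gamma'(s) = T(s) - \alpha N'(s) = \big(1 + \alpha\,\curv(s)\big)\, T(s).
\]
The hypothesis $\curv \ge -1/\alpha$ guarantees $1 + \alpha\curv \ge 0$, so $\tilde\gamma'$ points along $T$ (orientation preserved) and the offset arc-length element is $d\tilde s = (1 + \alpha\curv)\, ds$; in particular the unit tangent and unit normal are unchanged, $\tilde T = T$ and $\tilde N = N$. The signed effective curvature is then read off from the rate of turning of $\tilde T$ per unit offset arc length:
\[
\curveff = \frac{d\tilde T}{d\tilde s}\cdot \tilde N = \frac{T'(s)}{1 + \alpha\,\curv(s)}\cdot N = \frac{\curv}{1 + \alpha\curv},
\]
which is the claimed identity.

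The main obstacle here is bookkeeping of signs and orientation rather than any hard analysis: one must verify that $\normal_x$ is oriented toward the positive region, so that the displacement is $-\alpha\normal_x$ and the sign of $\curv$ lands in the denominator as $1 + \alpha\curv$ rather than $1 - \alpha\curv$, and that $1 + \alpha\curv > 0$ on the relevant range so the offset frame does not flip. The boundary case $1 + \alpha\curv = 0$, i.e.\ $\curv = -1/\alpha$, is precisely where $\tilde\gamma'$ vanishes and the offset develops a cusp, breaking the one-to-one assumption; this threshold is exactly the onset of the \emph{direct wipeout} regime of Sec.~\ref{sec:points}, and I would flag it explicitly as the edge of validity of the formula.
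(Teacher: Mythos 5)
Your proof is correct, and it is essentially the rigorous version of the argument the paper gives. The paper likewise treats the boundary of $\heff$ as the offset (parallel) curve of the boundary of $h$ at normal distance $\alpha$ (it cites the offset-curve literature for an equivalent form), but its derivation is a single geometric assertion: strategic movement keeps the center of curvature fixed and increases the signed osculating radius by $\alpha$, whence $\curveff = 1/(r_{osc}+\alpha) = \curv/(1+\alpha\curv)$. That assertion is exactly what your Frenet computation establishes: $\tilde\gamma' = (1+\alpha\curv)T$, the frame is preserved whenever $1+\alpha\curv>0$, and the turning rate of $T$ per unit offset arc length is $\curv/(1+\alpha\curv)$. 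Your route therefore buys explicit sign and orientation bookkeeping, and makes the hypothesis $\curv\ge-1/\alpha$ do visible work as the non-degeneracy condition $1+\alpha\curv\ge 0$, while the paper's shorter version buys the geometric intuition it explicitly says it is after; both correctly flag $\curv=-1/\alpha$ as the cusp/wipeout threshold. One minor caution on your $d>2$ remark: the pointwise rule $\curv\mapsto\curv/(1+\alpha\curv)$ is exact for principal curvatures of the parallel hypersurface but not literally for every directional curvature in $\Curv$ (a point the paper glosses over as well), so "transfers verbatim" is slightly optimistic; however, your actual strategy of working only in the principal plane achieving $\curvmax$ and invoking monotonicity of $t\mapsto t/(1+\alpha t)$ is the safe way to obtain the stated conclusion about $\curveffmax$.
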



When $d>2$, each  $ \curv \in \Curv$ is mapped as 
in Eq. \ref{eq:curvature_mapping} 
to form $\Curveff = \{\curv / (1 + \alpha \curv) : \curv \in \Curv \}$, and $\curveffmax$ follows. 
See Appendix~\ref{apdx:curvature_eq} for proof 
and an illustration of how this impacts the shape of $\heff$.
Prop.~\ref{prop:curvature_mapping} entails several interesting properties.
First,
strategic behavior preserves the direction of curvature,
i.e., $\curv \cdot \curveff \ge 0$ always.
In particular, $\curv=0$ entails $\curveff=0$,
which explains why linear functions remain linear.
Second, the effective curvature $\curveff$ is monotonically increasing in $\curv$,
concave, and sub-linear.
This implies that for $\curv>0$ the effective positive curvature \emph{decreases},
while for $\curv<0$ the effective negative curvature \emph{increases}.
Thus, strategic behavior acts as a \emph{directional} smoothing operator,
i.e., the boundaries' rate of change is expected to decrease but only in the positive direction.
\squeeze

Finally, 
in the positive direction,
$\curveff$ is bounded from above, 
approaching $1/\alpha$ as $\curv \to \infty$.
This gives:
\begin{observation}
\label{obs:no_large_pos_curve}
No effective classifier $\heff$ can have 
positive curvature $\curveff$ greater than $1/\alpha$.
\end{observation}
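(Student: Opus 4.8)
The plan is to read the bound directly off the curvature mapping of Prop.~\ref{prop:curvature_mapping}. Define $g(\curv) = \curv/(1 + \alpha\curv)$ on the admissible domain $\curv > -1/\alpha$ (the endpoint $\curv = -1/\alpha$ is the onset of direct wipeout and is excluded, since there the denominator vanishes). On this domain $1 + \alpha\curv > 0$, so $g$ preserves sign; in particular $\curveff = g(\curv) > 0$ exactly when $\curv > 0$, meaning the positive-curvature regime of $\heff$ arises precisely from the positive-curvature regime of $h$.

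First I would establish that $g$ is strictly increasing, by computing
\begin{equation}
g'(\curv) = \frac{(1 + \alpha\curv) - \alpha\curv}{(1 + \alpha\curv)^2} = \frac{1}{(1 + \alpha\curv)^2} > 0 .
\end{equation}
Next I would compute $\lim_{\curv \to \infty} g(\curv) = 1/\alpha$. Combining monotonicity with this limit shows that on the positive branch $g(\curv) < 1/\alpha$ for every finite $\curv > 0$, with $1/\alpha$ as an unattained supremum. This already gives the claim for every typical (one-to-one) boundary point. For $d > 2$ the conclusion follows termwise: each directional curvature is mapped by the same $g$, so $\curveffmax = \sup \Curveff = g(\curvmax) \le 1/\alpha$ by monotonicity.

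To upgrade this to a statement about \emph{all} effective classifiers, I would then dispose of the non-typical point types of Sec.~\ref{sec:points}. Wiped-out points contribute nothing to $\heff$ and are irrelevant. At an expansion, a kink of $h$ (formally the limit $\curv \to \infty$) is mapped to an arc lying on $\sphere_\alpha$, i.e.\ a piece of a sphere of radius $\alpha$, whose curvature equals exactly $1/\alpha$ --- matching the supremum but never exceeding it. Collisions produce inward-facing kinks, which are concave toward the positive region and hence contribute only non-positive curvature. Assembling these cases yields $\curveff \le 1/\alpha$ everywhere.

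The argument is essentially a one-line consequence of Prop.~\ref{prop:curvature_mapping}, so I expect no genuine analytic obstacle; the only care needed is bookkeeping at the boundary of the domain and across the point-mapping cases. The subtle point worth stating explicitly is that the bound $1/\alpha$ is a supremum that is \emph{attained} only through expansion arcs (radius-$\alpha$ pieces) while being merely approached in the smooth one-to-one regime, which is exactly why the observation is phrased with a strict ``greater than'' rather than a non-strict inequality that would be violated at those arcs.
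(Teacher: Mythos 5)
Your first half---reading the bound off the monotone map $\curv \mapsto \curv/(1+\alpha\curv)$ and its limit $1/\alpha$ as $\curv \to \infty$---is exactly how the paper obtains the observation at typical (one-to-one) points, and the paper explicitly flags that this only covers typical points. Where you genuinely diverge is in upgrading to \emph{all} points of $\heff$: you enumerate the point-mapping cases of Sec.~\ref{sec:points} (wipeout, expansion, collision) and argue each contributes curvature at most $1/\alpha$, whereas the paper argues in the reverse direction via Sec.~\ref{sec:impossible_heff}. Concretely (Case~3 of Appendix~\ref{apdx:impossible_DBs}), it shows that any candidate $g$ with a smooth boundary point of curvature exceeding $1/\alpha$ violates the necessary condition of Prop.~\ref{prop: no_heff_2}: by Lemma~\ref{lem:insc_smaller_osc} the largest inscribed ball at such a point has radius $r_{insc} \le 1/\curv < \alpha$, so $\ball_\alpha(x+\alpha\hat{n}_x)$ must cross the boundary, making $x+\alpha\hat{n}_x$ reachable from $\X_0(g)$. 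The paper's route is preferable here because it needs neither a proof that the four point-mapping types are exhaustive nor any claim about what collisions produce; your version leans on both, and in particular ``collisions produce inward-facing kinks, hence non-positive curvature'' is plausible but not established anywhere (the taxonomy in Sec.~\ref{sec:points} is descriptive, not proven complete). These are fillable gaps, but the backward impossibility argument closes them for free. Your closing remark that the value $1/\alpha$ is attained exactly on expansion arcs (radius-$\alpha$ pieces of $\sphere_\alpha$), which is why the statement uses a strict ``greater than,'' is a nice point consistent with the paper.
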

When $d>2$, Obs. \ref{obs:no_large_pos_curve} 
holds for each curvature $\curv \in \Curv$ and can be summarized as $\curveffmax \leq 1/\alpha$. Though Prop. \ref{prop:curvature_mapping} only implies Obs. \ref{obs:no_large_pos_curve} at points on $\heff$ that are typical, we will show in Sec. \ref{sec:impossible_heff} that it is also true at any point on $\heff$. The above holds irrespective of the shape and curvature of the original $h$.
In particular, non-smooth points on $h$ (i.e., with $\curv \to \infty$),
such as an edge or vertex at the intersection of halfspaces,
become smooth on $\heff$. Conversely,  in the negative direction, 
$\curveff$ approaches $-\infty$ as $\curv \to 1/\alpha$.
Thus, negative curvature can grow arbitrarily large,
and at $\curv=-1/\alpha$ maps to a non-smooth "kink" in $\heff$.  $\curveff$ is no longer defined when $\curv$ crosses the asymptote at $-1/\alpha$ because such points are wiped out and have no corresponding $\xorig$ on the boundary of $\heff$. This is the exact case of \emph{indirect wipeout} in Sec.~\ref{sec:points} (see Appendix \ref{apdx:no_effect_on_heff} for proof and Appendix \ref{apdx:practical_takeaways} for practical use in learning). \squeeze

\begin{observation}
Any point $\xmap$ on the decision boundary of $h$ with curvature $\curv<-1/\alpha$ has no influence on $\heff$. In higher dimensions, this is true for any point $x$ with $\inf(\Curv) <-1/\alpha$.
\end{observation}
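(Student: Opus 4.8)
The plan is to show that a boundary point $x$ with $\curv < -1/\alpha$ admits no valid preimage under the point mapping, i.e.\ $\nabla_h(x) = \varnothing$, which by definition means $x$ contributes no point to the boundary of $\heff$. Recall that any $x' \in \nabla_h(x)$ must satisfy $\Delta_h(x') = x$ at cost exactly $\alpha$. Under the $\ell_2$ cost the best response projects $x'$ onto its nearest point of the positive region, and at a smooth boundary point (where $\curv$ is defined) first-order optimality forces the segment from $x'$ to $x$ to be orthogonal to the boundary at $x$; hence the only candidate preimage at cost exactly $\alpha$ is the normal-ray point $\xhat = x - \alpha\normal_x$ (the same point identified in the one-to-one case). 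So it suffices to prove that this single candidate does not in fact best-respond to $x$.

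First I would reduce the high-dimensional statement to a planar one. If $\inf(\Curv) < -1/\alpha$, I pick a principal direction $e$ realizing a directional curvature $\curv_e \in \Curv$ with $\curv_e < -1/\alpha$ and restrict attention to the $2$-plane $P = \mathrm{span}(\normal_x, e)$ through $x$; the trace of the boundary on $P$ is a planar curve whose signed curvature at $x$ equals $\curv_e$. Since it is enough to exhibit a positive point strictly nearer to $\xhat$ than $x$ (any such point is positive for the full $h$, and an out-of-plane competitor can only strengthen the conclusion), the whole argument can be carried out inside $P$ with radius of curvature $r = 1/|\curv_e| < \alpha$.

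The geometric core is the \emph{overshoot of the center of curvature}. Let $C = x - r\normal_x$ be the center of curvature, with $r = 1/|\curv| < \alpha$, so that $C$ lies strictly between $x$ and $\xhat$ on the inward normal ray. Comparing the boundary to its osculating circle $\Gamma$ (radius $r$, center $C$), the candidate $\xhat$ sits beyond $C$, so on $\Gamma$ it is closer to the antipodal point $x_a$ (at distance $|2r - \alpha| < \alpha$) than to $x$ (at distance $\alpha$); equivalently, $\xhat$ lies past the evolute of the boundary. I would then argue one of two outcomes: either $\xhat$ already lies in the positive region (when the inward ray re-enters it before length $\alpha$), in which case $\Delta_h(\xhat) = \xhat \ne x$; or there is a positive point within distance strictly less than $\alpha$ of $\xhat$, so that moving there yields strategic utility $\alpha - c > 0$ with $c < \alpha$, strictly exceeding the utility $\alpha - \alpha = 0$ of moving to $x$, again forcing $\Delta_h(\xhat) \ne x$. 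Either way $\xhat \notin \nabla_h(x)$, and since $\xhat$ was the unique candidate, $\nabla_h(x) = \varnothing$. This is exactly the indirect-wipeout mechanism of Sec.~\ref{sec:points}: the candidate source is absorbed into $\ball_\alpha(x')$ of a competing boundary point.

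The main obstacle is upgrading the osculating-circle comparison, which is only second-order accurate at $x$, to a statement about the \emph{actual} boundary, since the competing nearest positive point can sit at distance on the order of $2r$ from $x$ rather than infinitesimally close. I would close this gap by appealing to the offset-curve / evolute characterization already invoked for Prop.~\ref{prop:curvature_mapping} \citep{Farouki1990offsetcurve}: once the offset distance $\alpha$ exceeds the radius of curvature $r$, the naive offset point lies past the evolute and is trimmed precisely because a nearby branch of the offset family is strictly closer, which supplies the competing positive point we need. This also dovetails with the boundary behavior noted around Eq.~\ref{eq:curvature_mapping}: as $\curv \to -1/\alpha^{+}$ the effective curvature diverges into a cusp, and for $\curv < -1/\alpha$ the formula's (now positive) value describes this trimmed ghost branch rather than any genuine point of $\heff$.
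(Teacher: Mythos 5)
Your proposal is correct and follows essentially the same route as the paper's proof in Appendix~\ref{apdx:no_effect_on_heff}: identify $\xhat = x - \alpha\normal_x$ as the only candidate preimage, reduce the case $d>2$ to planar normal sections, and show that because the radius of curvature is strictly less than $\alpha$, the tangent ball $\ball_\alpha(\xhat)$ must meet the positive region, so $\xhat$ has a strictly cheaper option than moving to $x$ and $\nabla_h(x)=\varnothing$. The only difference is bookkeeping: where you close the osculating-circle-to-actual-boundary gap by appealing to offset-curve/evolute trimming \citep{Farouki1990offsetcurve}, the paper does it with its own Lemma~\ref{lem:insc_smaller_osc} (the maximal escribed ball has radius at most $r_{osc}=1/(-\curv)<\alpha$), which is the same geometric fact.
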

\extended{This will play a key role in our results in Sec.~\ref{sec:classes}.}

\subsection{Containment} \label{sec:containment}
Because the decision boundary is a level set of some 
(non-linear)
score function $f$, contained regions are not only possible, but realistic. Consider a given classifier $h$ where the decision boundary forms some bounded connected negative region $C \subset \R^d$ where $h(x)=0 \,\, \forall x \in C$. Strategic behavior dictates that negative areas shrink, and so only a subset of $C$ will still be classified as negative in $\heff$.
If $C$ is sufficiently small, then it will disappear completely leaving $\heff(x)=1 \,\, \forall x \in C$. 
\looseness=-1

\squeeze
\begin{observation}
\label{obs:contained_region}
Any negative region $C$ fully contained in some $\ball_\alpha(\xt)$ will become positive in $\heff$.
\end{observation}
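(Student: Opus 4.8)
The plan is to reduce the claim to a statement about distances. For a point $x$ in the negative region $C$, the best-response definition of $\heff(x) = h(\Delta_h(x))$ says $\heff(x) = 1$ exactly when the user at $x$ can reach a positive point at cost at most $\alpha$, i.e. when $\ball_\alpha(x)$ contains some $x'$ with $h(x') = 1$. First I would make this precise using the convention $h(x) = \one{f(x) \ge 0}$: the set $C$ is a bounded connected component of the open set $\{f < 0\}$, and its topological boundary $\partial C$ lies in $\{f = 0\}$, where $h = 1$. Hence every boundary point of $C$ is classified positive, and since $C$ is a connected negative component, the closest positive point to any $x \in C$ sits on $\partial C$. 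This yields the clean equivalence $\heff(x) = 1 \iff \mathrm{dist}(x, \partial C) \le \alpha$, so the goal reduces to showing every $x \in C$ is within distance $\alpha$ of $\partial C$.

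Next I would exploit the hypothesis $C \subseteq \ball_\alpha(x^*)$ via a radial-ray argument. Fix $x \in C$ and consider the ray from $x$ in the outward radial direction $u = (x - x^*)/\|x - x^*\|$, parametrized by $q(t) = x + t u$ for $t \ge 0$ (the degenerate case $x = x^*$ is handled by choosing any unit $u$). A one-line computation gives $\|q(t) - x^*\| = \|x - x^*\| + t$, so the distance from the center grows monotonically and $q(t)$ leaves the closed ball $\ball_\alpha(x^*)$ as soon as $t > \alpha - \|x - x^*\|$. Because $C \subseteq \ball_\alpha(x^*)$, once $q(t)$ is outside this ball it is outside $C$. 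Starting inside the open set $C$ and using continuity of $f$, the first exit from $C$ therefore occurs at some $t_0 \le \alpha - \|x - x^*\| \le \alpha$, at a point $q(t_0) \in \partial C$, which is positive. Since $\|q(t_0) - x\| = t_0 \le \alpha$, this exhibits a positive point inside $\ball_\alpha(x)$, so $\heff(x) = 1$. As $x \in C$ was arbitrary, all of $C$ becomes positive in $\heff$.

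The content is not a deep obstacle but careful bookkeeping around conventions: that the decision boundary $\{f = 0\}$ is labeled positive (so reaching it is a genuine gain), that $C$ is open and bounded so the ray truly exits at a boundary point rather than running to infinity, and that the tie at cost exactly $\alpha$ is broken toward moving, consistent with the $\le \alpha$ rule used in the linear case. The one genuinely load-bearing geometric choice is taking $u$ radially outward from $x^*$: this is precisely what forces $\|q(t) - x^*\|$ to increase monotonically and guarantees escape from the containing ball (and hence from $C$) by distance $\alpha$. A generic direction need not leave $\ball_\alpha(x^*)$ within distance $\alpha$, so the argument would not go through without this choice.
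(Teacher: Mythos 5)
Your proof is correct, and it takes the same route the paper has in mind: the paper states this observation without a separate proof, treating it as immediate from the fact that a point is positive under $\heff$ exactly when some point of $\{f \ge 0\}$ is reachable at cost at most $\alpha$. Your radial-ray argument (escaping the containing ball $\ball_\alpha(x^*)$ forces a first crossing of $\partial C \subseteq \{f=0\}$ within distance $\alpha - \|x - x^*\| \le \alpha$) is a sound way to make that reachability claim rigorous, and correctly identifies the only load-bearing choice, namely the outward radial direction.
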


The fact that positive areas grow and negative areas shrink also implies that there is a minimum positive region size that can exist in \emph{any} $\heff$. This idea will be formalized and generalized later in Sec.~\ref{sec:impossible_heff}. 
One implication of Obs. \ref{obs:contained_region} is that when negative regions are wiped out, two or more positive regions may merge, leading to a less expressive decision boundary. On the other hand, the strategic setting can also split a negative region, creating a more expressive decision boundary 
(see Appendix \ref{apdx:split_neg_region}).

\begin{observation}
\label{obs:num_connected_regions}
Strategic behavior can merge positive regions and split negative regions thereby either increasing or decreasing the number of connected regions
(either positive or negative).
\end{observation}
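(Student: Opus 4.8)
The statement is existential---it asserts only that each phenomenon \emph{can} occur---so the plan is to exhibit two explicit planar constructions and verify the claimed change in connectivity using the wipeout and containment results already established; each construction lifts to $\R^d$ by replacing disks with balls. The organizing principle is that the effective negative region of $\heff$ is the \emph{erosion} of the negative region of $h$ by a ball of radius $\alpha$: a negative point survives in $\heff$ iff it cannot reach the positive region of $h$ at cost at most $\alpha$ under the best-response map~\eqref{eq:best_response}. Dually, the effective positive region is the $\alpha$-dilation of the positive region of $h$. Thin negative features therefore vanish, and nearby positive features grow together.

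For the \textbf{merge} direction (decreasing the number of positive components), I would take $h$ whose positive region is two disjoint disks of radius $R$ whose boundaries are separated by a negative corridor of width strictly less than $2\alpha$. Every point in the corridor lies within cost $\alpha$ of the boundary of one disk, with the positive region immediately on the far side; by~\eqref{eq:best_response} each such point moves across and is classified positive by $\heff$ (this is the pointwise form of the containment observation). Equivalently, the two $\alpha$-dilated disks overlap precisely when the gap is below $2\alpha$. The two positive disks thus fuse into a single positive component, so the number of positive connected regions drops from two to one.

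For the \textbf{split} direction (increasing the number of negative components), I would take $h$ whose negative region is a dumbbell: two large negative blobs of radius $R > \alpha$ joined by a thin negative neck of width strictly less than $2\alpha$, surrounded by positive region. By the same erosion argument the neck is wiped out in full---every neck point reaches the surrounding positive region at cost below $\alpha$---whereas the cores of the two blobs, lying farther than $\alpha$ from any positive point, are untouched and survive as eroded disks of radius $R-\alpha>0$. The single negative component of $h$ therefore separates into two negative components of $\heff$, while the positive background remains one connected region, so the number of negative connected regions rises from one to two. Together the two examples establish that strategic behavior can both decrease and increase the number of connected regions, as claimed.

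The main subtlety to nail down is the boundary behavior of the erosion argument at the junctions. I must verify that the width thresholds are chosen consistently---$g < 2\alpha$ for the corridor and neck width below $2\alpha$ together with $R > \alpha$ for the surviving cores---so that the corridor/neck is eroded \emph{entirely} rather than partially, and that no stray indirect-wipeout artifact at the ends of the neck (resp.\ along the corridor) reconnects the components through an unexpected best-response. Once the radii, gap, and neck dimensions are fixed explicitly, these reduce to routine distance computations confirming that the surviving blobs are genuinely separated and the merged disks genuinely connected.
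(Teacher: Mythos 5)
Your proposal is correct and follows essentially the same route as the paper, which justifies the observation via exactly these two constructions: a narrow negative corridor between two positive regions that is wiped out (merging them), and a dumbbell-shaped negative region whose thin ``strait'' is eroded away (splitting it), as illustrated in Appendix~\ref{apdx:split_neg_region} and Fig.~\ref{fig:partial_expansion}~(Right). Your version is somewhat more explicit about the width thresholds ($w<2\alpha$, $R>\alpha$) than the paper's figure-based argument, but the underlying erosion/dilation idea is identical.
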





\begin{figure}[t!]
\centering
\includegraphics[width=0.24\linewidth, height=75pt]
{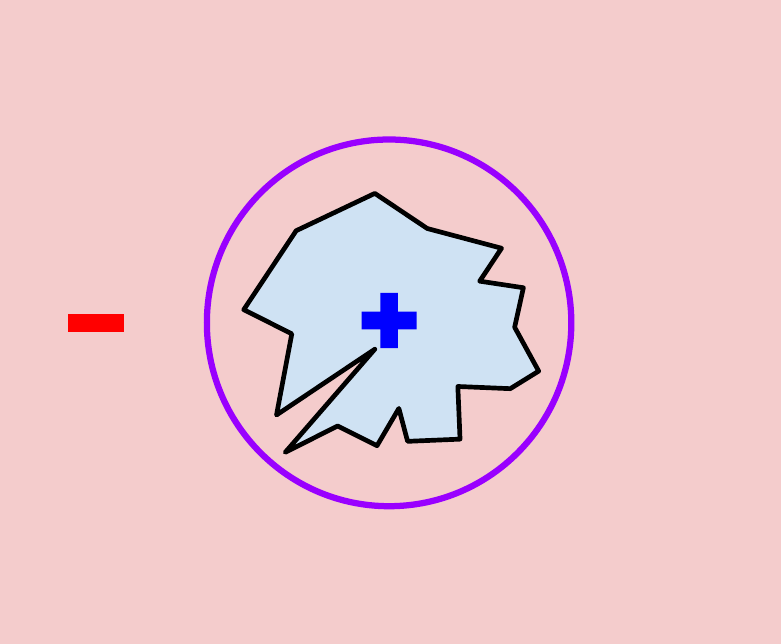}
\includegraphics[width=0.24\linewidth, height=75pt]
{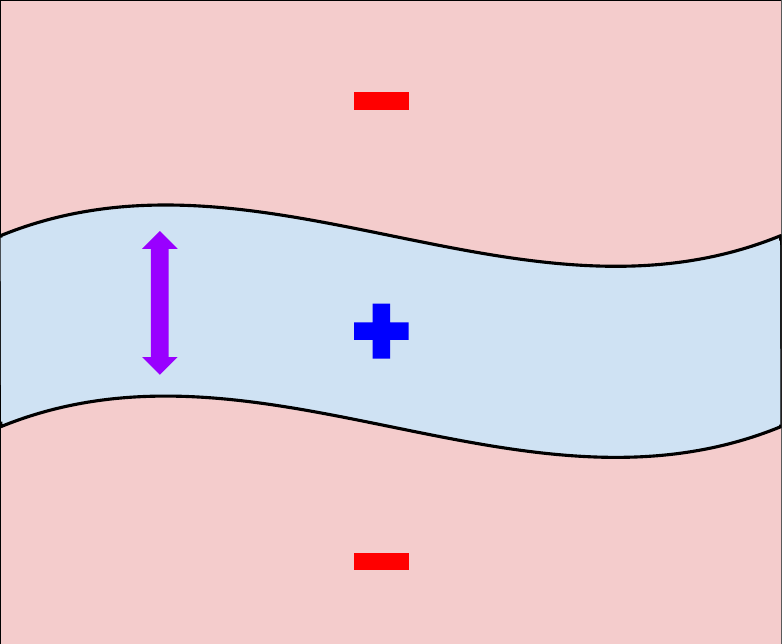}
\includegraphics[width=0.24\linewidth, height=75pt]
{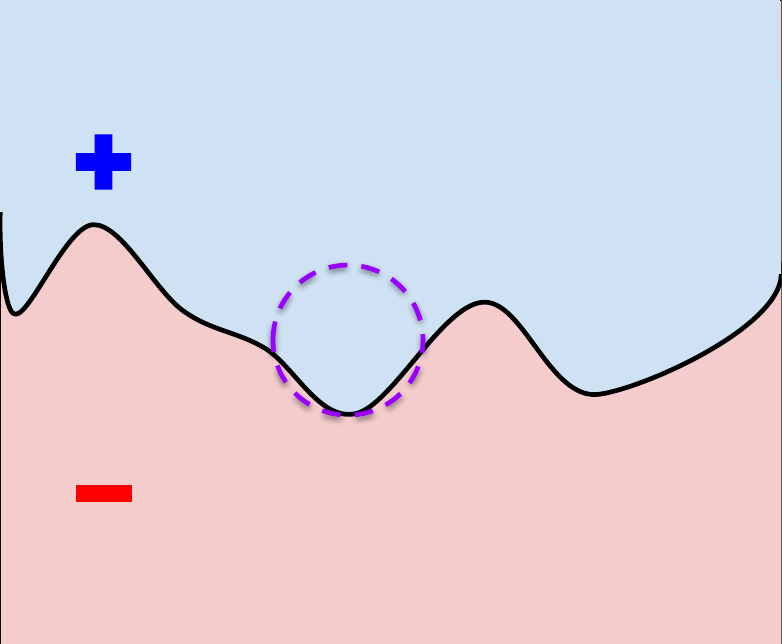}
\includegraphics[width=0.24\linewidth, height=75pt]
{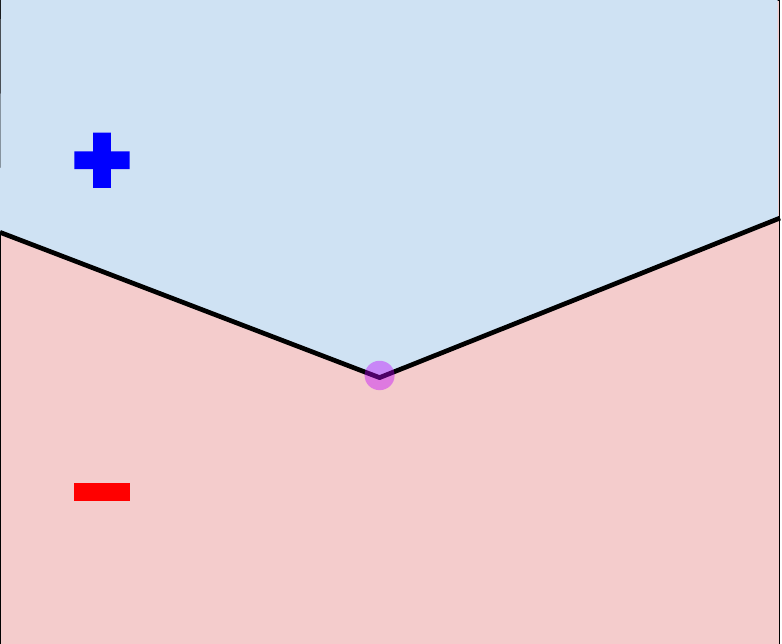}
\caption{%
\textbf{Types of impossible effective classifiers.}
From left to right: small positive region, narrow positive strip, large positive curvature, piecewise (hyper)linear convex towards the positive region.
\squeeze
}
\label{fig:impossible_dbs}
\end{figure}

\subsection{Impossible Effective Classifiers} \label{sec:impossible_heff}
Based on the above, we derive two general statements about 
effective classifiers that cannot exist.
Each statement considers some candidate classifier $g$,
and shows that if $g$ satisfies a certain property,
then there is no classifier $h$ for which $g$ is its induced effective classifier,
i.e., $g \neq \heff$ for any $h$.
We first require some additional notation.
Given $h$, define the set of negatively classified points as 
$\X_0(h)=\{x \,:\, h(x)=0\}$.
For a set $A \subseteq \X$,
we say a point $x$ is \emph{reachable} from $A$ if
$ \exists x' \in A \,\,: c(x,x') \le \alpha$.
\squeeze

\begin{proposition}
\label{prop: no_heff_1}
Let $g$ be a candidate classifier.
If there exists a point $\xorig$ on the decision boundary of $g$
such that all points $x' \in \sphere_\alpha(\xorig)$
are reachable from some point in $\X_0(g)$,
then there is no $h$ such that $g=\heff$.
\squeeze
\end{proposition}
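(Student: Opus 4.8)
The plan is to argue by contradiction: assume some $h$ satisfies $g=\heff$, and derive a violation of the hypothesis at the boundary point $x$. The whole argument rests on two elementary consequences of the best-response definition \eqref{eq:best_response} under the $\ell_2$ cost, which I would record first. Writing $\X_1(\cdot)\coloneqq\{y:(\cdot)(y)=1\}$ for the positively-classified set: (i) if $h(w)=1$ then every $y$ with $c(y,w)\le\alpha$ has $\heff(y)=1$, since such a $y$ can secure a positive prediction by moving to $w$ at cost at most $\alpha$; hence $\ball_\alpha(w)\subseteq \X_1(\heff)$, so the positive region of any effective classifier is a union of $\alpha$-balls centered at the $h$-positive points. (ii) Dually, if $\heff(z)=0$ then $\ball_\alpha(z)$ contains no $h$-positive point, for otherwise $z$ would move there and be classified positive; equivalently $\ball_\alpha(z)\subseteq\X_0(h)$. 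These are exactly the ``positive regions grow / negative regions shrink'' facts used informally above, and each is a one-line check.

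Next I would convert the reachability hypothesis into a statement purely about $h$. Fix the boundary point $x$ from the statement and take any $x'\in\sphere_\alpha(x)$. By hypothesis $x'$ is reachable from $\X_0(g)=\X_0(\heff)$, so there is $z$ with $\heff(z)=0$ and $c(x',z)\le\alpha$; applying (ii) gives $x'\in\ball_\alpha(z)\subseteq\X_0(h)$. Since $x'$ was arbitrary, the entire sphere satisfies $\sphere_\alpha(x)\subseteq\X_0(h)$, i.e.\ no $h$-positive point lies at distance exactly $\alpha$ from $x$.

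I would then exploit that $x$ lies on the decision boundary of $g$, so under the convention $g=\one{f\ge 0}$ we have $g(x)=\heff(x)=1$. Thus $x$ is either itself $h$-positive or best-responds to some $h$-positive $x_0$ with $c(x,x_0)\le\alpha$; in either case such an $x_0$ exists. Because $\sphere_\alpha(x)\subseteq\X_0(h)$, this $x_0$ cannot sit on the sphere, forcing the strict inequality $c(x,x_0)<\alpha$. Now apply (i) to $x_0$: the closed ball $\ball_\alpha(x_0)$ lies in $\X_1(\heff)=\X_1(g)$, and its interior is an open neighborhood of $x$ (as $c(x,x_0)<\alpha$) contained entirely in the positive region of $g$. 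Hence $x$ is an interior point of $\X_1(g)$, contradicting that $x$ is a decision-boundary point, which is a limit of negatively-classified points and therefore not interior to $\X_1(g)$. Thus no such $h$ exists.

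The step I expect to be most delicate is the bookkeeping at cost exactly $\alpha$, i.e.\ whether the balls in (i)--(ii) should be open or closed and how ties in the best-response $\argmax$ are broken. I would fix the standard convention that an indifferent user realizes the positive prediction (so $\X_1(\heff)$ is closed and a genuine boundary point is classified positive); this is what makes the reachability-to-$\X_0(h)$ step clean and what guarantees the generating point $x_0$ falls strictly inside the sphere rather than on it. With the opposite tie-break one would instead carry open balls throughout and replace the ``$c(x,x_0)<\alpha$'' reasoning by an approximation argument using boundary points arbitrarily close to $x$; the geometric content---an $\alpha$-ball of positive predictions swallowing a neighborhood of $x$---is unchanged.
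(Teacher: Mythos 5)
Your proof is correct, and it is built from the same two ingredients as the paper's: the characterization of $\heff$ via the two ball conditions (a $g$-positive point needs an $h$-positive witness within cost $\alpha$; a $g$-negative point forbids any $h$-positive point within cost $\alpha$), applied under the same tie-breaking convention at cost exactly $\alpha$. Where you diverge is the endgame. The paper keeps the witness for $g(x)=1$ and squeezes it: it picks a negative point $x+\delta$ with $\delta\to 0$, notes the witness must lie in $\ball_\alpha(x)\setminus\ball_\alpha(x+\delta)$, and argues this set collapses onto $\sphere_\alpha(x)$, where the reachability hypothesis rules it out. That step is slightly informal as written (for fixed $\delta$ the set difference is a lune, not a subset of the sphere, so a genuine limiting argument over all nearby negative points is needed). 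You instead apply the hypothesis first to conclude $\sphere_\alpha(x)\subseteq\X_0(h)$, force the witness $x_0$ strictly inside the ball, and then use the forward direction (an $h$-positive point renders its entire $\alpha$-ball $\heff$-positive) to make $x$ an interior point of $\X_1(g)$, contradicting its status as a boundary point. Your route trades the paper's limit argument for a topological one; both rely on the shared (implicit) assumption that a decision-boundary point is a limit of negatively classified points, and your explicit discussion of the tie-breaking at cost $\alpha$ is a point the paper leaves tacit.
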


The intuition behind Prop.~\ref{prop: no_heff_1} is that no $h$ can simultaneously strategically classify $\xorig$ as positive while all $y \in \X_0(g)$ as negative. Full proof in  Appendix \ref{apdx:no_heff_1}. As stated in Prop.~\ref{prop: no_heff_2}, when $g$ is smooth at $\xorig$, Prop.~\ref{prop: no_heff_1} can be simplified to require that only a single point be checked instead of all of $\sphere_\alpha(\xorig)$.
\squeeze






\begin{proposition}
\label{prop: no_heff_2}
Let $g$ be a candidate classifier.
If there exists a point $\xorig$ on the decision boundary of $g$ where
(i) $g$ is smooth, and 
(ii) the offset point $\xhat = \xorig+\alpha \hat{n}_\xorig$ 
is reachable by some other $x' \in \X_0(g)$,
then there is no $h$ such that $g=\heff$.
\end{proposition}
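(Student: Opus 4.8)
The plan is to argue by contradiction, reusing the geometric engine behind Prop.~\ref{prop: no_heff_1} but exploiting smoothness to pin down the single point that must be tested, thereby replacing the condition ``all of $\sphere_\alpha(x)$'' by ``the one point $\xhat$.'' Suppose $g=\heff$ for some $h$, and write $P \coloneqq \{x : h(x)=1\}$ for the closed positive region of $h$. The first step is to record the ball characterization of $\heff$ underlying Sec.~\ref{sec:classifiers}: under the $\ell_2$ cost a point $p$ reaches the positive region iff $\ball_\alpha(p)\cap P \neq \varnothing$, so $\heff(p)=1 \iff \ball_\alpha(p)\cap P \neq \varnothing$ and $\heff(p)=0 \iff \ball_\alpha(p)\subseteq \X_0(h)$. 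Since $x$ lies on the boundary of $g=\heff$, the ball $\ball_\alpha(x)$ meets $P$ only tangentially: there is a nearest point $w \in \sphere_\alpha(x)\cap P$ with no point of $P$ strictly inside $\ball_\alpha(x)$ (otherwise nearby points would also have balls meeting $P$, placing $x$ in the interior of the positive region of $\heff$). As $\mathrm{dist}(x,P)=\alpha>0$, this $w$ lies on the decision boundary of $h$.

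The crux is to show that smoothness of $g$ at $x$ forces $w=\xhat=x+\alpha\hat{n}_x$. The key observation is that if $x$ admitted two distinct nearest points $w_1,w_2\in P$ on $\sphere_\alpha(x)$, then locally the positive region of $\heff$ would be the union of the relevant parts of $\ball_\alpha(w_1)$ and $\ball_\alpha(w_2)$, whose boundaries meet at $x$ in a kink---exactly the collision phenomenon of Sec.~\ref{sec:points}---contradicting smoothness. Hence the nearest point $w$ is unique. With a unique nearest point, the outward direction of $\heff$ at $x$ is $(x-w)/\alpha$ (the gradient of the distance-to-$P$ function, which matches the one-to-one relation $\xhat = x - \alpha\normal$ of Case~\#1 read in reverse), so the inward unit normal is $\hat{n}_x=(w-x)/\alpha$ and therefore $w = x+\alpha\hat{n}_x=\xhat$. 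I expect this step---rigorously tying smoothness of the level set to uniqueness of the metric projection and to the normal direction---to be the main obstacle; the cleanest justification borrows the nearest-point/offset machinery already invoked for Prop.~\ref{prop:curvature_mapping} \citep{Farouki1990offsetcurve}, and in $d=2$ it reduces to the signed-curvature picture, with the general-$d$ case handled directionally.

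Given $w=\xhat\in P$, the contradiction is immediate from hypothesis~(ii). Reachability of $\xhat$ from some $x'\in\X_0(g)=\X_0(\heff)$ means $c(\xhat,x')\le\alpha$, i.e. $\xhat\in\ball_\alpha(x')$. But $x'\in\X_0(\heff)$ gives $\ball_\alpha(x')\subseteq\X_0(h)$ by the characterization above, so $\xhat\in\X_0(h)$, i.e. $h(\xhat)=0$. This contradicts $\xhat=w\in P$, where $h(w)=1$. Hence no $h$ can satisfy $g=\heff$, completing the proof.
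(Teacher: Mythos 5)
Your proof is correct and follows essentially the same route as the paper's: both arguments use smoothness of $g$ at $x$ together with the tangency of the radius-$\alpha$ ball to pin down $x+\alpha\hat{n}_x$ as the \emph{only} candidate point of $h$'s positive region on $\sphere_\alpha(x)$, and then derive a contradiction from hypothesis~(ii) via the fact that a negatively classified point of $\heff$ cannot reach any positive point of $h$. The only cosmetic difference is that you phrase the tangency step through the metric projection of $x$ onto $P$ rather than through the inscribed ball $\ball_\alpha(x')$ inside the positive region of $g$, and you re-derive the ball characterization of $\heff$ instead of citing Prop.~\ref{prop: no_heff_1}; the level of rigor at the smoothness-implies-unique-nearest-point step matches the paper's own.
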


The intuition behind Prop.~\ref{prop: no_heff_2} is the same as for Prop.~\ref{prop: no_heff_1}, except that when $g$ is smooth at $\xorig$, the only point in $\sphere_\alpha(\xorig)$ that may not be reachable by some $x' \in \X_0(g)$ is the $\alpha$-offset 
(see Appendix \ref{apdx:no_heff_2}).


\paragraph{Examples.}
The implication of Props.~\ref{prop: no_heff_1} and \ref{prop: no_heff_2}
is that there exist decision boundaries that are generally realizable,
but are no longer realizable as effective classifiers in the strategic setting. 
Note that because it only takes a single decision boundary point to render an entire effective classifier infeasible, each of these examples are broad categories and pose meaningful restrictions on the set of possible effective classifiers.
As illustrated in Fig.~\ref{fig:impossible_dbs}, we highlight four types of impossible effective decision boundaries:
\begin{enumeratetight}
    \item Classifiers with a small positive region that can be enclosed by a ball of radius $\alpha$.
    \item Classifiers with a narrow positive region in which points are $\alpha$-close to the decision boundary.
    \item Classifiers with large positive signed curvature anywhere on the decision boundary.
    \item Classifiers with any locally convex piecewise linear or hyperlinear segments.
\end{enumeratetight}
All proofs are in Appendix~\ref{apdx:impossible_DBs},
and either build on
Obs.~\ref{obs:no_large_pos_curve}-\ref{obs:num_connected_regions},
or show the conditions of 
Props.~\ref{prop: no_heff_1} or \ref{prop: no_heff_2}
hold. From these four examples, we note the following observation.

\begin{observation}
    \label{obs:non_bijective}
    The mapping $h \mapsto \heff$ is not bijective, with many candidate $\heff$ having no corresponding $h$ and many $h$ mapped to the same $\heff$. 
\end{observation}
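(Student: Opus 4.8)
The plan is to establish the two asserted failures separately, since each alone already contradicts bijectivity and together they match the full statement. Non-surjectivity is immediate from the preceding subsection: Props.~\ref{prop: no_heff_1} and~\ref{prop: no_heff_2}, together with the four families in Fig.~\ref{fig:impossible_dbs}, exhibit whole classes of candidate classifiers $g$ admitting no $h$ with $g=\heff$. In particular, Obs.~\ref{obs:no_large_pos_curve} already rules out every $g$ carrying positive curvature above $1/\alpha$ at even a single boundary point, which is a continuum of otherwise-realizable boundaries. Invoking these results settles the clause that many candidate $\heff$ have no preimage.

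For non-injectivity I would construct, from a single base classifier, a continuum of distinct $h$ all inducing the same $\heff$, using containment. Fix any $h$ whose positive region contains a ball $\ball_{2\alpha}(p)$, and for any small negative pocket $C \subseteq \ball_\alpha(p)$ let $\tilde h$ agree with $h$ everywhere except that it reclassifies $C$ as negative; each choice of $C$ yields a distinct classifier, and all are realizable as level sets of some score function. The claim to verify is that $\tilde h$ and $h$ induce the same effective classifier. Here I would partition $\X$ into three regions. On $C$, the containment observation applies since $C \subseteq \ball_\alpha(p)$, so every point best-responds out of the pocket and is classified positive under $\tilde h$, matching $\heff=1$ there (as $C$ lies in the positive region of $h$). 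On the $\alpha$-neighborhood of $C$ minus $C$, every point already lies in the positive region under both classifiers, since it sits inside $\ball_{2\alpha}(p)$, hence stays put with label $1$. Finally, any point $z$ with $c(z,C)>\alpha$ has reachable set $\ball_\alpha(z)$ disjoint from $C$, so $h$ and $\tilde h$ coincide on all of $\ball_\alpha(z)$ and the best-response and effective label of $z$ are unchanged. Ranging over the continuum of admissible pockets then yields infinitely many distinct $h$ mapping to one $\heff$.

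I expect the only real work to sit in the non-injectivity verification, and specifically in ruling out any hidden change to $\heff$ near the pocket. The worry is a negative point of $h$ whose nearest positive target lay inside $C$: removing $C$ from the positive region might, a priori, force it elsewhere and flip its effective label. The buffer condition $\ball_{2\alpha}(p)\subseteq$ positive region is exactly what forecloses this, since it guarantees that no negative point of $h$ lies within cost $\alpha$ of $C$ at all---every such nearby point is itself positive---so $C$ is never anyone's best-response target. Once this containment bookkeeping is discharged the three-region argument closes, and since the non-surjectivity half is a direct citation, the main obstacle is purely this accounting in the second half.
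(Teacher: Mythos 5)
Your proposal is correct and follows essentially the same route as the paper: non-surjectivity is cited from the impossibility results of Sec.~\ref{sec:impossible_heff}, and non-injectivity is shown by perturbing the negative/positive labels inside a positive region and arguing that strategic responses erase the perturbation (the paper's Appendix~\ref{apdx:inf_h_to_heff} flips single points of an infinite positive region rather than a pocket inside $\ball_{2\alpha}(p)$). Your explicit $2\alpha$ buffer and three-region case analysis make the "no hidden change to $\heff$" verification somewhat more airtight than the paper's version, but the underlying mechanism is identical.
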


In Appendix~\ref{apdx:inf_h_to_heff} we show that there are actually infinite $h$ that map to nearly all possible $\heff$. Furthermore, Obs. \ref{obs:non_bijective} will be important for proving Thm.~\ref{thm:vc_inf->0} and Cor. \ref{cor:non-learnable}.
\squeeze


\section{Class-level analysis} \label{sec:classes}

On the classifier level, the strategic setting is clearly more restrictive than the regular setting. Indeed, in Sec. \ref{sec:universality} we extend this idea into the model class level by analyzing the impacts on universality and optimal accuracy. However, as we show in Sec. \ref{sec:VC}, the model class mapping is more nuanced than the classifier mapping since complexity can actually increase as a result of strategic behavior. 

\squeeze



\subsection{Class complexity}
\label{sec:VC}

Recall that for linear classes, $\Heff$ remains linear, and so $H \mapsto \Heff=H$.
Although this equivalence is not unique to the class of linear classifiers
(see examples in Appendix~\ref{apdx:H=Heff}),
typically, we can expect strategic behavior to alter the complexity of the induced class. Here, we use VC analysis to shed light on this effect by
examining the possible relations between $\VC(H)$ and $\VC(\Heff)$.%
\footnote{This aligns with the notion of 
\emph{strategic VC} used in \citep{sundaram2021pac,krishnaswamy2021classification},
i.e., $\SVC(H) \equiv \VC(\Heff)$.}


\paragraph{Complexity reduction.}
Since our results so far point out that effective classifiers are inherently constrained,
a plausible guess is that effective complexity should decrease.
Indeed, there are some natural classes where the VC cannot increase.
For example:
\begin{proposition}
\label{prop:neg_polytopes}
Let $Q^k_s$ be the class of
negative-inside $k$-vertex polytopes bounded by a radius of $s$,
then $\VC(Q^k_s) \ge \VC(\Qeff^k)$.
Moreover, if $s=\infty$, then $\VC(Q^k_s) = \VC(\Qeff^k)$ as implied by Thm. \ref{thm:closed_scalin_VC}.
\end{proposition}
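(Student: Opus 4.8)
The plan is to show that $\Qeff^k$, viewed as a set of classifiers, is contained in $Q^k_s$, so that the inequality follows from monotonicity of VC dimension under class inclusion. The first step is to compute $\heff$ explicitly for a negative-inside polytope $P = \{x : a_i^\top x \le b_i,\ i = 1,\dots,k\}$ (the intersection of its $k$ facet-halfspaces; in the plane these correspond to the $k$ vertices). By the best-response characterization, a point $x$ stays negative under $\heff$ exactly when it cannot reach the positive exterior at cost $\le \alpha$, i.e. when $\ball_\alpha(x) \subseteq P$. This set is precisely the Minkowski erosion $P \ominus \ball_\alpha = \{x : a_i^\top x \le b_i - \alpha\|a_i\|,\ \forall i\}$, obtained by translating each facet inward by $\alpha$. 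This is consistent with the earlier results: each facet is locally linear ($\curv = 0$) and so shifts rigidly by $\alpha$ as in the linear case, while each original vertex is convex toward the negative region ($\curv \to -\infty < -1/\alpha$) and is therefore wiped out, its role taken over by the new vertex formed at the intersection of the shifted facets.

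The key structural point is that, since $P$ is convex, its erosion is again convex—hence connected, so the splitting of Obs.~\ref{obs:num_connected_regions} cannot occur—and is an intersection of at most $k$ halfspaces, i.e. a negative-inside polytope with at most $k$ facets. Moreover $P \ominus \ball_\alpha \subseteq P$, so it still lies within radius $s$. Thus every $\heff \in \Qeff^k$ is itself a member of $Q^k_s$, treating degenerate outcomes (a facet becoming redundant, or an empty erosion yielding the all-positive classifier) as the natural boundary members of the class. This gives $\Qeff^k \subseteq Q^k_s$ and hence $\VC(\Qeff^k) \le \VC(Q^k_s)$.

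For the equality when $s = \infty$, the radius constraint disappears and $Q^k_\infty$ becomes closed under scaling, so the statement follows from Thm.~\ref{thm:closed_scalin_VC}: a fixed erosion depth $\alpha$ is negligible against arbitrarily scaled instances, so scaling every polytope up enough preserves any induced labeling pattern and no shattering power is lost. Equivalently, one can exhibit the reverse inclusion directly—given a target negative-inside $k$-polytope $P' = \{x : a_i^\top x \le b_i'\}$, the preimage $P = \{x : a_i^\top x \le b_i' + \alpha\|a_i\|\}$ lies in $Q^k_\infty$ and satisfies $P \ominus \ball_\alpha = P'$, so $P' \in \Qeff^k$—which yields $Q^k_\infty = \Qeff^k$ and thus equal VC. (This un-erosion construction is exactly what fails for finite $s$, since moving facets outward can violate the radius bound, which is why equality is claimed only for $s = \infty$.)

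The main obstacle is the first step: rigorously establishing that the effective negative region is exactly the erosion, with correct bookkeeping at the vertices (old vertices wiped out, new vertices created at shifted-facet intersections) and at the degenerate configurations where facets vanish or the negative region disappears entirely. Convexity is what makes this clean—it rules out the merging/splitting phenomena and guarantees the facet count cannot increase—so the argument is not expected to extend verbatim to non-convex negative regions.
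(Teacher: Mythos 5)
Your proof is correct and follows essentially the same route as the paper: establish $\Qeff^k \subseteq Q^k_s$ by showing each negative-inside polytope maps to a smaller negative-inside polytope (you make this precise as the Minkowski erosion, which is more explicit than the paper's argument), then invoke Thm.~\ref{thm:closed_scalin_VC} for the reverse inequality when $s=\infty$. Your additional direct ``un-erosion'' construction showing $Q^k_\infty = \Qeff^k$ as sets is a nice bonus the paper omits, but it does not change the overall structure of the argument.
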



The proof for Prop. \ref{prop:neg_polytopes} can be found in Appendix \ref{apdx:neg_polytopes_VC}. Curiously, in some extreme cases,
the VC reduction spans the maximum extent.





\begin{figure}[t!]
\centering
\includegraphics[height=80pt]{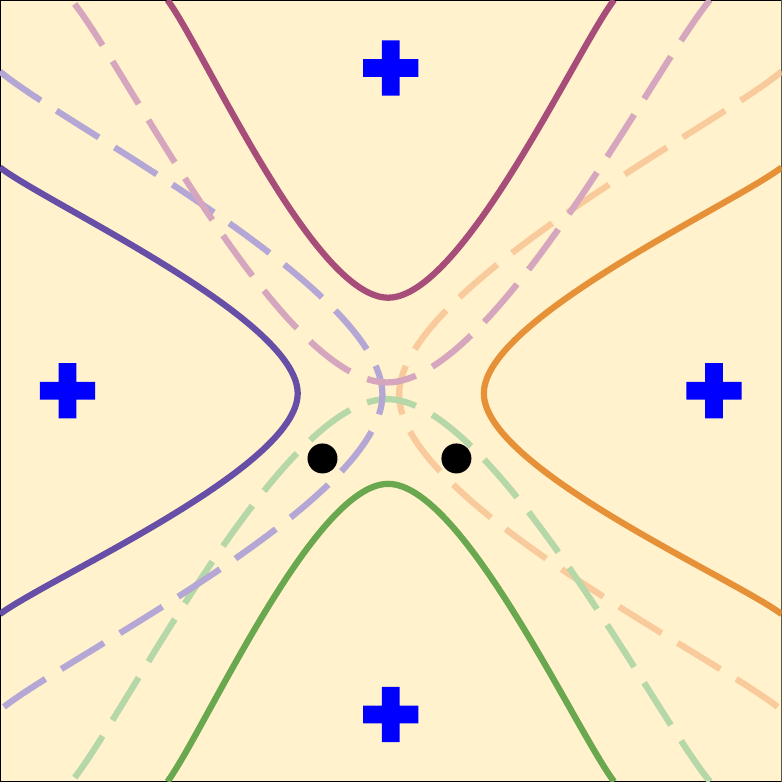}
\qquad\qquad\qquad
\includegraphics[height=80pt]{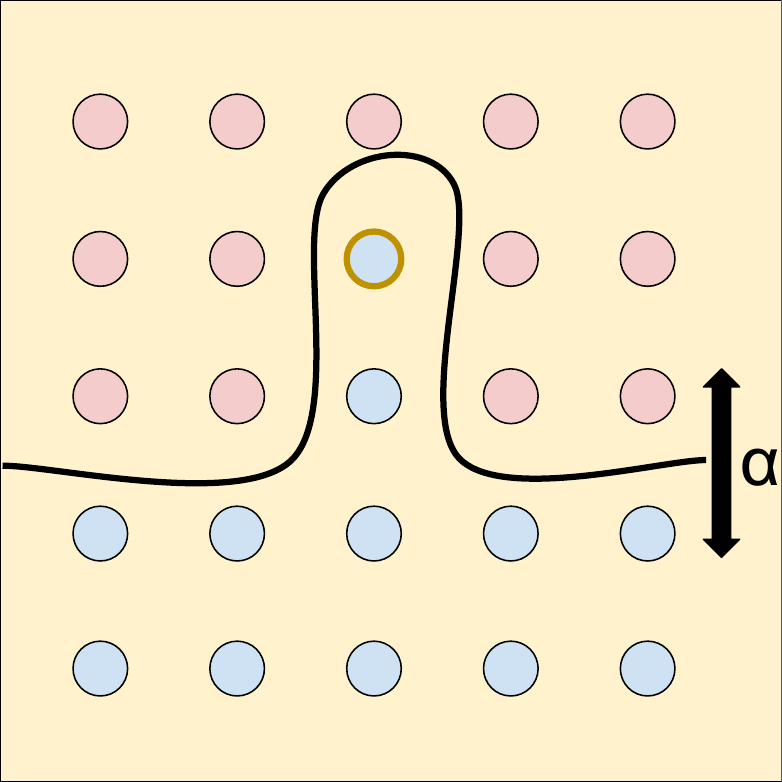}
\caption{
\textbf{(Left)} Example of increasing VC.
\textbf{(Right)} Example of a dataset with limited strategic accuracy. Any $\heff$ which correctly classifies the gold-rimmed point must err on a negative point.}
\label{fig:vc+apx}
\end{figure}

\begin{theorem}
\label{thm:vc_inf->0}
There exist several model classes $H$ where $\VC(H)=\infty$,
but $\VC(\Heff)=0$.
\end{theorem}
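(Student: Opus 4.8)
The plan is to make the standard expressivity of $H$ come entirely from an \emph{unbounded number of independent, well-separated, and individually tiny} negative features, each of which is wiped out by strategic movement. Concretely, fix any radius $r < \alpha$ and let $H$ be the class of all classifiers whose negative region $\X_0(h)$ is a finite union of balls of radius $r$ with centers pairwise separated by more than $2\alpha$ (so the balls are isolated), and whose positive region is everything else; the empty union gives $h \equiv \mathbf 1$. I would first argue $\VC(H)=\infty$: for any $n$, pick $n$ points pairwise more than $2\alpha$ apart, and for each target labeling place a radius-$r$ negative ball exactly on the points meant to be labeled $0$. The separation guarantees the resulting $h$ lies in $H$ and reproduces the labeling, so every such $n$-point set is shattered.

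Next I would show every $h \in H$ collapses to the \emph{same} constant effective classifier. Each connected component of $\X_0(h)$ is a single ball of radius $r<\alpha$, hence fully contained in $\ball_\alpha$ of its own center; by the containment observation of Sec.~\ref{sec:classifiers} it becomes positive in $\heff$. As this holds for every component, $\heff \equiv \mathbf 1$ for all $h$, so $\Heff = \{\mathbf 1\}$. A singleton class consisting of a constant function cannot realize both labels of any single point, giving $\VC(\Heff) = 0$. This is precisely the extreme form of the non-bijectivity in Obs.~\ref{obs:non_bijective}: the infinitely many distinct $h$ responsible for $\VC(H) = \infty$ are all identified with one $\heff$.

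To obtain \emph{several} such classes, I would note that the argument relies only on two ingredients---(i) unbounded shattering supplied by many separated components, and (ii) each component being strategically wiped out---and is otherwise agnostic to the component shape. Thus the same reasoning applies verbatim to negative regions built from isolated small polytopes of diameter below $2\alpha$ (linking to the shrinking behavior behind Prop.~\ref{prop:neg_polytopes}), from small ellipsoids, or from any family of connected negative sets each enclosable in a ball of radius $\alpha$. I would present two or three of these as distinct witnesses to the theorem.

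The main obstacle is that the two requirements pull in opposite directions: infinite standard VC demands that the class be expressive, whereas a constant $\heff$ demands that each member be strategically trivial. The resolution---and the point to argue carefully---is that expressivity can be supplied purely by the \emph{number} of independent small features rather than by the complexity of any single one, so that local strategic triviality and global combinatorial richness coexist. Two subtleties then require checking. First, the wipeout must apply to the \emph{entire} negative region and not merely to one ball in isolation; this is why the separation condition is imposed, so that distinct components never merge into a blob too large to sit inside a $\ball_\alpha$. Second, every member must map to the \emph{same} constant, since a mixed image $\{\mathbf 0, \mathbf 1\}$ would already have $\VC \ge 1$. The latter is forced by directionality: strategic behavior only grows positive regions, so the common limit can only be $\mathbf 1$, and because the negative regions all vanish it is indeed $\mathbf 1$ for every $h$.
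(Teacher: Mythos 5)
Your proposal is correct and takes essentially the same route as the paper's first construction: a class whose members consist of many small, well-separated negative balls (the paper uses radius-$(\alpha-\delta)$ balls centered at lattice points of $\mathbb{Z}^2$), each wiped out by strategic movement so that every $h$ collapses to the constant-$1$ classifier, giving $\VC(H)=\infty$ but $\Heff=\{\mathbf{1}\}$ and hence $\VC(\Heff)=0$. The paper additionally exhibits a second, structurally different witness (a family of boundaries agreeing outside a narrow gap whose interior is wiped out), whereas you obtain ``several'' classes by varying the component shape, which is a legitimate alternative.
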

The proof for Thm. \ref{thm:vc_inf->0} can be found in Appendix \ref{apdx:VC_inf_to_0}. See  Appendix~\ref{apdx:vc_decrease>0} for other examples where VC strictly decreases but does not diminish completely.
Thm. \ref{thm:vc_inf->0} carries concrete implications for strategic learning:
\begin{corollary}
\label{cor:non-learnable}
Some model classes that are not learnable 
become learnable in strategic environments.
\end{corollary}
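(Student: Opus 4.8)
The plan is to reduce strategic learnability of $H$ to standard learnability of its effective class $\Heff$, and then invoke the fundamental theorem of statistical learning together with Thm.~\ref{thm:vc_inf->0}. First I would fix the precise sense of learnability. By the standard VC characterization, a binary hypothesis class is (agnostically) PAC learnable if and only if its VC dimension is finite. Thus, to exhibit a class that is unlearnable in the regular setting, I would take any of the classes $H$ furnished by Thm.~\ref{thm:vc_inf->0}: since $\VC(H)=\infty$, such $H$ is not PAC learnable when accuracy is measured on raw, unmodified inputs.

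Next I would establish that, in the strategic environment, the object governing prediction is $\heff$ rather than $h$. The key observation is that the strategic objective in Eq.~\ref{eq:learning_objective} rewrites as $\expect{\dist}{\one{y = \heff(x)}}$, using $\heff(x)=h(\Delta_h(x))$. Hence maximizing strategic accuracy over $h \in H$ is identical to maximizing ordinary (non-strategic) accuracy, on raw inputs $x$, over the effective class $\Heff=\{\heff : h \in H\}$. In other words, strategic learning of $H$ coincides exactly with standard learning of $\Heff$.

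The final step applies the VC characterization once more, now to $\Heff$. By Thm.~\ref{thm:vc_inf->0}, for the same classes $H$ we have $\VC(\Heff)=0<\infty$, so $\Heff$ is PAC learnable in the standard sense. Combined with the reduction above, $H$ is learnable in the strategic setting even though it is unlearnable in the regular one, which is precisely the claimed dichotomy.

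The main obstacle I anticipate is not the VC bookkeeping but stating the reduction cleanly as an algorithmic claim: a learner operating in the strategic world must recover a good hypothesis from samples, and I must argue that an estimator guaranteed for $\Heff$ (e.g. empirical risk minimization over $\Heff$, whose generalization follows from $\VC(\Heff)<\infty$) is realizable by a learner who only selects $h \in H$. This reduces to two checks: that every target $\heff \in \Heff$ is attainable by choosing some preimage $h \in H$, which is immediate by the definition of $\Heff$; and that the generalization guarantee transfers because the strategic risk of $h$ and the effective-on-raw-inputs risk of $\heff$ agree term by term. I would make sure this correspondence between the two risks is \emph{exact} rather than approximate, so that the reduction introduces no additional error and the finite-VC guarantee for $\Heff$ carries over verbatim to strategic learning of $H$.
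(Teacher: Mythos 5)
Your proposal is correct and follows the same route as the paper: the corollary is an immediate consequence of Thm.~\ref{thm:vc_inf->0} combined with the identification of strategic learning of $H$ with standard learning of $\Heff$ (the strategic-VC equivalence $\SVC(H) \equiv \VC(\Heff)$ noted in Sec.~\ref{sec:VC}), plus the VC characterization of PAC learnability. You simply spell out the reduction more explicitly than the paper does.
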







\paragraph{Complexity increase.}
Though a lower effective VC is indeed a possibility,
a more plausible outcome is for it
to stay the same or increase.
As an intuition for how the VC might increase, consider the simple example in Fig.~\ref{fig:vc+apx} (left).
Here, $H$ includes four non-overlapping classifiers (each color is a separate classifier) with $\VC(H)=1$. Yet, with strategic behavior, the effective classifiers (dashed curves) overlap, increasing the shattering capacity to 2. This case can be generalized to give $VC(\Heff) = \theta(d \cdot VC(H))$
(see Appendix~\ref{apdx:vc_increase1->2}).
We next identify a simple generic sufficient condition
for the effective complexity of any learnable class
to be non-decreasing.

\begin{theorem}
\label{thm:closed_scalin_VC}
If $H$ is closed under input scaling
and $\VC(H) < \infty$,
then $\VC(H) \le \VC(\Heff)$.
\end{theorem}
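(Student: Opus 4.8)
The plan is to exploit the fact that the strategic budget $\alpha$ is \emph{fixed}: if we inflate a classifier's decision boundary by a large scale factor $s$ while holding $\alpha$ constant, the $\alpha$-bounded movement becomes negligible relative to the geometry, so the effective classifier reproduces the original one on correspondingly rescaled points. Closure under input scaling is precisely what keeps these inflated classifiers inside $H$, and finiteness of $\VC(H)$ is what makes the required scale uniform across all labelings.

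Concretely, let $n = \VC(H)$ and fix a shattered set $S = \{x_1,\dots,x_n\}$ together with witnesses $\{h^{(b)} : b \in \{0,1\}^n\} \subseteq H$ realizing every labeling, $h^{(b)}(x_i)=b_i$. Writing $P^{(b)} = \{x : h^{(b)}(x)=1\}$ for the positive region, I would first recall that $\heff(x)=1$ exactly when some positive point lies within cost $\alpha$ of $x$, i.e. when $\mathrm{dist}(x, P) \le \alpha$ (this follows from the best response maximizing $\alpha h(x') - c(x,x')$). Since the $\ell_2$ cost is $1$-homogeneous, $\mathrm{dist}(s x_i, s P^{(b)}) = s\,\mathrm{dist}(x_i, P^{(b)})$.

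Next I would choose the scale. For each $b$ and each $i$ with $b_i = 0$, the convention $h=\one{f \ge 0}$ with continuous $f$ makes the negative region open, so $x_i$ lies at strictly positive distance $d^{(b)}_i \coloneqq \mathrm{dist}(x_i, P^{(b)}) > 0$ from the positive region of $h^{(b)}$; for $b_i=1$ the distance is $0$. Because $\VC(H) < \infty$, there are only finitely many labelings, hence finitely many such distances, and $d_{\min} \coloneqq \min_{b,\, i:\, b_i=0} d^{(b)}_i > 0$. I would then pick any $s > \alpha / d_{\min}$ and set $g^{(b)}(x) \coloneqq h^{(b)}(x/s)$, which lies in $H$ by closure under scaling and has positive region $s P^{(b)}$. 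Evaluating at $s x_i$: when $b_i=1$ we get $\mathrm{dist}(s x_i, s P^{(b)}) = 0 \le \alpha$, so $\eff{g^{(b)}}(s x_i)=1$; when $b_i=0$ we get $\mathrm{dist}(s x_i, s P^{(b)}) = s\, d^{(b)}_i \ge s\, d_{\min} > \alpha$, so $\eff{g^{(b)}}(s x_i)=0$. Thus $\eff{g^{(b)}}(s x_i) = b_i$ for all $b$, meaning $\{\eff{g^{(b)}}\} \subseteq \Heff$ shatters $S_s = \{s x_1, \dots, s x_n\}$, and therefore $\VC(\Heff) \ge n = \VC(H)$.

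The main obstacle—and the place where both hypotheses are essential—is securing the \emph{uniform} positive lower bound $d_{\min} > 0$. Without finiteness of $\VC(H)$ the infimum over labelings could collapse to $0$, and without closure under scaling the inflated witnesses $g^{(b)}$ need not belong to $H$, so no shattering of $S_s$ inside $\Heff$ would be guaranteed. A secondary point to verify is the regularity ensuring negatively classified points sit in the \emph{open} negative region (so their distance to $P^{(b)}$ is strictly positive), which follows from continuity of the score $f$ under the convention $h=\one{f\ge 0}$; the identical argument goes through for any $1$-homogeneous cost, not just $\ell_2$.
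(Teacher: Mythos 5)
Your proof is correct and takes essentially the same route as the paper's: both fix a shattered set with its finitely many witness classifiers, use finiteness of $\VC(H)$ to extract a uniform positive lower bound on the cost a negatively-classified shattered point must pay to reach a positive region, and then scale points and classifiers (using closure under input scaling) so that this cost exceeds $\alpha$, whence the effective classifiers agree with the originals on the scaled set and shatter it. The only cosmetic difference is that you phrase the bound via distances to positive regions and justify its positivity through openness of the negative region, whereas the paper defines an equivalent "minimum strategic cost" quantity and asserts its positivity directly.
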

\extended{A class $H$ is closed under input scaling if
$h(x) \in H \Rightarrow h(ax) \in H \,\, \forall a \in \R_+$.}
The proof for Thm. \ref{thm:closed_scalin_VC} can be found in Appendix.~\ref{apdx:closed_scalin_VC}.
Thm.~\ref{thm:closed_scalin_VC}
applies to many common classes,
including polynomials, 
piecewise linear functions,
and most neural networks.
See Appx.~\ref{apdx:practical_takeaways} for implications on model selection.
\squeeze

Because Thm.~\ref{thm:closed_scalin_VC} suggests that the effective VC can increase,
but does not state by how much, one concern would be that this increase can be unbounded,
which would deem the effective class unlearnable.
Our next result shows that this is not the case even for the common and highly expressive
class of piecewise-linear classifiers (with a bounded number of segments),
which includes neural networks with ReLU activations (of fixed size) as a special case,
and is often used in studies on neural network approximation \citep[e.g.,][]{petersen2018optimal,huang2020relu}. \squeeze


\begin{theorem}
    \label{thm:piecewise_VC}
    Let $H^{m,k}$ be a class of piecewise-linear classifiers with at most m segments and k intersections of linear segments.
    Then $\VC(\Heff^{m,k}) = O\big(d \, m \, log(m) + \nu_p \, k \, log(k)\big)$, where $\nu_p$ is the VC of the class of $\ell_p$ norm balls, and is $\Theta(d)$ for $p = 1,2,\infty$.
\end{theorem}
The proof for Thm. \ref{thm:piecewise_VC} is in Appendix \ref{apdx:piecewise_VC}.
We also prove that 
$\VC(H_{m,k})= O\big(d \, m \, log(m)\big)$, suggesting
that the effective VC dimension can increase, but only to a limited extent.
The main implication of Thm.~\ref{thm:piecewise_VC} is therefore that 
under these conditions, strategic behavior maintains learnability:
for any $H$ that can be expressed as (or approximated by) piecewise-linear functions,
if $H$ (or its approximation) is learnable, then $\Heff$ is also learnable.
We conjecture that this relation remains true for general $H$; %
see Appendix \ref{apdx:general_upper_bound} for discussion on the challenges of the general case
and connections to 
\citep{cohen2024learnability}.
For the special case of positive-inside polytopes --- 
a subset of the piecewise-linear class --- 
the effective VC dimension even maintains the same order of magnitude.

\begin{theorem}
\label{thm:polytope_VC}
Consider either the $\ell_1$, $\ell_2$, or $\ell_\infty$ costs and let $H \subseteq P^k$, where $P^k$ is the set of all $k$-vertex polytopes in $R^d$ and has $\VC(P^k) = O(d^2 \, k \,\log k) \citep{Kupavskii2020Polytope_VC}$.
Then $\VC(\Heff) = O(d^2 \, k \,\log k)$.
\end{theorem}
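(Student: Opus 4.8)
The plan is to bound $\VC(\Heff)$ by a parameter/sign-pattern counting argument, after first reducing each effective classifier to a fixed low-complexity semialgebraic set. Since $\Heff$ is a subfamily of the effective class of the full $P^k$, it suffices to bound the VC dimension of the latter, so I take $H=P^k$. First I would pin down the shape of $\heff$ for a positive-inside polytope $P=\mathrm{conv}(v_1,\dots,v_k)$. Best response makes the positive (inside) region grow: a negative point moves in iff it is within cost $\alpha$ of $P$, so $\heff(x)=\one{x\in P\oplus \ball_\alpha(0)}=\one{\mathrm{dist}_c(x,P)\le\alpha}$, where $\mathrm{dist}_c$ is the cost distance. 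Using that Minkowski sum with a convex body distributes over convex hulls, $P\oplus\ball_\alpha(0)=\mathrm{conv}\big(\bigcup_i \ball_\alpha(v_i)\big)$, so $\Heff$ is exactly the class of convex hulls of $k$ equal-radius cost-balls, parametrized by the $kd$ center coordinates $\theta=(v_1,\dots,v_k)$. For $\ell_1,\ell_\infty$ these balls are themselves polytopes, so $\heff$ is again a polytope classifier; for $\ell_2$ its boundary mixes flat faces with spherical caps. The point is that in all three cases membership is a fixed semialgebraic condition in $(x,\theta)$.

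Second I would write $\mathrm{dist}_c(x,P)\le\alpha$ as a Boolean combination of few, low-degree polynomial predicates in $\theta$. The nearest point of $x$ in $P$ lies on a face spanned by at most $d$ vertices, so I enumerate candidate supporting sets $S\subseteq[k]$ with $|S|\le d$; there are $N_0=\sum_{s\le d}\binom{k}{s}=k^{O(d)}$ of them. For each $S$, the orthogonal projection $p_S(x)$ onto $\mathrm{aff}(v_i:i\in S)$ has barycentric coordinates given by Cramer's rule as ratios of $O(d)\times O(d)$ Gram determinants; hence, after clearing denominators, the predicates ``$p_S(x)\in\mathrm{conv}(v_i:i\in S)$'' (nonnegativity of barycentric weights), ``$p_S(x)$ is the global nearest point'' (KKT inequalities $(x-p_S(x))^\top(v_j-p_S(x))\le 0$ for all $j$), and ``$\|x-p_S(x)\|^2\le\alpha^2$'' are polynomial inequalities of degree $D=O(d)$ in $\theta$. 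Summing over faces gives $N=k^{O(d)}$ predicates of degree $O(d)$; for $\ell_1,\ell_\infty$ the norm is polyhedral, so the same description holds with $O(d)$ extra degree-$1$ facet predicates per ball. Signs are preserved under denominator-clearing by multiplying through by the even powers of the Gram determinants.

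Third I would turn the predicate count into a VC bound via sign-pattern counting. For any $m$ test points, the pattern $\theta\mapsto(\heff(x_1),\dots,\heff(x_m))$ is determined by the sign vector of the $mN$ polynomials above, each of degree $O(d)$ in the $n=kd$ parameters; by Warren's theorem the number of realizable sign vectors is at most $(O(mND/n))^{n}$. Shattering $m$ points requires this to be at least $2^m$, and solving $2^m\le (O(m\,k^{O(d)}d/(kd)))^{kd}$ gives $m=O\!\big(kd\cdot\log(N D)\big)=O\!\big(kd\cdot d\log k\big)=O(kd^2\log k)$, with the degree entering only as the dominated term $\log D=O(\log d)$; the identical computation applies to all three costs. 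The main obstacle is the second step: producing a uniform, bounded-degree semialgebraic description of $\mathrm{dist}_c(x,P)\le\alpha$ and, crucially, verifying that the exponential-in-$d$ number of candidate faces enters the final bound only through the logarithm $\log N=O(d\log k)$ (so it supplies the extra factor $d$ rather than a blow-up), while confirming every predicate stays of degree $O(d)$ after clearing the Gram-determinant denominators.
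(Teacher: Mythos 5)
Your proof is correct in substance but follows a genuinely different route from the paper's. The paper argues by composition of known VC bounds: it observes (Appendix~\ref{apdx:Pk_l1/linf}) that the effective region of a positive-inside $k$-vertex polytope decomposes into a larger polytope together with $k$ cost-balls centered at the vertices, and then combines $\VC(P^k)=O(kd^2\log k)$ \citep{Kupavskii2020Polytope_VC}, the $\Theta(d)$ bound for $\ell_p$ balls, and the van der Vaart--Wellner lemma on VC dimensions of unions/intersections to conclude $\VC(\Heff)\le \VC(P^k\cap H^{ball}_k)=O(kd^2\log k)$. You instead give a direct parameter-counting (Goldberg--Jerrum/Warren) argument: parametrize $\Heff$ by the $kd$ vertex coordinates, express membership in $P\oplus\ball_\alpha(0)$ as a Boolean combination of $k^{O(d)}$ polynomial predicates of degree $O(d)$ via Carath\'eodory enumeration of candidate supporting vertex sets, and read off $\VC(\Heff)=O(kd\cdot\log(ND))=O(kd^2\log k)$. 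Two remarks. First, your identity $P\oplus\ball_\alpha(0)=\mathrm{conv}\bigl(\bigcup_i \ball_\alpha(v_i)\bigr)$ is a cleaner and arguably more defensible characterization of the effective region than the paper's ``intersection of a scaled polytope with $k$ balls,'' and it is what makes your uniform treatment of all three costs possible. Second, two details deserve tightening: the nearest point of $x$ in $P$ need not lie on a face spanned by at most $d$ vertices (a facet can have many vertices); you need Carath\'eodory to replace ``face'' by ``at most $d+1$ affinely independent vertices whose affine hull contains the orthogonal projection,'' which changes nothing asymptotically; and the $\ell_1/\ell_\infty$ cases are asserted rather than worked out --- the naive reduction (Minkowski sum as a polytope on $2dk$ or $2^dk$ vertices) would lose factors of $d$, so you do need the semialgebraic/LP-feasibility encoding you sketch, spelled out with its own predicate count. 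Each approach buys something: the paper's is shorter and leans on citable black boxes, while yours is self-contained, handles the three norms uniformly, and makes the dependence on the parametrization explicit.
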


The proof for Thm. \ref{thm:polytope_VC} is in Appendix. \ref{apdx:Pk_l1/l2/linf_VC}.
Note that for the $\ell_1$ and $\ell_\infty$ costs,
the effective class remains within the polytope family,
but potentially with additional vertices (see Appendix \ref{apdx:Pk_l1/linf}).
For the $\ell_2$ cost, the bound still holds despite $\Heff$ no longer including only polytopes (since corners in $h$ can become "rounded" in $\heff$).

\subsection{Universality and approximation}
\label{sec:universality}
Our results in Sec.~\ref{sec:impossible_heff} show four `types' of decision boundaries that cannot be attributed to any effective classifier,
though others may exist.
This implies that several basic classifiers are not realizable in the strategic setting and cannot be approximated well.
This drives our principal conclusion:
\begin{corollary}
\label{cor:non-universal}
Any universal approximator class $H$ is no longer universal 
under strategic behavior.
\end{corollary}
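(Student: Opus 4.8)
The plan is to establish non-universality by exhibiting a single target classifier that no element of $\Heff$ can approximate, and---crucially---by showing this failure is \emph{robust}, i.e. bounded away from $\Heff$ by a fixed margin. Recall that $\Heff$ is universal only if, for every target classifier $g$ and every $\epsilon>0$, some $\heff\in\Heff$ satisfies $\mathrm{vol}(\{x:\heff(x)\neq g(x)\})<\epsilon$. I would refute this by choosing the target $g^\ast(x)=\one{x\in\ball_{0.9\alpha}(x_0)}$, which is positive exactly on a small ball and negative elsewhere. By the first impossibility type in Sec.~\ref{sec:impossible_heff} (small positive region), $g^\ast$ equals no $\heff$; the remaining and main work is to turn this exact impossibility into an approximation lower bound.

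The key structural fact I would invoke is that the effective positive region of any $h$ is the $\alpha$-dilation of the positive region of $h$. Writing $A^+=\{x:h(x)=1\}$, a point $x$ satisfies $\heff(x)=1$ exactly when some positive point of $h$ lies within cost $\alpha$, so $\{x:\heff(x)=1\}=\{x:\exists a\in A^+,\ c(x,a)\le\alpha\}$. Consequently every nonempty positive region of an effective classifier contains a ball of radius $\alpha$: if $a\in A^+$ then $\ball_\alpha(a)\subseteq\{x:\heff(x)=1\}$. This is exactly the ``minimum positive region'' phenomenon noted after Obs.~3, and it is a bulk (positive-volume) constraint rather than a measure-zero boundary condition, which is precisely what makes the argument robust.

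Given this, I would lower-bound the disagreement $\mathrm{vol}(\{x:\heff(x)\neq g^\ast(x)\})$ by a constant $\epsilon_0>0$ independent of $h$, via a short case split. If $\heff\equiv 0$ (i.e. $A^+=\varnothing$), then $\heff$ misclassifies all of $\ball_{0.9\alpha}(x_0)$, contributing error at least $\mathrm{vol}(\ball_{0.9\alpha})$. Otherwise, pick any $a\in A^+$, so that $\ball_\alpha(a)\subseteq\{x:\heff(x)=1\}$; since $g^\ast$ is positive only on the smaller ball $\ball_{0.9\alpha}(x_0)$, the set $\ball_\alpha(a)\setminus\ball_{0.9\alpha}(x_0)$ lies entirely in the disagreement region, and has volume at least $\mathrm{vol}(\ball_\alpha)-\mathrm{vol}(\ball_{0.9\alpha})=(1-0.9^d)\,\mathrm{vol}(\ball_\alpha)$. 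Taking $\epsilon_0=\min\{\mathrm{vol}(\ball_{0.9\alpha}),\,(1-0.9^d)\,\mathrm{vol}(\ball_\alpha)\}>0$ shows no $\heff$ approximates $g^\ast$ within $\epsilon_0$, so $\Heff$ is not universal, no matter how expressive the original class $H$ is.

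I expect the main obstacle to be exactly this upgrade from exact non-realizability to a uniform approximation gap: the results of Sec.~\ref{sec:impossible_heff} are stated as exact inequalities $g\neq\heff$, and a naive reading would rule out only exact representation. The resolution is to lean on the volumetric nature of the ``ball of radius $\alpha$'' constraint, which survives perturbation. I would take care that $\epsilon_0$ is genuinely independent of $a$---handling both the case where $\ball_\alpha(a)$ sits near $x_0$ and where it lies far away (in which case $\ball_\alpha(a)\setminus\ball_{0.9\alpha}(x_0)$ only grows)---so that the gap is uniform over all of $\Heff$.
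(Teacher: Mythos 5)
Your proposal is correct and follows essentially the same route as the paper: the paper justifies Cor.~\ref{cor:non-universal} by pointing to the impossibility results of Sec.~\ref{sec:impossible_heff} and uses the very same witness, a classifier positive exactly on some $\ball_{0.9\alpha}(x_0)$, together with the minimum-positive-region observation. Your only addition is to make the ``cannot be approximated well'' step explicit via the $\alpha$-dilation identity and the volume lower bound $\min\{\mathrm{vol}(\ball_{0.9\alpha}),\,(1-0.9^d)\,\mathrm{vol}(\ball_\alpha)\}$, which the paper asserts but does not spell out.
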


In other words, there exists a classifier $g$
for which no effective class $\Heff$ can include $g$
as a member, even for $H$ that are universal
approximators in the non-strategic setting. Common examples of universal approximator classes include
polynomial thresholds \citep{stone1948generalized},
RBF kernel machines \citep{park1991universal},
many classes of neural networks \citep[e.g.,][]{hornik1989multilayer},
and gradient boosting machines \citep{friedman2001greedy}.%
\extended{\footnote{See also \url{https://medium.com/@MarxismLeninism/universal-approximators-cf8b3a594eec}.}}
In the strategic setting, these classes are no longer all-encompassing like in the standard setting. 


\paragraph{Approximation gaps.}
Cor.~\ref{cor:non-universal} implies that 
even if the data is in itself realizable,
and even without any limitations on the hypothesis class,
the learner may already start off with a bounded maximum training accuracy simply by learning in the strategic setting. Fig. \ref{fig:vc+apx} (right) depicts one such example dataset where the maximum attainable accuracy drops from 1 to 0.96. 
This is because strategic behavior makes it impossible
to generally approximate intricate functions,
e.g., by using fast-changing curvature (via increased depth)
or many piecewise-linear segments (with ReLUs).
Another result is that the common practice of 
interpolating the data 
using highly expressive over-parametrized models
(e.g., as in deep learning) 
is no longer a viable approach.
On the upside, it may be possible for the strategic responses to prevent unintentional overfitting, allowing for better generalization.
\squeeze

In extreme cases, the gap between the maximum standard accuracy and the maximum strategic accuracy can be quite large. Our next result demonstrates that there are distributions in which, despite a maximum standard accuracy of 1, the maximum strategic accuracy falls to the majority class rate. 

\squeeze
\begin{proposition}
\label{prop:strat_acc_majority_class}
For all $d$,
there exist (non-degenerate) distributions on $\R^d$
that are realizable in the standard setting,
but whose maximum strategic accuracy is the majority class rate,
$\max_y p(y)$.
\squeeze
\end{proposition}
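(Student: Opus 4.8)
The plan is to build, for every $d$, a distribution in which a tiny positive cluster is ``caged'' by a sphere of negatives, and then to argue that strategic behavior forces any classifier that recovers the positive cluster to also misclassify a fixed fraction of the negative mass.

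First I would record the structural fact that drives everything, which follows directly from the best-response map of Eq.~\ref{eq:best_response} under the $\ell_2$ cost: a point $x$ satisfies $\heff(x)=1$ exactly when some positively-classified point of $h$ lies within cost $\alpha$ of $x$. Writing $\X_1(h)=\{x : h(x)=1\}$, this means the effective positive region equals the $\alpha$-neighborhood $\bigcup_{p\in\X_1(h)}\ball_\alpha(p)$, and in particular it is always a union of closed $\alpha$-balls. The only consequence I need is: if $\heff(x_+)=1$, then there is a center $c\in\X_1(h)$ with $x_+\in\ball_\alpha(c)$ and $\ball_\alpha(c)\subseteq\{x:\heff(x)=1\}$ --- the same ``no small positive region'' mechanism used in Sec.~\ref{sec:impossible_heff}. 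I will combine this with monotonicity of the $\alpha$-neighborhood to conclude that every negative captured by $\ball_\alpha(c)$ is misclassified regardless of the rest of $\X_1(h)$.

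For the construction I put mass $q$ on a single positive point $x_+$ (smeared over a tiny ball $\ball_\rho(x_+)$ to keep the distribution non-degenerate) and mass $1-q$ spread uniformly over the sphere $\sphere_r(x_+)$ with $r=\alpha-\epsilon$, all labeled $0$; taking $q<\tfrac12$ makes $0$ the majority class with rate $1-q$. Standard realizability is immediate since the two supports are separated by a gap, so the classifier with positive region $\ball_{r/2}(x_+)$ attains accuracy $1$; and the all-negative classifier $h\equiv 0$ gives $\heff\equiv 0$ and hence strategic accuracy exactly $1-q=\max_y p(y)$. The entire content of the proof is the matching upper bound. Fixing $r<\alpha$ ensures movements of cost exactly $\alpha$ never arise, so tie-breaking is irrelevant and all boundary sets have measure zero.

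The geometric heart, which I expect to be the main obstacle, is the following claim: every ball $\ball_\alpha(c)$ containing $x_+$ (so $t:=\|c-x_+\|\le\alpha$) meets $\sphere_r(x_+)$ in a spherical cap whose half-angle is at least $\arccos\!\big((r^2+t^2-\alpha^2)/(2r t)\big)$, and this bound is minimized over $t\in(0,\alpha]$ at $t=\alpha$, where it equals $\arccos(r/2\alpha)>\pi/3$. Hence any such ball contains a fixed fraction $\beta_d>0$ of the uniform negative mass (in $d=1$ the ``sphere'' is two points and $\beta_1=\tfrac12$). With this in hand the accuracy bound is routine: for any $h$, either $x_+$ is not captured and accuracy is at most the negative mass $1-q$, or $x_+$ is captured and at least $\beta_d(1-q)$ of the negatives lie in the effective positive region and are misclassified, giving accuracy at most $1-\beta_d(1-q)$. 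Choosing $q=\beta_d/(1+\beta_d)<\tfrac12$ makes $1-\beta_d(1-q)\le 1-q$, so in either case the strategic accuracy is at most $1-q=\max_y p(y)$, matching the lower bound. The main difficulty is thus purely the cap estimate --- identifying the worst-case center distance $t=\alpha$ and checking that the cap half-angle stays bounded away from $0$ across all dimensions --- while the remaining care is in smearing the atoms into a thin shell and a small ball so the distribution is genuinely non-degenerate without disturbing these estimates.
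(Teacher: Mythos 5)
Your proof is correct, but it takes a genuinely different route from the paper's. The paper surrounds \emph{each} positive point with $d+1$ negatives forming a tiny regular simplex at distance $\delta \to 0$, and proves a pointwise matching lemma: any location reachable at cost $\le \alpha$ from the positive point is also reachable from at least one simplex vertex. This injects every correctly classified positive into a distinct misclassified negative, so classifying positives correctly can never raise accuracy above that of the all-negative classifier, which sits exactly at the majority rate $\frac{d+1}{d+2}$ with no tuning of class proportions and for an arbitrary set of positive locations. You instead cage a single positive cluster inside a continuous sphere of negatives at radius $r = \alpha - \epsilon$, invoke the fact that the effective positive region is a union of $\alpha$-balls centered in $\X_1(h)$, and prove an aggregate measure bound: any such ball capturing the positive must swallow a spherical cap of half-angle at least $\arccos(r/2\alpha) > \pi/3$ of the negative shell (your identification of the worst case at $t=\alpha$ is right, since $(r^2+t^2-\alpha^2)/(2rt)$ is increasing in $t$ when $r<\alpha$). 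Because this gives only a fractional loss $\beta_d$ rather than a one-to-one matching, you must tune the class weight $q = \beta_d/(1+\beta_d)$ to make the cap loss dominate the positive gain; note $\beta_d$ decays with $d$ (fixed-angle caps below $\pi/2$ concentrate), so your majority rate drifts toward $1$, which is still consistent with the statement. What your route buys is a genuinely continuous, manifestly non-degenerate distribution and a self-contained link to the ``no small positive region'' mechanism of Sec.~\ref{sec:impossible_heff}; what the paper's route buys is an exact, dimension-uniform matching argument with no parameter tuning. The remaining details you flag (smearing the atom, handling the intermediate case where only part of the positive cluster is captured) are routine and close as you expect.
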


In one dimension, the distribution can be constructed by 
placing a negative point $\delta$ to the left and right of each positive point. In higher dimensions, negative points can be arranged in an $\R^d$ simplex around each positive point. The full proof is in Appendix ~\ref{apdx:strat_acc_majority_rate}.
Prop. \ref{prop:half_start_acc} extends this idea to show that there is a 
distribution whose maximum strategic accuracy approaches
$\nicefrac{1}{2}$ as $\alpha$ increases.
The general proof is given in Appendix~\ref{apdx:strat_acc->1/2}.

\begin{proposition}
    \label{prop:half_start_acc}
    There exists a distribution that is realizable in the standard setting, but whose accuracy under any $\heff$ is at most $0.5 + \frac{1}{\lfloor 2\alpha +1 \rfloor}$.
\end{proposition}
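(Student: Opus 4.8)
The plan is to exhibit a one-dimensional, perfectly balanced alternating configuration, note that it is trivially realizable in the standard setting, and then leverage the rigidity of effective positive regions established earlier to cap the strategic accuracy. Working on $\R$ (the statement only asserts existence of \emph{some} distribution), I would place unit-spaced points at positions $1,\dots,n$ and label $y_i=1$ for odd $i$ and $y_i=0$ for even $i$, so the two classes are balanced and the majority rate is exactly $\tfrac12$. To avoid boundary artifacts I would take the layout to be periodic (equivalently, a large cycle) with $n$ an even multiple of $N \coloneqq \lfloor 2\alpha+1\rfloor$. Realizability in the non-strategic setting is immediate: the classifier $h$ whose positive region is a union of tiny disjoint intervals, one around each odd point, satisfies $h(x_i)=y_i$ for all $i$ and hence has standard accuracy $1$.

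The key structural input is that effective positive regions cannot be thin. Since $\heff(x)=1$ exactly when $x$ lies within cost $\alpha$ of $\{h=1\}$, the positive region of any $\heff$ is the $\alpha$-dilation of $\{h=1\}$; in one dimension each of its connected components is therefore an interval of length at least $2\alpha$ (this is precisely the ``small positive region'' impossibility of Sec.~\ref{sec:impossible_heff}, item~1, no region fits inside a ball of radius $\alpha$). Consequently every positive interval of $\heff$ contains at least $\lfloor 2\alpha\rfloor = N-1$ of our points.

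For the upper bound I would fix an arbitrary $\heff$ with positive region $\bigsqcup_j I_j$. Counting correct predictions (odd points inside, even points outside), the accuracy equals $\tfrac12 + \tfrac1n\sum_j g_j$, where $g_j$ is the number of odd points in $I_j$ minus the number of even points in $I_j$. Because consecutive integers alternate, $g_j\le 1$, so only ``gaining'' intervals ($g_j=1$) help, and each such interval must both start and end on an odd point. I would then charge to every gaining interval its $\ge N-1$ contained points \emph{together with} the even point immediately following it; disjointness of the $I_j$ and the parity constraint (no gaining interval begins at an even point) make these charged sets pairwise disjoint and of size $\ge N$. Hence there are at most $n/N$ gaining intervals, giving $\sum_j g_j\le n/N$ and strategic accuracy at most $\tfrac12+\tfrac1N$. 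The explicit period-$N$ solution (cover $N-1$ consecutive odd-to-odd points, skip the next even point) attains this exactly when $N$ is even, so the bound is tight.

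The hard part will be the charging/counting step that pins the constant to exactly $1/N$ with $N=\lfloor 2\alpha+1\rfloor$: verifying that a minimal positive interval can contain as few as $N-1$ points, yet any net gain of $+1$ necessarily consumes an additional \emph{distinct} negatively-predicted point, and checking the two parities of $N$ together with the periodic (cyclic) bookkeeping that removes boundary terms. Ruling out that the learner does better via longer or merged intervals is the crux; everything else (realizability, and the length-$\ge 2\alpha$ fact) follows directly from the earlier results, and I would defer the full accounting to Appendix~\ref{apdx:strat_acc->1/2}.
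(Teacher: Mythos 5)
Your proposal is correct and follows essentially the same route as the paper's proof in Appendix~\ref{apdx:strat_acc->1/2}: a one-dimensional alternating lattice of labels, realizable by $\sin(\pi x)$-type classifiers, combined with the fact that every positive interval of $\heff$ has length at least $2\alpha$ and hence swallows at least $\lfloor 2\alpha\rfloor$ consecutive points while netting at most one extra correct prediction, yielding at most one unit of gain per $\lfloor 2\alpha+1\rfloor$ points. Your explicit charging argument on a cyclic domain is a cleaner formalization of the paper's looser ``optimal classifier consists of repeated positive blocks followed by a single negative point'' bookkeeping, but it is the same decomposition and the same key lemma.
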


In practice, we expect the accuracy gap to depend on the distribution.
We explore this gap experimentally in Sec.~\ref{sec:experiments}.
An interesting observation is that in certain non-realizable settings, the maximum strategic accuracy can \emph{exceed} the maximum standard accuracy.
(see Appendix~\ref{apdx:strat_acc>standard_acc}).

\section{Experiments} 
\label{sec:experiments}

To complement our theoretical results,
we present two experiments that empirically demonstrate the effects of strategic behavior on non-linear classifiers. The code for both experiments is available at
\url{https://github.com/BML-Technion/scnonlin}.

\begin{figure}[t!]
\centering
\includegraphics[height=130pt]
{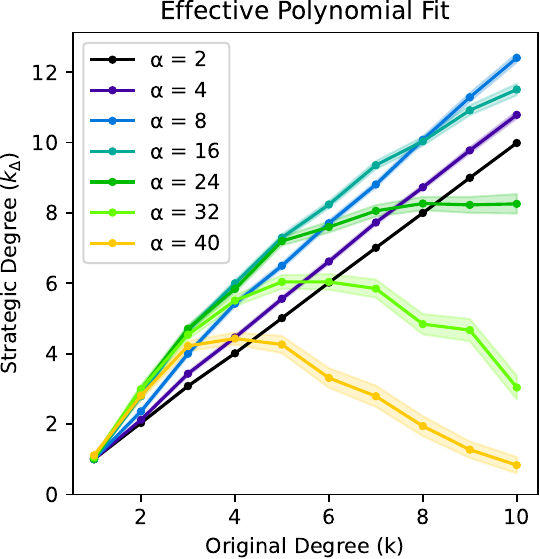} \,\,\,\,
\raisebox{14pt}{\includegraphics[height=115pt]{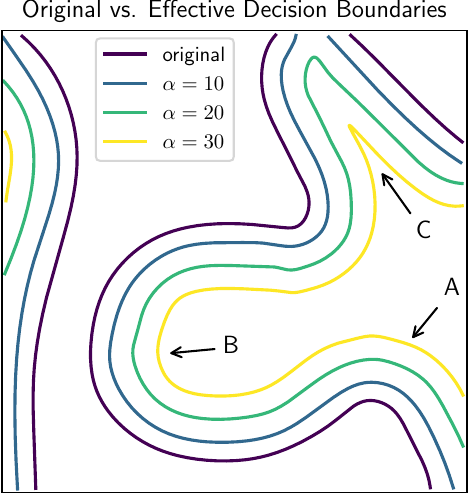}}  \,\,\,\,
\includegraphics[height=130pt]
{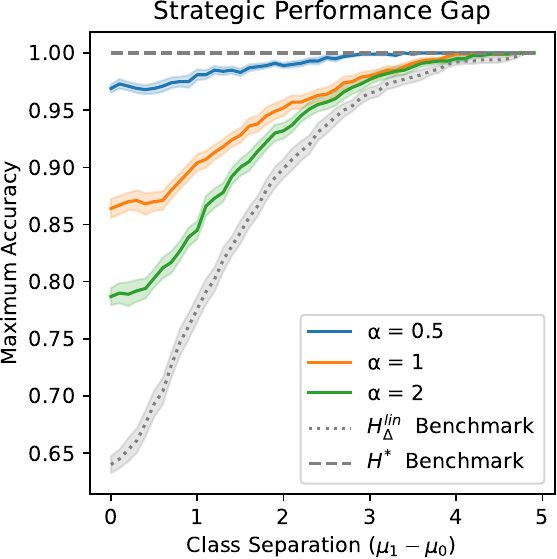}
\caption{%
\textbf{(Left) Expressivity.}
For random polynomial classifiers of degree $k$,
results show the smallest degree $k'$ that captures the effective decision boundary.
\textbf{(Center)} An instance showing positive curvature decreasing (A), negative curvature increasing (B), and wipeout (C).
\textbf{(Right) Approximation.}
As data becomes more entangled (low separation),
strategic approximation degrades.
\squeeze2}
\label{fig:experiments}
\end{figure}

\subsection{Expressivity}
\label{sec:exp_expressivity}
Following our results from Sec. \ref{sec:VC} on the change in hypothesis class expressivity, we experimentally test whether the $\heff$ of a given random $h$ belongs to a more or less expressive class than $h$ itself.
In particular, we focus on degree-$k$ polynomial classifiers and compare the degree $k$ of $h$ to the degree $k_\Delta$ of the polynomial approximation of $\heff$.%
\footnote{Though the $\heff$ of a polynomial $h$ is not necessarily a polynomial, we can compare the expressivity of $h$ and $\heff$ by finding the lowest degree-$k_\Delta$ polynomial $g$ that well-approximates $\heff$.}
Because user features are often naturally bounded, we assume that $x \in \X = R^2$ and that $x$ is bounded by $|x|_\infty \leq 100$. See Appendix~\ref{apdx:exp_expressivity} for full details.
\squeeze

Fig.~\ref{fig:experiments} (left) shows results for varying $k \in [1,10]$ and for $\alpha \in [2,40]$.
When $\alpha$ is small, we find that $k_\Delta  \approx k$, meaning that strategic behavior has little impact on expressivity.
However, as $\alpha$ increases, $k_\Delta$ becomes larger than $k$, suggesting that $\heff$ is more complex than $h$. This is due to the fact that as $\alpha$ increases, point mapping collisions (see Sec. \ref{sec:points}) become more likely, causing non-smooth cusps --- which are more complex than basic polynomials ---  to form in the decision boundary
(Fig.~\ref{fig:experiments} (center)).
When $\alpha$ is quite large, we see that $k_\Delta$ is still larger for low $k$, but drops considerably for higher $k$. This can be attributed to the fact that tightly-embedded higher-dimension polynomials and increased strategic reach from large $\alpha s$ are both causes of \emph{indirect wipeout} (see Sec.~\ref{sec:points}) because they increase the reachability of $\nabla_h(\xmap)$ by other decision boundary points. As such, much of the original decision boundary is wiped out, leaving behind a lower complexity effective decision boundary. 
\squeeze


\subsection{Approximation}
\label{sec:exp_approximation}
As seen in Sec. \ref{sec:universality}, the lack of universality in the strategic setting can impose a limitation on the maximal attainable strategic accuracy. To demonstrate the practical implications of this effect, we upper bound the maximum strategic accuracy of any $\Heff$ on a set of synthetic data by experimentally calculating the maximum strategic accuracy of the unrestricted effective hypothesis class $\Hunreff$. We compare the results to two benchmarks: (i) the standard accuracy of the regular unrestricted hypothesis class $\Hunr$ (which is always 1), and (ii) the strategic (and standard) accuracy of the linear effective hypothesis class $\Hlineff = \Hlin$. The first benchmark measures the extent to which the strategic setting hinders the learner, while the second measures the extent to which the learner can benefit from non-linearity, even in the strategic world. We generate data by sampling points from class-conditional Gaussians $x \sim \N(y\mu,1)$,
and aim to find an $h$ that obtains an optimal fit.
The parameter $\mu$ serves to show how $\Heff$ compares to our two baselines under data that ranges from well-separated (large $\mu$) to more "interleaved" and harder to classify by restricted classes (small $\mu$).
Details in Appendix~\ref{apdx:exp_approximation}.
\squeeze

Fig.~\ref{fig:experiments} (right) shows results for increasing class separation ($\mu$). As $\mu$ decreases and the classification problem becomes more difficult, the maximum strategic accuracy of $\Hunreff$ diverges from the upper $\Hunr$ baseline,
with performance deteriorating as strategic behavior intensifies
(i.e., larger $\alpha$).
Because any actual $\Heff$ can only have worse performance than the ideal $\Hunreff$, this indicates that strategic behavior can become a significant burden in more difficult tasks, such as classification in a non linearly-separable setting. Nonetheless, when $\mu$ is small, $\Hunreff$ significantly outperforms the lower $\Hlineff$ baseline, signaling that non-linearities still improve accuracy in the strategic setting despite any strategic effects.
\squeeze

\squeeze

\section{Discussion}
This work sets out to explore the interplay between
non-linear classifiers and 
strategic user behavior.
Our analysis demonstrates that non-linearity induces behavior that is both qualitatively different than the linear case and also non-apparent by simply studying it. Simply put, the strategic setting is fundamentally limited, though has a few potential advantages. Our
results show how such behavior can impact classifiers by morphing the decision boundary and model classes by increasing or decreasing complexity.
Although prior work has made clear the importance of accounting for strategic behavior in the learning objective,
our work suggests that there is a need for
additional broader considerations
throughout the entire learning pipeline
from the choices we make initially (such as which model class to use)
to setting our final expectations
(such as the accuracy levels we may aspire to achieve). We detail practical takeaways for learning and model class selection in Appendix \ref{apdx:practical_takeaways}.
This motivates future theoretical questions,
as well as complementary work on practical aspects
such as optimization
(i.e., how to solve the learning problem),
modeling
(i.e., how to capture true human responses),
and evaluation
(i.e., on real humans and in the wild).
\squeeze

\subsection*{Acknowledgments}
The authors are grateful to Shay Moran and Nadav Dym for thoughtful discussions and suggestions. This work is supported by the Israel Science Foundation grant no. 278/22.

\extended{%

}

\bibliographystyle{plain}
\bibliography{refrences.bib}

\newpage
\appendix
\section{Background}

\subsection{Signed Curvature} \label{apdx:high-dim_curv}

Our analysis makes use of the notion of \emph{signed curvature} --- a signed version of the normal curvature --- to quantify shape alteration. Because there is an inherent notion of direction due to the binary nature of class labels, we can define curvature as being positive if the curvature normal vector points in the direction of the positive region and negative if it points in the direction of the negative region. Though in $\R^2$ this defines a single value, we must instead define a curvature value for each `direction' in higher dimensions. More formally, for each tangent vector $\vec{t}$ in the tangent hyperplane $T_x$ at $x$, we define the curvature value $ \curv_{\vec{t}}$ as the signed curvature at $x$ on the curve obtained by intersecting the decision boundary with the plane $L$ containing $\vec{t}$ and the normal vector $\hat{n}_x$. Defining $\Curv =\{\curv_{\vec{t}} : \vec{t} \in T \}$ as the set of all such curvatures, our analysis will focus on $\curvmax = \sup(\Curv)$ in $\R^3$ and above.

\subsection{Bounded/Unbounded Input Domain}
\label{apdx:bounded_input domain}
    At a high level, our results in Sec. \ref{sec:classes} do not require an unbounded input space, but rather some input space that is larger than $\X$. In particular, if we assume that raw inputs, $x$, are bounded by $|x| \leq B$, then for the problem to generally be well-defined, we must allow this space to expand at the minimum to $B + \alpha$ to accommodate for strategic input modifications. More generally, we must ensure that $|\Delta_h(x)| \leq B(\alpha)$ holds for all $x$ and any $h$, for some expansion $B(\alpha) \geq B$, where the exact form of $B(\alpha)$ depends both on the cost function and on $H$. 

    Other than Thm. \ref{thm:closed_scalin_VC}, our results simply require the minimal expansion be satisfied and are therefore not dependent on an unbounded input domain. Likewise, Thm. \ref{thm:closed_scalin_VC} does not require an unbounded input domain since for any given learnable $H$, there exists some finite $B(\alpha)$ for which the result holds. However, since the statement is generic (i.e., applies to all learnable $H$ that are closed to scaling), we do not have a closed-form expression for $B(\alpha)$  that applies simultaneously to all such $H$.

\subsection{General Upper Bound on VC Increase}
\label{apdx:general_upper_bound}
    In Thm. \ref{thm:piecewise_VC}, we provide an upper bound on the strategic VC dimension of classes of piecewise-linear classifiers, though note that a general upper bound remains an open question. A key difficulty in finding a general upper bound in the strategic setting is that points will move differently when facing different $h$ from the same $H$. Thus, reasoning about the shattering of an arbitrary set must account for how points can move under all possible $h \in H$, though not simultaneously, which proved to be highly challenging in the general, unstructured case. This is why we have included results that exploit common structures like the upper bound based on piecewise linearity (Thm. \ref{thm:piecewise_VC}) and lower bound based on $H$ that are closed to scaling (Thm. \ref{thm:closed_scalin_VC}).

    That said, we conjecture that Thm. \ref{thm:piecewise_VC} holds for general $H$, as proven by Cohen et al. \cite{cohen2024learnability} in the special case of finite manipulation graphs. In particular, they show that $\VC(\Heff) = \tilde{\Theta}(\VC(H) \cdot log \,k)$ where $k$ is the degree of the manipulation graph. Note that the bound is not transferable to our case since it is vacuous for $k \to \infty$, and the proof technique does not apply to continuous graphs.

\section{Proofs}

\begin{lemma}
    \label{lem:insc_smaller_osc}
    In $\R^2$, if the curvature $\curv$ at $x$ is positive, then the radius $r_{insc}$ of the maximum inscribed circle at $x$ is no bigger than the radius $r_{osc} = \frac{1}{\curv}$ of the osculating circle at $x$ (the circle that best approximates the curve at $x$ and has the same curvature). As a corollary, if  curvature $\curv$ at $x$ is negative, then the radius $r_{esc}$ of the maximum escribed circle at $x$ is no bigger than the radius $r_{osc} = -\frac{1}{\curv}$ of the osculating circle at $x$. 
\end{lemma}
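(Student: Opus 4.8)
The plan is to reduce the statement to a purely local, second-order comparison between the boundary curve and a candidate tangent circle, and to show that the local curvature constraint alone already forces $r_{insc}\le r_{osc}$. First I would fix coordinates so that $x$ is at the origin, the tangent line to the decision boundary at $x$ is the horizontal axis, and the unit normal $\normal_x$ points into the positive region (the upward direction). By the definition of signed curvature, the boundary then admits the local expansion $y=\tfrac{\curv}{2}\,t^2+o(t^2)$ as $t\to 0$, where $\curv>0$ means the curve bends upward, toward the positive region.

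Next I would parametrize the candidate inscribed circles. Any circle tangent to the boundary at $x$ and lying on the positive side has its center on the normal line at $x+r\normal_x=(0,r)$, and its lower arc (the part near the origin) expands as $y=r-\sqrt{r^2-t^2}=\tfrac{1}{2r}\,t^2+o(t^2)$. For such a circle to be inscribed it must lie inside the positive region, so in particular its lower arc must stay on or above the boundary curve near $x$. Comparing the leading coefficients, this requires $\tfrac{1}{2r}\ge\tfrac{\curv}{2}$, i.e. $r\le 1/\curv=r_{osc}$. Conversely, if $r>r_{osc}$ then $\tfrac{1}{2r}<\tfrac{\curv}{2}$ strictly, so for all sufficiently small $t\ne 0$ the circle's lower arc dips strictly below the curve into the negative region and the circle fails to be inscribed. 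Since this bound applies to every admissible tangent circle, taking the supremum over radii yields $r_{insc}\le r_{osc}$.

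The corollary for $\curv<0$ follows by the same argument with the roles of the two regions (equivalently, the sign of the normal) reversed: the curve now bends toward the negative region, an escribed circle is tangent at $x$ with center $x-r\normal_x$, and the identical second-order comparison gives $r_{esc}\le 1/(-\curv)=r_{osc}$.

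The step I expect to require the most care is justifying that a local, second-order comparison suffices even though \emph{inscribed} is a global containment condition. The resolution is that we only seek an \emph{upper} bound on $r_{insc}$: a circle with $r>r_{osc}$ is already disqualified by its behaviour in an arbitrarily small neighborhood of $x$, so no global analysis of the remainder of the boundary is needed (global constraints can only shrink $r_{insc}$ further, which is consistent with the claimed inequality). The only other delicate point is pinning down the sign conventions for $\normal_x$ and $\curv$ so that ``bends toward the positive region'' is correctly matched with the side on which the inscribed circle lives; once these are fixed, the leading-coefficient inequality is immediate and the $o(t^2)$ terms are irrelevant because the binding inequality is strict.
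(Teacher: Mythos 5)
Your proof is correct, and it reaches the same conclusion by a more explicit route than the paper. The paper's argument is a qualitative case split on whether the osculating circle crosses the curve at $x$: if it crosses, every larger circle centered further along the normal also crosses, and if it does not cross, the paper asserts it must be the largest strictly tangent circle ``or else it would not be the best circular approximation'' --- a step that is left informal. You instead fix local coordinates and compare second-order Taylor coefficients, $y_{\mathrm{curve}}(t)=\tfrac{\curv}{2}t^2+o(t^2)$ versus $y_{\mathrm{circle}}(t)=\tfrac{1}{2r}t^2+o(t^2)$, so that any tangent circle with $r>r_{osc}$ is seen to dip strictly into the negative region for arbitrarily small $t\neq 0$; this makes rigorous exactly the step the paper hand-waves, and it correctly handles the borderline case $r=r_{osc}$ (where the $o(t^2)$ terms decide, but which is harmless since only the upper bound $r_{insc}\le r_{osc}$ is claimed). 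Your observation that a local disqualification suffices because ``inscribed'' is a global condition that can only shrink $r_{insc}$ further is also the right resolution of the only genuinely delicate point, and matches the implicit logic of the paper's first case. The one shared ingredient is the preliminary reduction that any circle through $x$ contained in one region must be tangent there with center on the normal line, which both you and the paper state without proof; that is standard for a smooth boundary and acceptable here.
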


\begin{proof}
     Note that both the osculating circle and the inscribed circle have their centers along the normal line to the curve at $x$ (and on the same side of $h$), and that the osculating circle may cross the curve at $x$. If it does, then any circle which has a larger radius and a center on the same side of $h$ along the normal line to the curve will also cross the decision boundary at $x$, so the radius of the inscribed circle (which strictly does not cross the curve) is strictly smaller than the radius of the osculating circle. If the osculating circle does not cross the curve, then it must be the largest strictly tangent circle at $x$, or else it would not be the best circular approximation to the curve at $x$. As such, the largest inscribed circle, which must be both tangent to the curve at $x$ also not intersect it anywhere else, must have a radius that is no larger than $r_{osc}$.
     
\end{proof}

\subsection{One-to-One Mapping}
\label{apdx:one-to-one}

    \paragraph{Claim:} \emph{ In the one-to-one mapping, the point $\xmap$ on the decision boundary of $h$ is mapped to the point $\xorig = \xmap - \alpha \hat{n}_\xmap$ on the decision boundary of $\heff$.}

    \begin{proof}
        Note that all points on the decision boundary of $\heff$ must be at the maximum cost away from $h$\footnote{Otherwise, $x$ would not be on the boundary of positive and negative classification by $\heff$ because all of the points adjacent to it can also strategically move to achieve a positive classification.}, so $\nabla_h(\xmap) \subset \sphere_\alpha(\xmap)$. Additionally, for each $\xorig \in \nabla_h(\xmap)$, the ball $\ball_\alpha(\xorig)$, which is the set of points that $\xorig$ can move to, must not intersect the decision boundary of $h$ (or else it would not be at maximum cost away) and must furthermore be strictly tangent to the decision boundary of $h$ at $\xmap$. Given that the decision boundary of $h$ is smooth at $\xmap$, this means that the normal to the curve at $\xmap$ must run along the radius of $\ball_\alpha(\xorig)$ from $\xmap$ to $\xorig$, implying that $\xorig = \xmap \pm \alpha \hat{n}_\xmap$. Because of the inherent directionality of the setting, the normal vector at $\xmap$ is defined to point in the positive direction. This means that only the point $\xorig = \xmap - \alpha \hat{n}_\xmap$ would seek to move to $\xmap$ to gain a positive classification, as $\xorig = \xmap + \alpha \hat{n}_\xmap$ is either already labeled positive or can move to a closer point. Therefore, $\nabla_h(\xmap) =  \{\xmap - \alpha \hat{n}_\xmap\}$ in the one-to-one case.
    \end{proof}

\subsection{Wipeout Mapping}
\label{apdx:no_effect_on_heff}

    \paragraph{Claim:} \emph{ Points on the decision boundary of $h$ that are `wiped out' do not affect the effective decision boundary of $\heff$.}

    \begin{proof}
        In the case of \emph{indirect wipeout}, this follows directly from the definition: if $\nabla_h(\xmap)$ is contained in $\ball_\alpha(x')$ for some other $x'$ on the decision boundary of $h$ (or in the union of several $\ball_\alpha(x')$), then none of the points in $\nabla_h(\xmap)$ are at a maximum cost away from $h$, and are therefore not on the decision boundary of $\heff$. 
        
        In the case of \emph{direct wipeout}, the normal \emph{one-to-one} mapping fails because the point $\xorig = \xmap - \alpha \hat{n}_\xmap$ is no longer at the maximum cost away from $h$. Specifically, because the curvature $\curv < -1/\alpha$ \footnote{or $\inf(\Curv) < -1/\alpha$ for $d>2$. See Sec. \ref{sec:curvature} and Appendix \ref{apdx:high-dim_curv} for curvature definitions}, the ball $\ball_\alpha(\xorig)$ now intersects $h$, which means that $\xorig$ is less than $\alpha$ away from $h$ and therefore not on the decision boundary of $\heff$. For $\R^2$, Lemma \ref{lem:insc_smaller_osc} proves that the radius of the maximum escribed circle at $\xmap$ is less than the radius of the osculating circle, so $r_{esc} \leq r_{osc} = -\frac{1}{\curv} < \alpha$. Therefore, there is no circle of radius $\alpha$ that is strictly tangent to $h$ (on the negative side), so $\ball_\alpha(\xorig)$ must intersect $h$. 
        
        For higher dimensions (see Appendix \ref{apdx:high-dim_curv}), there is no longer a single normal curvature. Instead, we define the set $\Curv$ of curvatures in each plane $p \in P$ defined by the normal vector and some tangent vector at $\xmap$. Note that $B_\alpha(\xorig)$ will intersect $h$ if any of its projections onto each $p \in P$ (a circle) intersects the curve obtained by intersecting the decision boundary with the plane $p$. Since each of these cases are in $\R^2$, $\ball_\alpha(\xorig)$ will intersect $h$ if $\inf(\Curv) < -1/\alpha$.
    
    \end{proof}

\subsection{Proof of Proposition \ref{prop:curvature_mapping}}
\label{apdx:curvature_eq}

    \begin{figure}[t!]
        \centering
        \includegraphics[height=175pt]{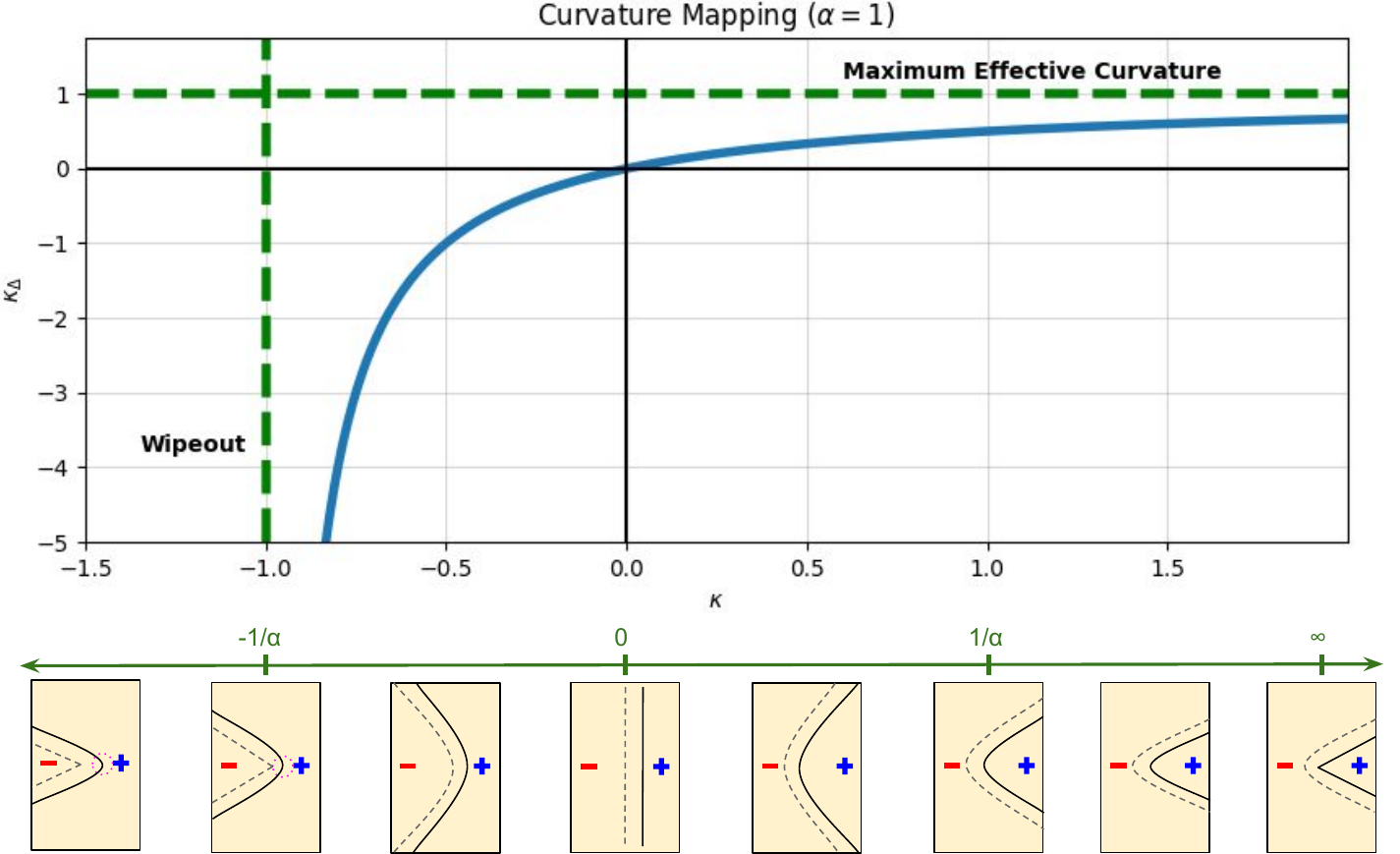}          
        \caption{\textbf{(Top)} Graph of Eq. \ref{eq:curvature_mapping} with asymptotes represented as dashed lines. \textbf{(Bottom)} Depictions of curvature mappings for increasing $\curv$.}
        \label{fig:curvature_mapping}
    \end{figure}

\paragraph{Proposition \ref{prop:curvature_mapping}:}
\emph{ Let $\xmap$ be a point on the decision boundary of $h$ with signed curvature $\curv \ge -1/\alpha$.
The effective curvature of the corresponding $\xorig$ on the boundary of $\heff$ is
given by:
\begin{equation*}
\curveff = \curv / (1 + \alpha \curv) 
\tag{\ref{eq:curvature_mapping}}
\end{equation*}}

 \begin{proof}
     While an equivalent form of Prop. \ref{prop:curvature_mapping} has already been proven in the field offset curves \citep{Farouki1990offsetcurve}, we nonetheless reprove it here to offer some intuition behind it. In the normal, \emph{one-to-one} mapping, because both the center of curvature and $\nabla_h(\xmap)$ lie on the normal line to $h$ at $\xmap$ (see Appendix \ref{apdx:one-to-one}), strategic movement can be seen as increasing the signed radius of curvature, $r_{osc}$, by $\alpha$ (where we will define the radius of curvature as negative if the curvature is negative at $\xmap$). In higher dimensions, this is equivalent to increasing each of the radii of curvature by $\alpha$ (see Appendix \ref{apdx:high-dim_curv}). We therefore have:

     \begin{align*}
         r_{osc, \Delta} &= r_{osc} + \alpha \\
         \curveff &= \frac{1}{r_{osc, \Delta}} = \frac{1}{r_{osc} + \alpha}     \\
         &= \frac{1}{\frac{1}{\curv} + \alpha} \\
         &= \frac{\curv}{1+\alpha \curv}
     \end{align*}

      Fig. \ref{fig:curvature_mapping} shows the graph of Eq. \ref{eq:curvature_mapping} as well as examples of curvature mappings for various $\curv$. Note that Eq. \ref{eq:curvature_mapping} has a horizontal asymptote at $\curveff = 1/\alpha$ because the effective curvature cannot exceed 1/$\alpha$ (see Sec. \ref{sec:impossible_heff}). Additionally, Eq. \ref{eq:curvature_mapping} has a vertical asymptote at $\curv=-1/\alpha$ because any point with lower curvature has no effect on $\heff$ (see Appendix \ref{apdx:no_effect_on_heff}). When $\curv=-1/\alpha$, $x$ is mapped to a non-smooth kink of infinite curvature.
 \end{proof}

\subsection{Proof of Proposition \ref{prop: no_heff_1}} \label{apdx:no_heff_1}

\paragraph{Proposition \ref{prop: no_heff_1}:} \emph{Let $g$ be a candidate classifier. If there exists a point $\xorig$ on the decision boundary of $g$ such that all points $x' \in \sphere_\alpha(\xorig)$ are reachable from some point in $\X_0(g)$, then there is no $h$ such that $g=\heff$.}

\begin{proof}
In order for $g$ to be the effective classifier of the regular classifier $h$, we must have:

    \begin{enumerate}
        \item For each point $\xalt$ that is positively classified by $g$, there must be a point $\xalt' \in \ball_{\alpha}(\xalt)$ which is classified as positive by $h$. In other words, $\xalt$ must be able to acquire a positive label by staying in place or strategically moving within $ \ball_{\alpha}(\xalt)$. 
        
        \item For each point $\xalt$ that is negatively classified by $g$, there cannot be any points $\xalt' \in \ball_{\alpha}(\xalt)$ which are classified as positive by $h$. In other words, $\xalt$ must not be able to acquire a positive label by staying in place or strategically moving within $\ball_{\alpha}(\xalt)$. 
    \end{enumerate}

Assume there exists a point $\xorig$ on the decision boundary of $g$ such that all points $x' \in \sphere_{\alpha}(\xorig)$ are reachable from $\X_0(g)$ and that there exists $h$ such that $g = \heff$. Since $\xorig$ is on the decision boundary of  $g$, there exists a point $\xorig+\delta, \delta \to 0$ that is negatively classified by $g$. For requirements 1 and 2 to hold, there must be a point $x'\in S = \{ \xalt: \xalt\in \ball_{\alpha}(\xorig), \xalt \not \in \ball_{\alpha}(\xorig+\delta)\}$ that is positively classified by $h$. Since  $S \subset \sphere_{\alpha}(\xorig)$, we can expand the claim to say that there must be a point $x'\in \sphere_{\alpha}(\xorig)$ that is positively classified by $h$. However, if all points $x'\in \sphere_{\alpha}(\xorig)$ are reachable by some point $\xalt$ that is negatively classified by $g$, then none of the points $x'\in \sphere_{\alpha}(\xorig)$ can be positively classified by $h$ (by requirement 2). Therefore, if there exists a point $\xorig$ on the decision boundary of $g$ for which every point on the sphere $\sphere_{\alpha}(\xorig)$ is reachable by some point that is negatively classified by $g$, then $g$ cannot be a possible effective classifier.
\end{proof}

\subsection{Proof of Proposition \ref{prop: no_heff_2}} \label{apdx:no_heff_2}

\paragraph{Proposition \ref{prop: no_heff_2}:} \emph{ Let $g$ be a candidate classifier. If there exists a point $\xorig$ on the decision boundary of $g$ where (i) $g$ is smooth, and (ii) the offset point $\xhat = \xorig+\alpha \hat{n}_\xorig$ is reachable by some other $x' \in \X_0(g)$, then there is no $h$ such that $g=\heff$.}

\begin{proof}
    By Prop. \ref{prop: no_heff_1}, for each point $\xorig$ on the decision boundary of $g$, there must be a point $x'$ on $\sphere_\alpha(\xorig)$ that is classified positively by $h$ and also not reachable by any point negatively classified by $g$. As a result, the ball $\ball_\alpha(x')$, which is the set of points that can move to $x'$, must not intersect the decision boundary and must furthermore be strictly tangent to the decision boundary at $\xorig$. Given that the decision boundary is smooth at $\xorig$, this means that the normal to the curve at $\xorig$ must run along the radius of $\ball_\alpha(x')$ from $\xorig$ to $x'$, implying that $x' = \xorig \pm \alpha \hat{n}_\xorig$. Because the strategic movement is in the direction of a positive classification, the only possible option is for $\xorig$ to move to $x' = \xorig + \alpha \hat{n}_\xorig$. However, if $x' = \xorig + \alpha \hat{n}_\xorig$ is reachable by some point that is negatively classified by $\heff$, then the associated $h$ would not have classified $x'$ as positive, so $\xorig$ would not have been a positively classified point by $\heff$.
\end{proof}

\subsection{Effective Class of Polytope Family}
\label{apdx:Pk_l1/linf}

    \begin{figure}[t!]
        \centering
        \includegraphics[height=80pt]{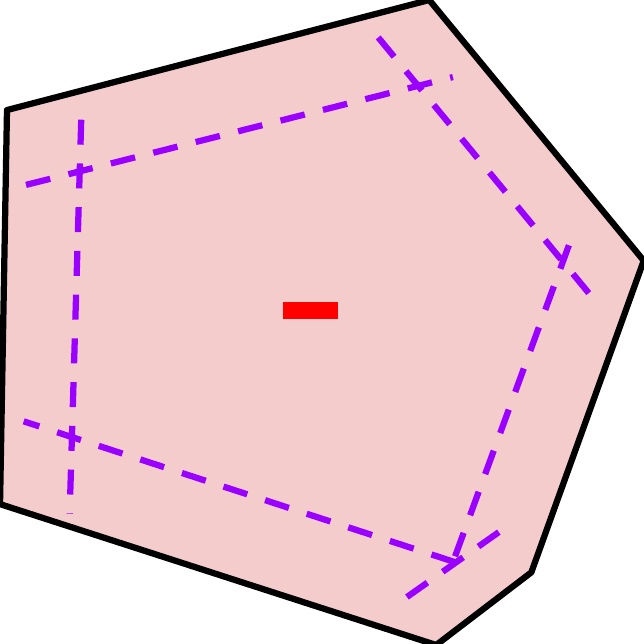}      
        \qquad
        \includegraphics[height=80pt]{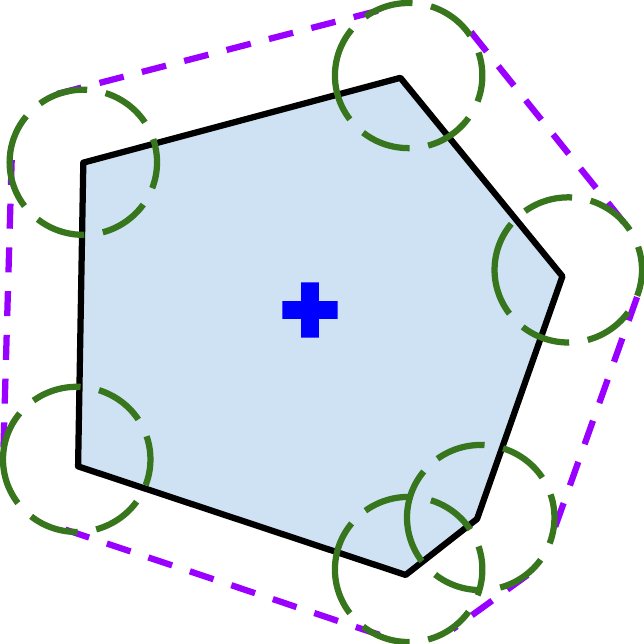}      
        \caption{\textbf{(Left)} Each face of a negative-inside polytope is mapped to a new linear face, though the resulting $\heff$ may be a polytope with fewer faces and vertices due to wipeout.
        \textbf{(Right)} Each face of a positive-inside polytope is also mapped to a new linear face, though each vertex will have to expand to fill the gap between these faces.}
        \label{fig:polytopes}
    \end{figure}
    Each polytope is simply a combination of $n$ linear faces. As such, when a polytope has a negative inside, each face will be mapped to a new linear face (with some wipeout of part or all of the face) leading to an $\heff$ that is a smaller-radius polytope, albeit with potentially fewer faces and edges (see Fig. \ref{fig:polytopes} (Left)). Although the faces of a positive-inside polytope also map to new linear faces, each vertex will have to expand to fill the gap between these new faces. As such, $\heff$ is now a larger polytope whose corners are partial to a ball (see Fig. \ref{fig:polytopes} (Right)). For the $\ell_1$ and $\ell_\infty$ costs, $\heff$ therefore remains a polytope since the $\ell_1$ and $\ell_\infty$ balls are piecewise-linear themselves. For the $\ell_2$ cost, $\heff$ is now a polytope with rounded corners. In either case, the decision boundary of each $p^k_\Delta \in P^k_\Delta$ can be seen as the intersection of a radius-scaled $p^k$ and $k$ balls centered at each vertex of $p^k$.

\subsection{Proof of Proposition \ref{prop:neg_polytopes}}
\label{apdx:neg_polytopes_VC}

\paragraph{Proposition \ref{prop:neg_polytopes}:} \emph{ Let $Q^k_s$ be the class of
negative-inside $k$-vertex polytopes bounded by a radius of $s$,
then $\VC(Q^k_s) \ge \VC(\Qeff^k)$.
Moreover, if $s=\infty$, then $\VC(Q^k_s) = \VC(\Qeff^k)$ as implied by Thm. \ref{thm:closed_scalin_VC}.}

 \begin{proof}
     As noted in Appendix \ref{apdx:Pk_l1/linf}, 
     negative-inside polytopes map to smaller-radius negative-inside polytopes with potentially fewer faces and vertices. As such $\Qeff^k \subset Q^k_s$, so $ \VC(\Qeff^k) \le \VC(Q^k_s)$. However, if $s=\infty$, the class $Q^k_s$ is closed to input scaling leading to $ \VC(\Qeff^k) \ge \VC(Q^k_s)$ by Thm. \ref{thm:closed_scalin_VC}, so we must have that $ \VC(\Qeff^k) = \VC(Q^k_s)$.
 \end{proof}

\subsection{Proof of Theorem \ref{thm:vc_inf->0}}
\label{apdx:VC_inf_to_0}

    \begin{figure}[t!]
        \centering
        \includegraphics[width=0.3\linewidth]{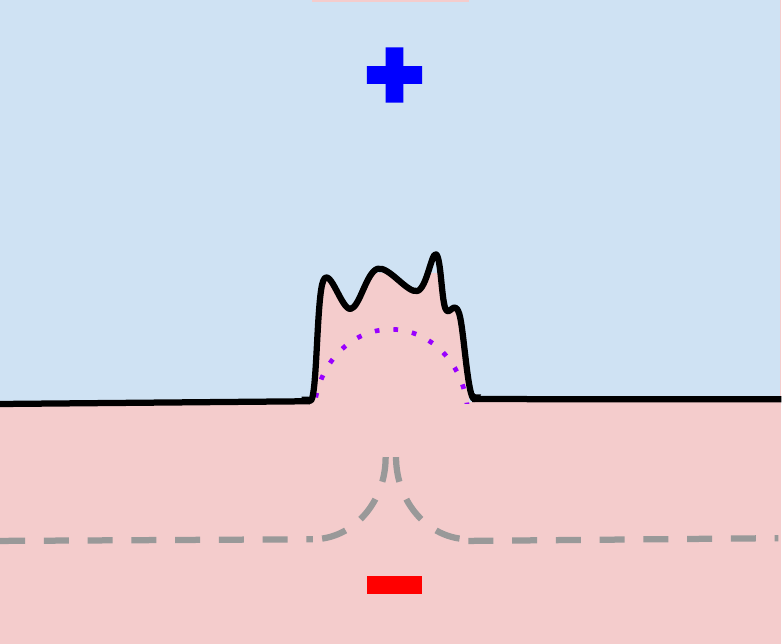}
        \caption{Example $h$ from a class $H$ where $\VC(H)=\infty$, but $\VC(\Heff)=0$. The gray dashed curve is the $\heff$ of any $h$ like the one above. Each such $h$ has the same small gap where the decision boundary remains above the dotted purple curve.}
        \label{fig:VC_inf-0}
    \end{figure}

 \paragraph{Theorem \ref{thm:vc_inf->0}:} \emph{ There exist several model classes $H$ where $\VC(H)=\infty$,
but $\VC(\Heff)=0$.}   

 \begin{proof}
     We provide two examples.
     For the first example, 
     fix $\alpha<1$
     and consider the class of $\R^2$ classifiers $H = \{h_S : S \subseteq \mathbb{Z}^2  \}$
     where $h_S = \one{x \not \in \ball_{\alpha-\delta}(s) \, \, \forall s \in S}$. Each classifier in this model class contains small, non-overlapping negative regions centered at lattice points. Thus, the negative regions of all $h\in H$ will disappear under strategic movement, leaving $\heff(x)=1 \, \, \forall h\in H$. Despite the fact that $H$ shatters $\mathbb{Z}^2$ (and therefore has $\VC(H)=\infty$), $\Heff$ cannot shatter a single point (and therefore has $\VC(\Heff) = 0$). 
     
     Another example where $\heff$ is perhaps more natural can be seen in Fig. \ref{fig:VC_inf-0}. In this case, $H$ is the class of classifiers with narrow gaps whose decision boundaries can be written as 

     \begin{equation}
         h_i\big((x_1, x_2)\big)=
         \begin{cases} 
              1 & |x_1| \geq k , \,\,\,\, x_2 \geq 0 \\
              1 & |x_1| < k , \,\,\,\, x_2 \geq f_i(x_1) \\
              0 & else \\
              
         \end{cases}
     \end{equation}

     where $k$ is some constant less than $\alpha$ and each function $f_i(x_1)$ satisfies $f_i(x_1) \geq \sqrt{\alpha^2-x_1^2}+\sqrt{\alpha^2-k^2}$ for all $-k\leq x_1\leq k$ (i.e., $f_i(x)$ lies above the radius-$\alpha$ arc between $(-k,0)$ and $(k,0)$). In this case, the decision boundary points in the gap will get wiped out on all $h\in H$, leaving the exact same $\heff$. Because $H$ is nearly unrestricted in the range $-k\leq x_1\leq k$, it is capable of shattering infinite points that lie in this range and has infinite VC dimension. On the other hand, because $|\Heff| = 1$, we get $\VC(\Heff) = 0$.
 \end{proof}
    
\subsection{Proof of Theorem \ref{thm:closed_scalin_VC}} \label{apdx:closed_scalin_VC}

\begin{definition}
    A hypothesis class $H$ is said to be \emph{closed to input scaling} if for all $h(x)\in H$ and for all $a>0$, 
    we have that $h_a(x) = h(ax) \in H$ as well.
\end{definition}

Note that the families of polynomials and polytopes are clearly closed to input scaling. Furthermore, most classes of neural networks are as well since $h(ax)$ can be realized by scaling the first weight vector by $a$.

\paragraph{Theorem \ref{thm:closed_scalin_VC}:} \emph{ If $H$ is closed under input scaling and $\VC(H) < \infty$, then $\VC(H) \le \VC(\Heff)$.}

\begin{proof}
    Intuitively, the reason that the VC dimension cannot decrease is that for any $H$ with a VC of dimension $k$ that is closed to input scaling, we can find a set $S$ of $k$ points that can be shattered by $H$, where none of the points $s \in S$ strategically move under any of the $h\in H$ used to shatter $S$. As a result, $\Heff$ will also shatter $S$, so its VC dimension is at least $k$.
    
    Let $\VC(H) = k$
    and $G\subset H$ be a set of $2^k$ classifiers that shatters some set $S$ of $k$ points. Define $c_{DB}(s,g)$ as the minimum strategic cost necessary for point $s$ to obtain a positive classification from classifier $g$ and define $c_{neg}(S,G)$ 
    as the minimum strategic cost for any $s \in S$ to obtain a positive classification from a classifier $g \in G$ that classifies $s$ negatively:
    
    \begin{equation}
        c_{neg}(S,G) = \min_{s\in S, \, g \in G, \, g(s)=0}  c_{DB} (s, g)
    \end{equation}
    
    Since both $S$ and $G$ are finite sets and negative points face a strictly positive cost to obtain a positive classification, we get that there exists a unique value $c_{neg}(S,G) > 0$ . Additionally, because the VC dimension is indifferent to input scaling\footnote{Input scaling only changes the distance between points, but does not change their relative positions, so the class $H$ and its input-scaled class $H'$ will have the same VC dimension.}, 
    if we scale both the classifiers and the points, then $G' = \{g(ax) : g(x) \in G \}$ will shatter $S' =\{ax : x \in S \}$ with $c_{neg}(S',G') = a \cdot c_{neg}(S,G) $. Note that because $H$ is closed to input scaling, $G' \subset H$ as well. Furthermore, if we choose $a = \frac{\alpha +1}{c_{neg}(S,G)}$, then $c_{neg}(S',G') = \alpha +1$. Note that if $c_{neg}(S',G') > \alpha$, no point $s' \in S'$ will move strategically under any $g' \in G'$, meaning that $g'_\Delta(s') = g'(s') \,\, \space \forall g'\in G, \, \, s' \in S'$. As a result, $G'_\Delta$ will also shatter $S'$ so $\VC(G'_\Delta) \geq |S'| = k$. Since $G'_\Delta \subset \Heff$, we get that
    
    \begin{equation*}
        \VC(\Heff) \geq \VC(G'_\Delta) \geq k = \VC(H)
    \end{equation*}
\end{proof}

\subsection{Proof of Theorem \ref{thm:piecewise_VC}} \label{apdx:piecewise_VC}

\paragraph{Theorem \ref{thm:piecewise_VC}:} \emph{Let $H^{m,k}$ be a class of piecewise-linear classifiers with at most m segments and k intersections of linear segments. Then $\VC(\Heff^{m,k}) = O\big(d \, m \, log(m) + \nu_p \, k \, log(k)\big)$, where $\nu_p$ is the VC of the class of $\ell_p$ norm balls, and is $\Theta(d)$ for $p = 1,2,\infty$. }

    \begin{proof}
        We will make use of the following known results:
         \begin{lemma}[Gómez and Kaufmann (2021)]
             \label{lem:ball_VC}
             Let $H^{Ball}$ be the class of all $\ell_p$ balls in $R^d$. Then $H^{Ball}$ has VC dimension $\VC(H^{Ball}) = \Theta(d)$ for $p=$ 1,2, or $\infty$ \cite{Gomez2021L1_L2_Linf_ball_VC}.
         \end{lemma}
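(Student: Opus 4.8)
The plan is to prove $\VC(H^{\mathrm{Ball}}) = \Theta(d)$ by establishing a matching lower bound $\Omega(d)$ uniformly in $p$ and an upper bound $O(d)$ case by case. For the lower bound I would exhibit a single shattered set of size $d$ that works for every $p \in \{1,2,\infty\}$: take the standard basis vectors $e_1,\dots,e_d$. Given a target subset $S \subseteq \{1,\dots,d\}$, center a ball at $c$ with $c_j = t$ for $j \in S$ and $c_j = 0$ otherwise, for a fixed small $t \in (0,\tfrac{1}{2})$. A direct computation shows every $e_i$ with $i \in S$ has the same distance to $c$, strictly smaller than the common distance of every $e_i$ with $i \notin S$ (for finite $p$ this reduces to the elementary inequality $(1-t)^p < 1 + t^p$, and for $p=\infty$ to $1-t<1$); choosing the radius strictly between these two values carves out exactly $\{e_i : i \in S\}$. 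Hence $H^{\mathrm{Ball}}$ shatters $d$ points and $\VC(H^{\mathrm{Ball}}) \ge d$.

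For the upper bound I would treat the three norms separately. For $p=2$, write the ball as the positivity set of $g_{c,r}(x) = r^2 - \|c\|_2^2 + 2\langle c, x\rangle - \|x\|_2^2$, which lives in the $(d+2)$-dimensional function space spanned by $\{1, x_1,\dots,x_d, \|x\|_2^2\}$; Dudley's bound on positivity sets of a finite-dimensional function space then gives $\VC \le d+2 = O(d)$. For $p=\infty$, each ball is an axis-aligned cube, hence a special case of an axis-aligned box, so $\VC \le 2d$ from the known VC dimension of hyperrectangles. Both cases are therefore $O(d)$.

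The delicate case is $p=1$, where the ball $\{x : \sum_j |x_j - c_j| \le r\}$ is neither the positivity set of a fixed finite-dimensional function space (the absolute values depend nonlinearly on the center) nor reducible to a box in a fixed frame once $d \ge 3$. My approach would be to bound the number of dichotomies on $m$ points via a two-stage arrangement argument in the $(d+1)$-dimensional parameter space of $(c,r)$: first partition that space by the $md$ axis-parallel hyperplanes $\{c_j = x^{(i)}_j\}$, which fixes all signs $\sign(x^{(i)}_j - c_j)$ and yields at most $(m+1)^d$ cells; then observe that inside each cell the membership test for every point becomes a single linear inequality in $(c,r)$, so the dichotomies realizable within a cell are those of $m$ halfspaces in $\R^{d+1}$, numbering $O(m^{d+1})$. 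Multiplying gives $O(m^{2d+1})$ total dichotomies and hence $\VC = O(d \log d)$.

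The main obstacle is removing this spurious logarithmic factor to reach the claimed $\Theta(d)$: the $(m+1)^d$ sign-cell count is precisely what injects the extra $\log d$, and shaving it requires a finer accounting of how the per-cell halfspace dichotomies overlap across cells, equivalently exploiting that the offsets $r + \langle \epsilon, c\rangle$ of the $2^d$ defining halfspaces of an $\ell_1$ ball all lie in a common $(d+1)$-dimensional affine family rather than varying freely. This sharp bound is exactly the content of Gómez and Kaufmann's analysis, which I would invoke to close the $p=1$ case; combined with the lower bound above, this yields $\VC(H^{\mathrm{Ball}}) = \Theta(d)$ for all three norms.
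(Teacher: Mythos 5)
The paper contains no proof of this lemma at all: it is imported verbatim as a known result from Gómez and Kaufmann \citep{Gomez2021L1_L2_Linf_ball_VC} and used as a black box in the proof of Theorem~\ref{thm:polytope_VC}. Your proposal is correct and takes a genuinely different, largely self-contained route. The lower bound via shattering $e_1,\dots,e_d$ checks out: with your center $c$, in-subset points sit at common $\ell_p$-distance $\bigl((1-t)^p+(|S|-1)t^p\bigr)^{1/p}$ and out-of-subset points at $\bigl(1+|S|t^p\bigr)^{1/p}$, so separation reduces exactly to $(1-t)^p<1+t^p$ (and to $1-t<1$ for $p=\infty$). Your upper bounds for $p=2$ (Dudley's bound on positivity sets of the $(d+2)$-dimensional span of $1,x_1,\dots,x_d,\|x\|_2^2$) and $p=\infty$ (cubes as a subclass of axis-aligned boxes, $\VC\le 2d$) are the standard sharp arguments. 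Your $p=1$ count is also sound: $(m+1)^d$ sign cells, inside each of which membership of every sample point becomes one of $m$ linear inequalities in $(c,r)\in\R^{d+1}$, gives $O(m^{2d+1})$ dichotomies and hence $\VC=O(d\log d)$, and you are right, and commendably explicit, that shaving the $\log d$ is precisely the content of the cited paper, which you invoke only at that point. It is worth noting that for the lemma's sole uses in this paper your weaker bound already suffices: plugging $O(d\log d)$ into the proof of Theorem~\ref{thm:polytope_VC} makes the union-of-$k$-balls class have VC dimension $O(kd\log d\log k)$, which is absorbed by the $O(kd^2\log k)$ term coming from $\VC(P^k)$, so the theorem's conclusion is unchanged. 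The trade-off is the usual one: the paper's bare citation is the shortest path, while your reconstruction makes the lemma verifiable without the reference for $p\in\{2,\infty\}$, isolates exactly which part of the $\ell_1$ case is nontrivial, and incidentally shows the downstream result does not depend on the sharp $\Theta(d)$ bound.
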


         \begin{lemma}[Vaart and Wellner (2009)]
             \label{lem:union_intersect_VC}
             Let classes $C_1, C_2, ..C_k$ have VC dimensions $V_1, V_2, ... V_k$ and $V = \sum_{i=1}^k V_i$. Let $C_U = \bigsqcup_{j=1}^m C_j \equiv \{\bigcup_{j=1}^m h_j : h_j \in C_j \}$, and let $C_I =\rotatebox[origin=c]{180}{$\bigsqcup$}_{j=1}^m C_j \equiv \{\bigcap_{j=1}^m h_j : h_j \in C_j \}$. Then both $C_U$ and $C_I$ have VC dimension upper bounded by $O\big(V\cdot log(k)\big)$ \citep{Vaart2009Intersect_Union_VC}.
         \end{lemma}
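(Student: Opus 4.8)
The plan is to bound the \emph{growth function} (shatter function) of $C_U$ and $C_I$ and then invert the Sauer--Shelah bound to recover a VC bound. Write $\Pi_{C}(n)$ for the maximum number of distinct dichotomies a class $C$ induces on a set of $n$ points. The intersection case reduces to the union case: complementation preserves VC dimension (flipping every label leaves shattering unchanged, so $\VC(\{\complement c : c \in C_i\}) = V_i$), and by De Morgan $C_I = \{\,\complement\!\bigcup_i \complement c_i\,\}$, so $\VC(C_I)$ equals the VC dimension of the union of the complemented classes, which carry the same indices $V_i$. Hence it suffices to treat $C_U$.

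First I would establish submultiplicativity of the growth function: on any fixed set $P$ of $n$ points, the trace $(\bigcup_i c_i)\cap P$ is completely determined by the tuple of traces $(c_1\cap P,\ldots,c_k\cap P)$, so the number of distinct traces of $C_U$ on $P$ is at most the product of the numbers of traces of the individual $C_i$. This gives
\begin{equation}
\Pi_{C_U}(n) \le \prod_{i=1}^k \Pi_{C_i}(n).
\end{equation}
Applying Sauer--Shelah to each factor, in the tight form $\Pi_{C_i}(n) \le (en/V_i)^{V_i}$ valid for $n \ge V_i$, yields $\Pi_{C_U}(n) \le \prod_i (en/V_i)^{V_i}$.

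The key step that produces a $\log k$ rather than a $\log V$ factor is to combine the product using convexity. Taking logarithms and writing $p_i = V_i/V$, Jensen's inequality applied to the concave function $\log$ gives $\sum_i V_i\log(en/V_i) = V\,\mathbb{E}_{p}[\log(en/V_i)] \le V\log\big(\sum_i p_i \cdot en/V_i\big) = V\log(ekn/V)$, whence
\begin{equation}
\Pi_{C_U}(n) \le \left(\tfrac{ekn}{V}\right)^{V}.
\end{equation}
If $C_U$ shatters $n$ points then $\Pi_{C_U}(n) = 2^n$, so $2^n \le (ekn/V)^V$, i.e. $n \le V\log_2(ekn/V)$. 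Solving this implicit inequality by the standard fixed-point argument (substitute $n = Vm$, so that $m \le \log_2(ekm)$, forcing $m = O(\log k)$) gives $n = O(V\log k)$, which is the claimed bound; the same bound holds for $C_I$ by the reduction above.

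The main obstacle is not any single estimate but getting the $\log k$ factor sharp: a naive bound $\Pi_{C_i}(n)\le (n+1)^{V_i}$ would only give $\Pi_{C_U}(n)\le (n+1)^V$ and hence the weaker $O(V\log V)$. Obtaining $O(V\log k)$ requires both the tighter Sauer--Shelah form $(en/V_i)^{V_i}$ and the Jensen combining step, after which one must solve the resulting transcendental inequality carefully (the iteration converges because $\log_2 m$ grows slower than $m$). Everything else --- submultiplicativity of the growth function and the complementation reduction for intersections --- is routine.
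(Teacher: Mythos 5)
The paper offers no proof of this lemma---it is imported verbatim, with citation, from van der Vaart and Wellner (2009) for use in Theorem~\ref{thm:polytope_VC}---and your blind argument is correct and is essentially the standard proof of that cited result: submultiplicativity of the growth function, the refined Sauer--Shelah bound $\Pi_{C_i}(n) \le (en/V_i)^{V_i}$, the Jensen step combining the product into $(ekn/V)^{V}$, and inversion of the resulting transcendental inequality, with the complementation/De Morgan reduction handling $C_I$. The only omissions are routine edge cases (classes with $V_i=0$ induce a single trace and can be dropped; if the shattered set has $n<\max_i V_i$ then $n\le V$ holds trivially, so the restriction $n\ge V_i$ in Sauer--Shelah is harmless), neither of which affects the $O(V\log k)$ bound.
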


         We note that by combining Lemmas \ref{lem:ball_VC} and \ref{lem:union_intersect_VC}, the class $H^{Ball}_k$ of the union of up to $k$ $\ell_p$ balls has a VC dimension of $O\big(\nu_p \,\, k \,\, log(k)\big)$, which becomes $O\big(d \,\, k \,\, log(k)\big)$ in the case of either the $\ell_1$,  $\ell_2$, or $\ell_\infty$ norm cost functions. Additionally, the class $H^{Lin}_m$ of piecewise linear models with up to $m$ segments has a VC dimension of $O \big( d \,\, m \,\, log(m) \big)$. 
         \footnote{Note that combining the two technically proves the bounds for the classes of the union of \emph{exactly} $k$ balls and \emph{exactly} $m$ segments, respectively, though we can easily expand them to the classes of \emph{up to} $k$ balls and \emph{up to} $m$ segments by recognizing that some of the balls/segments from classes $C_1, C_2, ... C_k$ may be the exact same.}

        Because linear classifiers map to new shifted linear classifiers, all we have to do to determine the effective classifier of a piecewise-linear $h$ is to combine (take the union of) the effective segments of each linear segment in $h$ and the potential $\ell_p$ ball artifacts around each intersection. Note that, as seen in Fig. \ref{fig:polytopes} (Left), some of the effective segments may be wiped out in this process, and, as seen in Fig. \ref{fig:polytopes} (Right), only intersections that open to the positive region will induce $\ell_p$ ball artifacts in $\heff$. Therefore, the $\heff$ of a piecewise linear $h$ with $m$ segments and $k$ intersections will be the intersection of a piecewise linear classifier $g$ with up to $m$ segments and up to $k$ $\ell_p$ ball artifacts. As a result, we can write $\Heff \subseteq G \,\cap \, H^{Ball}_k$, where $G \subseteq H^{Lin}_m$. Therefore:

        \begin{align*}
             \VC(\Heff) &\leq \VC \big(  G \, \cap H^{ball}_k \big) \\
             &= O\big(\VC(G) \, +  \VC(H^{ball}_k) \big) \\
             &\leq O\big(\VC(H^{Lin}_m) \, +  \VC(H^{ball}_k) \big) \\
             &= O\big(d \, m \, log(m) + \nu_p \,\, k \,\, log(k)\big) \\
         \end{align*}

        Consequently, if $H$ is learnable, then $\Heff$ is too. 
        
    \end{proof}

\subsection{Proof of Theorem \ref{thm:polytope_VC}} \label{apdx:Pk_l1/l2/linf_VC}

\paragraph{Theorem \ref{thm:polytope_VC}:} \emph{ Consider either the $\ell_1$, $\ell_2$, or $\ell_\infty$ costs and let $H \subseteq P^k$, where $P^k$ is the set of all $k$-vertex polytopes in $R^d$ and has $\VC(P^k) = O(d^2 \, k \,\log k) \citep{Kupavskii2020Polytope_VC}$.
Then $\VC(\Heff) = O(d^2 \, k \,\log k)$.}

    \begin{proof}
          As seen in Appendix \ref{apdx:Pk_l1/linf}, the effective classifier of a k-vertex polytope is the intersection of a larger radius (or smaller for negative inside polytopes) k-vertex polytope and $k$ $\ell_p$ balls centered at each of the vertices. Therefore, $\Heff \subset P^k \, \cap H^{ball}_k$, so by Lemmas \ref{lem:ball_VC} and \ref{lem:union_intersect_VC} (and the VC dimension of $H^{ball}_k$ derived in Appendix \ref{apdx:piecewise_VC}):

         \begin{align*}
             \VC(\Heff) &\leq \VC \big(  P^k \, \cap H^{ball}_k \big) \\
             &= O\big(\VC(P^k) \, +  \VC(H^{ball}_k) \big) \\
             &= O\big(d^2 \, k \, log(k)\big) + O\big(d \,\, k \,\, log(k)\big) \\
             &= O\big(d^2 \, k \, log(k)\big)
         \end{align*}
    \end{proof}

\subsection{Proof of Proposition \ref{prop:strat_acc_majority_class}} \label{apdx:strat_acc_majority_rate}

    \paragraph{Proposition \ref{prop:strat_acc_majority_class}:} \emph{ For all $d$,
there exist (non-degenerate) distributions on $\R^d$
that are realizable in the standard setting,
but whose maximum strategic accuracy is the majority class rate,
$\max_y p(y)$.}
    
    \begin{proof}
         Fix $d$ and choose any arbitrary set of positive points $\X_1$. When $d=1$, the set of negative points $\X_0$ can be constructed by placing a negative point $\delta$ to the left and right of each positive point. When $d>1$, this generalizes to selecting $d+1$ points around each positive point $x \in \X_1$ that form an $\R^d$ simplex $T^d_\delta(x)$\footnote{We can assume without loss of generality that $\bigcap_{x_1, x_2\in \X_1} T^d_\delta(x) = \varnothing \,\, \forall x_1, x_2\in \X_1$ since there are infinite options for each $T^d_\delta$, and we can just pick ones that do not overlap.}. In this case, the classifier $h(x) = \one{x \in \X_1}$ achieves perfect standard accuracy, and the classifier that classifies all points as negative will achieve a strategic accuracy of $\frac{d+1}{d+2}$.
         Note that for each point $\xalt$ in space to which $x \in \X_1$ can strategically move, there exists at least one
         $x' \in T^d_\delta(x)$ that can as well. 
         Therefore, for each positive point $x \in \X_1$ that achieves a correct positive classification by strategically moving to $\xalt$ ($\xalt$ can be the same as $x$ if it doesn't move), there is a unique negative point $x' \in T^d_\delta(x)$ that achieves an incorrect negative classification by strategically moving to $\xalt$, so the strategic accuracy does not benefit from correctly classifying any positive points. As a result, the maximum strategic accuracy is achieved by correctly classifying each negative point and incorrectly classifying each positive point, which gives the majority class rate. 
         
    \end{proof}

\subsection{Proof of Proposition \ref{prop:half_start_acc}} 
\label{apdx:strat_acc->1/2}

        \paragraph{Proposition \ref{prop:half_start_acc}:} \emph{ There exists a distribution that is realizable in the standard setting, but whose accuracy under any $\heff$ is at most $0.5 + \frac{1}{\lfloor 2\alpha +1 \rfloor}$.}
    
    \begin{proof}
         Consider the set of lattice points in $\R^1$
         with alternating labels. Although $f(x) = sin(\pi x)$ achieves perfect standard accuracy on this distribution, we will show that the maximum strategic accuracy for any model class is $0.5 + \frac{1}{\lfloor 2\alpha +1 \rfloor}$, which approaches 0.5 as $\alpha \to \infty$.
         Note that in $R^1$, the minimum length of a positive interval is $2\alpha$. Additionally, for each positive interval $I$ of length $k$, $I$ will contain either $\lfloor k \rfloor$ or $\lfloor k \rfloor +1 $ points and include at most one more positive point than negative points. Therefore, each interval $I$ will, at best, be correct on $\frac{\lfloor k \rfloor +1}{2}$ out of $\lfloor k \rfloor$ points, which is optimized for $k = 2\alpha$ (i.e., the minimum possible value of $k$). Furthermore, because the negative intervals may also only have at most one more negative point than positive points, the negative intervals are optimized when they include only a single negative point. As such, an optimal strategic classifier $h$ consists of repeated positive intervals that include $\lfloor 2\alpha \rfloor$ points, followed by a negative interval that includes one negative point. For each positive and negative interval group, $h$ correctly classifies at most $\frac{\lfloor 2\alpha \rfloor+1}{2} + 1$ out of $\lfloor 2\alpha +1\rfloor$ points correctly for an overall maximum accuracy of $\frac{\frac{\lfloor 2\alpha +1\rfloor}{2} + 1}{\lfloor 2\alpha +1\rfloor} = 0.5 + \frac{1}{\lfloor 2\alpha +1 \rfloor}$. 
         \end{proof}
    

\section{Additional results}

\subsection{Other Cost functions}
\label{apdx:other_costs}
    Our low-level analysis (Sec. \ref{sec:mechanisms}) begins with a focus on the $\ell_2$-norm cost, both because it is the most popular choice in the literature, and because we found it to offer the best intuition. Nonetheless, most of our results hold more broadly, and in particular, all theorems and propositions in Sec. \ref{sec:classes} generalize to any $\ell_p$ ($p\geq1$) norm:

    \begin{itemize}
        \item Prop. \ref{prop:neg_polytopes} applies generally, since negative-inside polytopes are still mapped to smaller negative-inside polytopes under general $\ell_p$ cost functions.

        \item Thm. \ref{thm:vc_inf->0}, Cor. \ref{cor:non-learnable}, and the example in Fig. \ref{fig:vc+apx} of increasing VC dimension remain under minor tweaks to the constructions that take into account the differences in shapes of $\ell_p$ balls for different values of $p \geq 1$.

        \item Thm. \ref{thm:closed_scalin_VC} holds for any cost function under which the maximum Euclidean distance any user can strategically move is bounded by a constant. This holds not only for $\ell_p$ norms, but also for most reasonable cost functions.

        \item Thm. \ref{thm:piecewise_VC} is already stated for general $\ell_p$ norms.
        
        \item Thm. \ref{thm:polytope_VC} is stated for $\ell_1$, $\ell_2$, and $\ell_\infty$. However, a more general result can be stated with dependence on the VC dimension of the cost function balls (i.e., the set $C = \{ c(x,x') \leq \alpha \, \, \forall x \in \R^d\} $). The bound would then be $O( d^2 \, k \, log(k) + \VC(C) \,)$, and would support any plug-in results for $\VC(C)$.

        \item Prop. \ref{prop:strat_acc_majority_class} and Prop. \ref{prop:half_start_acc} also remain under minor tweaks to the constructions that take into account the differences in shapes of $\ell_p$ balls for different values of $p \geq 1$.

        \item Obs. \ref{obs:contained_region}, \ref{obs:num_connected_regions}, \ref{obs:non_bijective}, and Cor. \ref{cor:non-universal} hold for $\ell_p$ norms in general as well, since they are not shape dependent, but rather results of movement within a ball.
    \end{itemize}

    Thus, our claims regarding the fact that non-learnable classes can become learnable, the loss of universality, and the limits to approximation all hold more generally.

    In terms of asymmetric norms such as the Mahalanobis norm, while these norms would complicate the curvature analysis (and subsequent impossibility results that build off them), all results in Sec. \ref{sec:classes} would still hold. This is because while the shape of balls changes (like with other $\ell_p$ balls), fundamental properties like lines mapping to other lines do not. As such, the results in Sec. \ref{sec:classes} would still hold for the exact same reasons as delineated above for $\ell_p$ balls.

    As for feature-dependent (also known as “instance-wise”) costs, these are qualitatively different from the conventional global costs, in a way that can make the claims become irrelevant or even degenerate. For example, Sundaram et al. \cite{sundaram2021pac} show that on the one hand, even for linear $H$, instance-wise cost function can cause $\VC (\Heff) = \infty$, and on the other hand, for separable costs (which are instance-wise) it holds that $\VC (\Heff) \leq 2$ for any $H$.

    \begin{figure}[t!]
        \centering
        \raisebox{18pt}{\includegraphics[height=127pt]{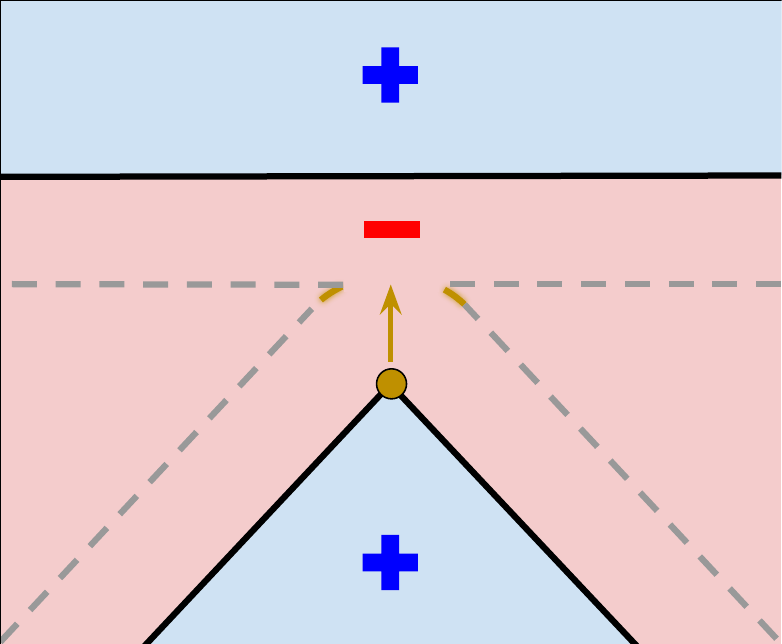}}
        \,\,\,\,\,\,
        \includegraphics[height=145pt] {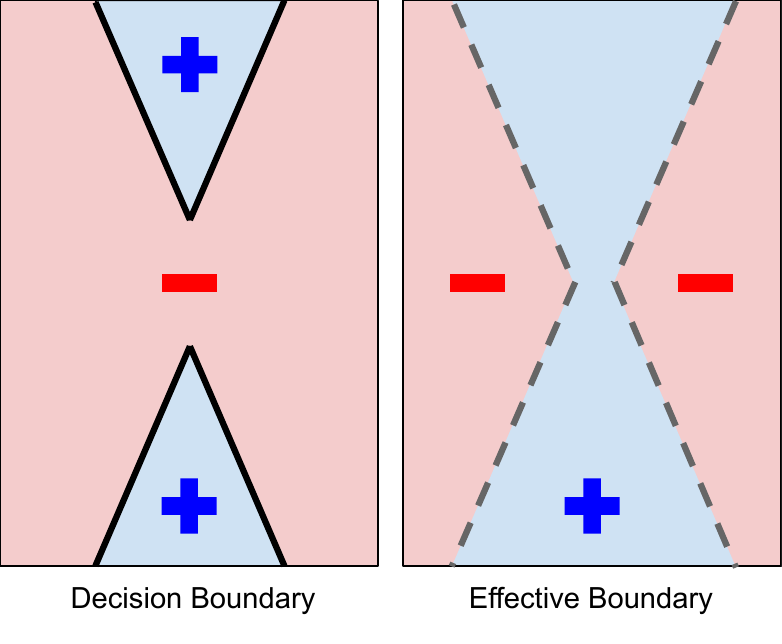}
        
        \caption{\textbf{(Left)}: Partial expansion. \textbf{(Right)}: Regions merging/splitting}
        \label{fig:partial_expansion}
    \end{figure}
    
\subsection{Partial Expansion}
\label{apdx:partial wipeout}

    As seen in Fig. \ref{fig:partial_expansion} (Left), partial expansion may occur if part of the expansion artifact is discarded through \emph{indirect wipeout}. In the figure, part of the artifact (gold) is within $\alpha$ of the upper positive region and therefore does not become part of the decision boundary.
    
\subsection{Merging/Splitting Regions}
\label{apdx:split_neg_region}

    As seen in Fig. \ref{fig:partial_expansion} (Right), two positive regions can merge together when they are separated by a narrow pass in $h$. If the narrow pass is part of a larger negative region, the region will be split into two, which can potentially increase the complexity of the decision boundary (for example, in Fig. \ref{fig:partial_expansion} (Left)).
    
\subsection{Types of Impossible Effective Classifiers} 
\label{apdx:impossible_DBs}

As stated in Sec. \ref{sec:impossible_heff}, we identify four "types" of classifiers $g$ that cannot exist as effective classifiers. Formally, these classifiers are ones which include either:

\begin{enumerate}
    \item A small continuous region of positively classified points $C$ where $\exists \xalt \in R^d$ such that $x \in \ball_\alpha(\xalt) \, \forall x\in C$.

    \item A narrow continuous region $C$ where all points are classified as positive and are $\alpha$-close to the decision boundary.

    \item A smooth point on the decision boundary which has signed curvature $\curv > 1/\alpha$ (or $\curvmax > 1/\alpha$ when $d>2$).

    \item A locally convex intersection of piecewise linear or hyperlinear segments. 
\end{enumerate}

Note that the first case is just a special case of the second case. Nonetheless, we have separated them to emphasize that the impossible positive regions can either be fully or partially contained.

\paragraph{Cases 1 and 2: Small/Narrow Positive Regions}

Let $\xorig \in C$ be some point in a small positive region in $g$. For $\xorig$ to be classified as positive, there must be a point $x' \in \ball_\alpha(\xorig)$ which is classified positively by $h$ and not reachable by any $\xalt \in \X_0(h)$ (see requirements 1 and 2 in Appendix \ref{apdx:no_heff_1}). However, any point $x' \in \ball_\alpha(\xorig) \, \cap C$ is reachable by $\X_0(h)$ and any point $x' \in \ball_\alpha(\xorig) \, \cap C^{c}$ is also reachable by $\X_0(h)$ since it is closer to some $\xalt \in \X_0(h)$ than to $\xorig$. Therefore, there are no points  $x' \in \ball_\alpha(\xorig)$ which are not reachable by any $\xalt \in \X_0(h)$, so $g$ cannot be an effective classifier.

\paragraph{Case 3: Large positive curvature } 

We will show that any $g$ with directional curvature greater than $1/\alpha$ anywhere on the decision boundary will fail Prop. \ref{prop: no_heff_2}. In other words, if the directional curvature at point $\xorig$ is greater than $1/\alpha$ , then the point $\xorig+\alpha \hat{n}_\xorig$ will be reachable by some point that is negatively classified by $g$. In order for the point $\xorig+\alpha \hat{n}_\xorig$ to be reachable by $\xorig$ but not any point negatively classified by $g$, the ball $B_\alpha(\xorig+\alpha \hat{n}_\xorig)$ --- which is the set of points that can strategically move to $\xorig+\alpha \hat{n}_\xorig$ --- must not intersect $g$ and must also be strictly tangent to it at $\xorig$. However, if the curvature at point $\xorig$ is greater than $1/\alpha$, then the maximum radius $r_{insc}$ of a ball that is strictly tangent to $\xorig$ is less than $\alpha$, so $g$ does not meet Prop. \ref{prop: no_heff_2}. 

In $\R^2$, Lemma \ref{lem:insc_smaller_osc} proves that the radius $r_{insc}$ of the maximum inscribed circle at $\xorig$ is no bigger than the radius $r_{osc} = \frac{1}{\curv}$ of the osculating circle at $\xorig$. We therefore have that $r_{insc} \leq r_{osc} = \frac{1}{\curv} < \alpha$. In higher dimensions (see Appendix \ref{apdx:high-dim_curv}), there is no longer a single normal curvature. Instead, we define the set $\Curv$ of curvatures in each plane $p \in P$ defined by the normal vector and some tangent vector at $\xorig$. Note that $B_\alpha(\xorig+\alpha \hat{n}_\xorig)$ is only strictly tangent to the decision boundary at $\xorig$ if it's intersection with each $p \in P$ (a circle) is strictly tangent at $\xorig$ to the curve obtained by intersecting the decision boundary with the plane $p$. Since each of these cases are in $\R^2$, each one will only hold if the curvature at $\xorig$ in $p$ is no more than $1/\alpha$. Therefore, if $ \curvmax = \sup(\Curv) > 1/\alpha$ for any $\xorig$ on the decision boundary, then the maximum radius of a ball that is strictly tangent to $\xorig$ is less than $\alpha$, so $g$ does not meet Prop. \ref{prop: no_heff_2}.

\paragraph{Case 4: Piecewise Linear Segments} 
    \begin{figure}[t!]
        \centering
        \includegraphics[height=130pt]{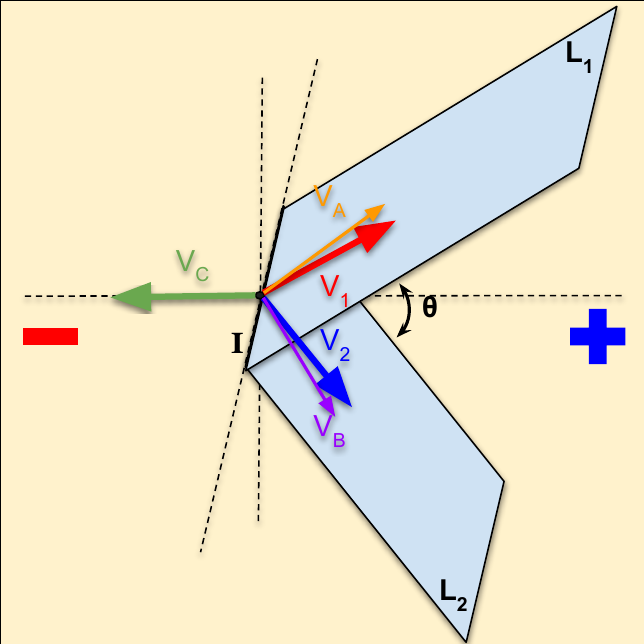}
        \caption{Diagram for proof of case 4.}
        \label{fig:piecewise_proof}
    \end{figure}

    \begin{lemma}
    \label{lem:reachable}
        The set of points that are reachable by point $x$ but not point $x+\delta\vec{v}$ for unit vector $\vec{v}$ and $\delta \to 0$ all satisfy $\vec{v} \cdot x \leq 0$.
    \end{lemma}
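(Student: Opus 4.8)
The plan is to reduce the statement to explicit Euclidean ball geometry. Reading the lemma with the fixed source point placed at the origin (valid since the $\ell_2$ cost is translation invariant), ``reachable by the source'' means $x \in \ball_\alpha(0)$, i.e.\ $\|x\|_2 \le \alpha$, while ``not reachable by the perturbed source $\delta\vec v$'' means $x \notin \ball_\alpha(\delta\vec v)$, i.e.\ $\|x - \delta\vec v\|_2 > \alpha$; under this normalization the quantity in the conclusion is literally $\vec v \cdot x$. The set in question is thus $\ball_\alpha(0)\setminus \ball_\alpha(\delta\vec v)$, and the goal is to describe its limit as $\delta \to 0^+$.

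First I would show that the limiting set lives on the sphere. If $\|x\|_2 < \alpha$ strictly, then by continuity $\|x - \delta\vec v\|_2 < \alpha$ for all sufficiently small $\delta$, so such an $x$ is reachable by the perturbed source and drops out of the difference set in the limit. Hence it suffices to treat points with $\|x\|_2 = \alpha$.

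Then comes the core computation. For $\|x\|_2 = \alpha$, and using $\|\vec v\|_2 = 1$, expanding
\begin{equation}
\|x - \delta\vec v\|_2^2 = \alpha^2 - 2\delta(\vec v\cdot x) + \delta^2
\end{equation}
shows that the non-reachability inequality $\|x - \delta\vec v\|_2^2 > \alpha^2$ reduces, after cancelling $\alpha^2$ and dividing by $\delta > 0$, to $\delta > 2(\vec v\cdot x)$. This holds for all sufficiently small $\delta > 0$ exactly when $\vec v\cdot x \le 0$, whereas any $x$ with $\vec v\cdot x > 0$ re-enters $\ball_\alpha(\delta\vec v)$ as soon as $\delta < 2(\vec v\cdot x)$. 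Letting $\delta \to 0^+$ therefore identifies the limiting difference set with the closed hemisphere $\{x : \|x\|_2 = \alpha,\ \vec v\cdot x \le 0\}$, which is the claim.

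The algebra here is routine; the real care is in pinning down the meaning of ``$\delta \to 0$'' as ``non-reachable for all sufficiently small $\delta > 0$,'' and in handling the equatorial case $\vec v\cdot x = 0$, where non-reachability holds for \emph{every} $\delta > 0$, so these points are correctly included and match the closed inequality $\le 0$. It is also worth flagging that the clean dot-product form is special to the $\ell_2$ cost: for the $\ell_1$ or $\ell_\infty$ costs the same ``points left behind by a small shift'' picture still holds, but the condition $\vec v\cdot x \le 0$ would be replaced by the corresponding supporting-halfspace condition for the relevant unit ball.
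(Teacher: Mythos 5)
Your proof is correct and essentially matches the paper's: both arguments amount to expanding the squared $\ell_2$ distance from the candidate point to the shifted source and observing that a positive component along $\vec v$ makes the point strictly closer to $x+\delta\vec v$ for small $\delta$ (the paper does this via an orthonormal basis with $v_1=\vec v$, you via translating the source to the origin). Your version is a bit more careful in pinning down the meaning of $\delta\to 0$ and in showing the limiting set is exactly the closed hemisphere, but the underlying idea is the same.
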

    
    \begin{proof}[Proof (Lemma 4).]
        Define an orthonormal basis $v_1, v_2 ... v_d$ with $v_1=v$. For any point $\xalt$ such that $\vec{v} \cdot \xalt > 0$, the vector connecting $x$ and $\xalt$ can be written as $a_1v_1+a_2v_2...+a_dv_d$ while the vector connecting $x+\delta\vec{v}$ and $\xalt$ can be written as $(a_1-\delta)v_1+a_2v_2...+a_dv_d$. Since $0<\delta<a_1$ (note that $\vec{v} \cdot \xalt > 0$ so $a_1>0$), $\xalt$ is closer to $x+\delta\vec{v}$ than $x$. Therefore, any point that is reachable by $x$ and not by $x+\delta\vec{v}$ must satisfy $\vec{v} \cdot x \leq 0$.
    \end{proof} 
    
  \begin{proof}[Proof (Case 4).]  
    As depicted in Fig. \ref{fig:piecewise_proof}, let $L_1$ and $L_2$ be two $R^d$ hyperplane segments that intersect on the decision boundary of $g$ at the $R^{d-1}$ hyperplane segment $I$, and let $\xorig$ be some point on I. Let $0<\theta<\pi$ be the angle between $L_1$ and $L_2$ on the positive side. Let $\vec{v}_1$ be the unit vector in $L_1$ that runs through $\xorig$ and is orthogonal to $I$; $\vec{v}_2$ be the unit vector in $L_2$ that runs through $\xorig$ and is orthogonal to $I$; and $\vec{v}_C = -\frac{\vec{v}_1+\vec{v}_2}{||\vec{v}_1+\vec{v}_2||}$. Note that for $\delta \to 0$ the points $x_1 = \xorig + \delta \vec{v}_1$ and $x_2 = \xorig + \delta \vec{v}_2$ are on the decision boundary while the point $x_{12} = \xorig + \delta \vec{v}_C$ is classified as positive and $x_C = \xorig - \delta \vec{v}_C$ is classified as negative. Additionally, if we define $\vec{v}_A$ as the the result of $\vec{v}_1$ being rotated $0<\epsilon < \frac{\pi-\theta}{2}$ around $I$ away from the positive region and $\vec{v}_B$ as the result of $\vec{v}_2$ being rotated $\epsilon$ around $I$ away from the positive region, then both $x_A = \xorig + \delta \vec{v}_A$ and $x_B = \xorig + \delta \vec{v}_B$ are classified as negative. 
    
    Assume that there exists a classifier $h$ associated with the effective classifier $g$. Therefore, there exists a point $\xalt$ that is reachable by $\xorig$ but not by $x_A$, $x_B$, or $x_C$ (which are classified as negative by $g$). By Lemma \ref{lem:reachable}, any point $\xalt$ that is reachable under strategic movement by point $\xorig$ but not $x_A$, $x_B$, or $x_C$ must at least satisfy $\xalt \cdot \vec{v}_A \leq 0, \text{     } \xalt \cdot \vec{v}_B \leq 0, \text{ and } \xalt \cdot \vec{v}_C \leq 0$. However, because $\vec{v}_A + \vec{v}_B$ points in the same direction as $\vec{v}_1 + \vec{v}_2$ --- and therefore the opposite direction of $\vec{v}_C$ --- we must have that $ \xalt \cdot \vec{v}_C = 0$ and $ \xalt \cdot (\vec{v}_A + \vec{v}_B) = 0$, which implies that both $ \xalt \cdot \vec{v}_A = 0$ and $ \xalt \cdot \vec{v}_B = 0$. However, this implies that the vector $\vec{\xalt}$ is orthogonal to both $\vec{v}_A$ and $\vec{v}_B$, which can only happen if $\xalt$ lies on $I$ or if the angle between $\vec{v}_A$ and $\vec{v}_B$ is either $0$ or $\pi$. In the former case, $\xalt$ is then reachable by the negatively classified point that is just over the decision boundary from it. The latter case is also not an option because the angle between $\vec{v}_A$ and $\vec{v}_B$ is $\theta_{AB} = \theta + 2\epsilon$, and we have ensured that $0<\theta + 2\epsilon < \theta + 2\frac{\pi-\theta}{2} = \pi$. In either case, it is impossible for $g$ to classify $\xorig$ as positive while classifying $x_A$, $x_B$, and $x_C$ as negative, so there is no $h$ such that $g=\heff$.
\end{proof}

\subsection{Non-bijectiveness of Classifier Mapping}
\label{apdx:inf_h_to_heff}
    
    In Sec. \ref{sec:impossible_heff}, we show that the mapping $h \mapsto \heff$ is not surjective by showing that there are many potential $\heff$ with no associated $h$. Here, we will show that $h \mapsto \heff$ is also not injective since many classifiers map to the same effective classifier. Furthermore, we show that there are actually infinite $h$ that map to nearly all
    possible $\heff$. Though there are many more interesting examples, the simplest case of non-injectiveness can be seen by considering an $h$ that includes a continuous positive region $C$ with an infinite number of points. Define the classifier $h^\xorig$ as the classifier that is identical to $h$ except that $h^\xorig$ classifies $\xorig\in C$ as negative instead of positive. Because the point $\xorig$ will achieve a positive classification through strategic movement, $h^\xorig_\Delta$ is unaffected by $\xorig$ and is the same as $\heff$. Moreover, $h^\xorig_\Delta = \heff$ is true of any  $\xorig\in C$, so as long as any $h$ that maps to $\heff$ contains a positive region with infinite points, then $\heff$ will have infinite $h$ that all map to it. Because positive points in $h$ map to positive radius-$\alpha$ balls in $\heff$, any $\heff$ with a positive region $C$ such that $\exists x: \,\, \ball_\alpha(x) \subset C$  will have infinite $h$ that all map to it. Note that since all positive regions in $\heff$ have a minimum size requirement $\exists x: \,\, \ball_\alpha(x) \subseteq C$, the aforementioned requirement covers all but the $\heff$ with absolute minimum-size positive regions.
    
\subsection{Classes with $H=\Heff$} \label{apdx:H=Heff}

    Though $H\not=\Heff$ for most classes $H$, there are a few rather simple $H$ other than the class of linear functions where $H=\Heff$. For example, under any $\ell_p$ norm cost, an $\ell_p$ ball with a positive inside will grow by a radius of $\alpha$, while an $\ell_p$ ball with a negative inside will shrink by a radius of $\alpha$ (at least until the radius hits 0). Therefore, many model classes of the subsets of the $\ell_p$ balls --- including the class of all negative-inside balls and the class of balls with a radius that is a multiple of $\alpha$ --- all have $H=\Heff$ for any fixed $\alpha$.
    
    Additionally, since each line segment and $\ell_p$ arc maps to another line segment or $\ell_p$ arc, many piecewise-linear or piecewise $\ell_p$ arc classes will also have $H=\Heff$. For example, any concave linear intersection will be mapped to a shifted concave linear intersection, so the class of concave linear intersections satisfies $H=\Heff$.

\subsection{Classes with $\VC(H)>\VC(\Heff)>0$} \label{apdx:vc_decrease>0}
    
    \begin{figure}[t!]
        \centering
        \includegraphics[height=130pt]{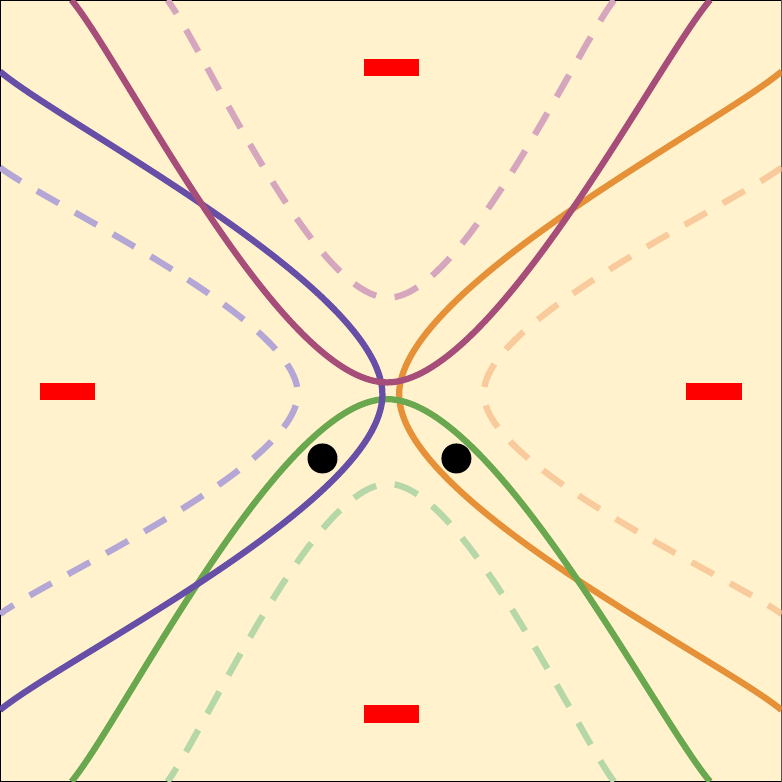}
        \caption{Example of VC decreasing from 2 to 1. Each color represents a different classifier (drawn together to illustrate shattering capacity), and the dashed curves represent the corresponding effective classifiers. Though the black points can be shattered by $H$, none of the $\heff$ overlap so $\VC(\Heff)=1$.}
        \label{fig:VC_2-1}
    \end{figure}

    There are a couple of mechanisms that can lead to a decrease in VC dimension. In the first mechanism, multiple different $h$ have the same $\heff$ leading to a smaller and less expressive $\Heff$. Two such cases are shown in Appendix \ref{apdx:VC_inf_to_0} to prove that the VC may decrease even from $\infty$ to 0. In a similar fashion, we can build an $H$ where only some (instead of all) of the $h \in H$ have the same $\heff$, which would decrease the VC dimension to a non-zero effective VC dimension. 

    In the second mechanism, the VC can change due to effective classifiers gaining or losing overlap from strategic movement. As the dual to Fig. \ref{fig:vc+apx} (Left) where the VC \emph{increases} from 1 to 2 due to this mechanism, Fig. \ref{fig:VC_2-1} presents an example where the VC \emph{decreases} from 2 to 1.
    Here, $H$ includes four overlapping classifiers with $\VC(H) = 2$. Yet, under strategic behavior, the effective classifiers cease to overlap, decreasing the shattering capacity to 1.

\subsection{Example of $\Heff$ with $\VC(\Heff)=\Theta(d \cdot(\VC(H))$} \label{apdx:vc_increase1->2}

    \paragraph{Claim:} \emph{ There exists a class $H$ with $\VC(\Heff)=\Theta(d \cdot(\VC(H))$.}

    \begin{proof}
        For simplification of notation, we build on the case shown in Fig. \ref{fig:vc+apx} (Left) using classifiers $h$ that classify some ball as positive and all other points as negative.
        Note that a similar proof can be built based on a class $H$ of point classifiers \citep{adversarial_PAC}, but we have decided to include the following construction since classifiers that classify only a single point as positive are quite unrealistic. Let set $S = \mathbb{Z}_2^d -\{0^d\} + \{-\delta^d \}$ be the set of $\mathbb{Z}_2^d$ lattice points with the point $\{0^d\}$ replaced by the point $\{-\delta^d \}$. Consider the case where $\alpha=0.75$ and $H = \{ \ball_{0.25}(x): x \in S \}$ is the class of radius-0.25 balls around the points in $S$. Because the radius of positive balls expand by $\alpha$ under strategic movement, we get that $\Heff = \{ \ball_{1}(x): x \in S \}$.
        Note that because none of the balls in $H$ overlap, $\VC(H)=1$.
        On the other hand, the balls in $\Heff$ do overlap, allowing $\Heff$ to shatter the set of $d$ one-hot points $E = \{e_i: i \in [1...d]\}$, where $e_i$ denotes the point with a 1 in the ith coordinate and 0's elsewhere. Note that for each $x \in S - \{-\delta^d \}$, the ball $\ball_{1}(x) \in \Heff$ classifies $E$ according to the sign pattern $x$. Additionally, the ball $\ball_1(-\delta^d)$ does not classify any point in $E$ as positive, so $\Heff$ covers all $2^d$ sign patterns of $E$. As such, $\VC(\Heff) \geq d$. However, since $\Heff \subset B = \{ \ball_1(x): x\in R^d\}$, we get that $\VC(\Heff) \leq \VC(B) = d+1$ \citep{Gomez2021L1_L2_Linf_ball_VC} giving $\VC(\Heff)=\Theta(d \cdot(VC(H))$.
    \end{proof}

\subsection{Example of Strategic Accuracy Exceeding Standard Accuracy}
\label{apdx:strat_acc>standard_acc}

    \paragraph{Claim:} \emph{ In the unrealizable setting, the maximum strategic accuracy can exceed the maximum standard accuracy.}

    \begin{proof}
        In the unrealizable setting, the maximum strategic accuracy can exceed the maximum standard accuracy when strategic movement helps correct the incorrect classification of positive points. For example, consider a basic setup where $\alpha=1$, $X= \{ (-1,0), (2,0), (-3,0)\}$, $Y=\{1,1,-1 \}$, and $H=\{h_1(x)= sign(x_1 +x_2), h_2(x)= sign(x_1 - x_2-1)\}$. In the standard case, both $h_1$ and $h_2$ correctly classify the second and third points, but not the first point, so the maximum accuracy is $\frac{2}{3}$. However, in the strategic setting, the first point is able to obtain a positive classification under $h_1$ by moving to $(-\frac{1}{3}, \frac{2}{3})$ (and the third point still cannot get a positive prediction), so the maximum strategic accuracy is 1.
    \end{proof}

\subsection{Practical Takeaways}
\label{apdx:practical_takeaways}
Given the challenge of optimization in strategic batch learning (even in simpler settings), our paper aims to first establish an understanding of the challenges inherent to non-linear strategic learning (in particular in relation to the linear case), with the hope that our results and conclusions can help guide the future design of learning algorithms, as well as set expectations for what is achievable and what is not. The difficulty of designing a principled algorithm for the general non-linear case is underscored by the fact that existing works rely on strong assumptions to enable optimization. For example, the original paper of Hardt et al. \cite{hardt2016strategic} provides an algorithm, but requires assumptions on the cost function that reduce the problem to a one-dimensional learning task, and their algorithm is essentially a line search. Levanon and Rosenfeld \cite{levanon2021strategic} differentiate through $\Delta$, but this relies strongly on this operation being a linear projection, which applies only to linear classifiers. Neither of the above naturally extend to the non-linear case. 

Towards the goal of practical learning in the general strategic setting, we provide here some examples of how our results can be used as practical takeaways for effective learning:

\begin{enumerate}
    \item \textbf{Learning via inversion:} One implication of our function-level analysis is that for any classifier $h$ with strategic accuracy $\mathrm{acc}_{strat}(h)$, the classifier $h'$ such that $h \mapsto \heff = h'$ has regular (i.e., non-strategic) accuracy $\mathrm{acc}(h') = \mathrm{acc}_{strat}(h)$. Thus, one approach to optimizing strategic accuracy is via reduction: (i) train $h' \in H'$ to maximize \emph{non-strategic} accuracy using any conventional approach for some choice of $H'$, and then (ii) apply the inverse function mapping (which may not be 1-to-1) to obtain a non-strategic classifier $h$, whose effective decision boundary is that of $h'$. This approach requires the ability to solve the `inverse' problem (which may not be 1-to-1) to find the effective classifier. We discuss cases where this would and wouldn’t work in Sec. \ref{sec:impossible_heff}.

    \item \textbf{Regularizing curvature:} A sufficient condition for a pre-image strategic $h'$ to exist for a given non-strategic $h$ is that the function mapping is 1-to-1. Rather than restrict learning to only invertible classes $H'$ a priori, an alternative approach is to work with general $H'$ but regularize against those $h' \in H'$ that do not have an inverse. Prop. \ref{prop:curvature_mapping} implies that one way to promote this is by encouraging  $h'$ that are smooth, since low-curvature classifiers are less prone to \emph{direct wipeout}, and therefore more likely to permit inversion.

    \item \textbf{Finding the interpolation threshold:} The interpolation threshold --- the point in which the number of model parameters (or model complexity more generally) attains minimal training error --- is key for learning in practice: it marks the extreme point of overfitting (in the classic underparametrized regime) and the beginning of benign overfitting (in the modern overparametrized regime, e.g., of deep neural nets). In non-strategic learning, this point is easily identified as that where the training error is zero. In contrast, our results show that in strategic learning, the minimal training error can be strictly positive. This makes it unclear when (and if) the threshold has been reached. Fortunately, our procedure for our approximation experiment in Sec. \ref{sec:exp_approximation} can be used generally to compute the minimal attainable strategic training error, independently of the chosen model class. This provides a tool that can be used in practice to measure interpolation.
\end{enumerate}

    In terms of model class selection, our results in Sec. \ref{sec:classes} offer insight into potential upsides and downsides to certain classes.
    One implication of our results is that strategic behavior can generally either increase or decrease the complexity of the chosen model class. However, Thm. \ref{thm:closed_scalin_VC} ensures that the VC dimension can only increase for any class that is closed to input scaling, which includes many common classes used in practice. Thus, a learner who chooses such classes is guaranteed that the potential expressivity of the learned classifier will not be deteriorated by strategic behavior. Additionally, 
    Prop. \ref{prop:neg_polytopes}, Thm. \ref{thm:polytope_VC}, and Thm. \ref{thm:piecewise_VC} all give upper bounds on the SVC of particular classes (e.g., ReLU neural networks). The implication is that a learner choosing to work with such $H$ is guaranteed that if $H$ is learnable, then the induced $\Heff$ is also learnable.
    Finally, Obs. \ref{obs:non_bijective} and Thm. \ref{thm:vc_inf->0} both point to the fact that several non-strategic classifiers $h$ can be mapped to the exact same effective classifier $\heff$. If a class $H$ includes many such cases, then it has significant redundancy, which can have negative implications on the process of choosing a good $h \in H$ via learning (e.g., in terms of generalization, optimization, or the effectiveness of proxy losses). Ideally, the learner should be cautious of classes in which this is prevalent, for example, classes that permit high curvature or small negative areas.

\section{Experimental details} \label{apdx:experimental_details}

\subsection{Experiment \#1: Expressivity}  \label{apdx:exp_expressivity}

    In Sec. \ref{sec:exp_expressivity}, we experimentally demonstrate how the strategic setting affects classifier expressivity. In this experiment, we first randomly sample a degree-$k$ polynomial $h$, then determine the set of points $S$  that make up the decision boundary of $\heff$ (i.e., are directly adjacent to it), and finally find the best approximate polynomial fit of $S$ \footnote{Because we are concerned specifically with the polynomial fit of $\heff$, we label the points in a fine grid G both regularly and then strategically, and use the strategic labels to get a polynomial fit. In this way, we do not directly compute the effective classifier (which would be difficult), but approximate it well enough to get the polynomial fit as well as visualize it.}. Because user features are often naturally bounded, we assume that $x \in \X = \R^2$ and that all $x$ are bounded by $|x|_\infty \leq 100$ (see Appendix \ref{apdx:bounded_input domain} for a discussion on how our theoretical results hold under bounded input domains). We report average values and standard error confidence intervals of 100 instances of the setup for each $\alpha \in \{2,4,8,16,24,32,40\}$ and $k \in [1,10]$. The instances were divided among 100 CPUs to speed up computation.

    To randomly sample polynomials of degree $k$, we leverage the fact that the number of points needed to uniquely determine a degree-$k$ polynomial over $\R^2$ is $k+2 \choose 2$. Therefore, for each instance, we randomly choose $k+2 \choose 2$ points and labels, and use an SVM polynomial fitter to find the best fitting degree-$k$ polynomial fit to these points. To ensure that the generated polynomial is indeed a full-degree polynomial, we construct a grid $G$ of points over $[-100,100]^2$, label each point $g \in G$ using $h$ to get labels $Y$, and verify that the SVM best-fitting degree-$k$ polynomial achieves near-perfect accuracy on ($G, Y$), while the best $(k-1)$-degree polynomial achieves low accuracy on ($G, Y$).

    To find the set of points $S$  that make up the decision boundary of $\heff$, we first strategically label each $g \in G$ by checking if $g$ can reach any point $g'$ that is classified as positive by $h$. To determine the points that make up the decision boundary of $\heff$, we select only the points $g \in G$ where $g$ and at least one of its direct neighbors are labeled differently by $\heff$.

    Finally, we again use an SVM polynomial fit to determine the best approximate polynomial degree of $\heff$. Because the $\heff$ of a polynomial is not necessarily a polynomial itself, we set $k'$ to be the lowest degree of the polynomial whose fit on $S$ passes a set tolerance threshold. Empirical tests showed that a tolerance of 0.9 was high enough to ensure a good fit on all of $G$, but not so high that the SVM fit algorithm needlessly increased the degree just to get a complete overfit to $S$.
    
\subsection{Experiment \#2: Approximation}  \label{apdx:exp_approximation}

    In Sec. \ref{sec:exp_approximation}, we experimentally demonstrate the effects  of non-universality on the maximum strategic accuracy. This experiment consists of (i) sampling synthetic data and (ii) calculating the maximum linear accuracy and strategic accuracy on each dataset instance. We report average values and standard error confidence intervals of 20 instances of the setup for each $\alpha \in \{0.5,1,2\}$ and $\mu \in [0,5]$. The instances were divided among 100 CPUs to speed up computation.

    For each instance, we generate synthetic $\R^2$ data by sampling points from class-conditional Gaussians $x \sim N (y\mu, 1)$, where $\mu$ is the separation between the centers of each class. Because we were unable to find a polynomial-time algorithm to calculate the maximum accuracy of $\Heff^{*}$, exactly 25 points were drawn from each class (denote the full dataset $\X$). Though this setup may not reflect all possible data distributions, it does represent how the strategic environment behaves under increasing data separability.

    To calculate the maximum linear accuracy in each instance, we first note that there exists an optimal linear classifier that runs through (or infinitesimally close to for negative points) exactly two dataset points. This is because for any optimal classifier $h$ that runs through exactly one dataset point $x \in \X$, $h$ may be rotated until it runs through another $x' \in \X$  without changing any of the labels. For any optimal classifier $h$ that runs through no $x \in \X$, $h$ can be shifted until it runs through one (or two collinear) $x \in \X$ and then rotated appropriately. Because all data points are random, we assume that no three points are collinear. Therefore, the maximum linear accuracy on $\X$ can be calculated by taking the maximum accuracy over the $ n \choose 2$ classifiers that run through each pair of $x,x' \in \X$ \footnote{Because points on the decision boundary are labeled as positive, we manually define that negative points used to define the linear classifier are accurately labeled to reflect the fact that the optimal classifier does not run through the point, but infinitely close to it.}.

    When calculating the maximum strategic accuracy, we note that the minimum size of a positive region in $\heff$ is the ball $\ball_\alpha$, so any strategic classifier can be represented by the intersection of multiple $\ball_\alpha$. We therefore calculate the set $S$ of all possible subsets $s \subseteq \X$ that any $\ball_\alpha$ classifies as positive, and find the intersection $\bigcap_{s \in S' \subseteq S}  \,s $ with the highest accuracy. To find all $s \in S$, we iterate through the circles centered at each point on a fine-grained grid over the range of the dataset (padded by $\alpha$ on all sides) and check which points $x\in\X$ each circle includes. To find the best intersection of these circles, we iterate through each of the subsets of $Z \subseteq \X_0$ \footnote{Subsets are traversed in size order with early stopping once the size of the subset is large enough that the accuracy will be worse than the current best accuracy, even if all positive points are correctly classified.} and then $s \in S$ to find the maximum number of positive points that can be classified as positive if we allow ourselves to mistakenly classify the negative points in $Z$.


\end{document}